\documentclass[letterpaper]{article} 
\usepackage{aaai2026}  
\usepackage{times}  
\usepackage{helvet}  
\usepackage{courier}  
\usepackage[hyphens]{url}  
\usepackage{graphicx} 
\urlstyle{rm} 
\usepackage{natbib}  
\usepackage{caption} 
\frenchspacing  
\setlength{\pdfpagewidth}{8.5in}  
\setlength{\pdfpageheight}{11in}  
%
\usepackage{algorithm}
\usepackage{algorithmic}

%
\usepackage{newfloat}
\usepackage{listings}
\DeclareCaptionStyle{ruled}{labelfont=normalfont,labelsep=colon,strut=off} 
\lstset{%
	basicstyle={\footnotesize\ttfamily},
	numbers=left,numberstyle=\footnotesize,xleftmargin=2em,
	aboveskip=0pt,belowskip=0pt,%
	showstringspaces=false,tabsize=2,breaklines=true}
\floatstyle{ruled}
\newfloat{listing}{tb}{lst}{}
\floatname{listing}{Listing}
%
\pdfinfo{
	/TemplateVersion (2026.1)
}

	\usepackage[bibliography=common]{apxproof-old}

	\newtheoremrep{example}{Example}
	\newtheoremrep{theorem}{Theorem}
	\newtheoremrep{proposition}{Proposition}
	\newtheoremrep{lemma}{Lemma}
	\newtheoremrep{claim}{Claim}
	\newtheoremrep{corollary}{Corollary}
	\newtheoremrep{definition}{Definition}
	\newtheoremrep{remark}{Remark}
	\newtheoremrep{observation}{Observation}
	
	\usepackage{amsfonts}
	\usepackage{xspace}
	\usepackage{pifont}
	\usepackage{color}
	\usepackage{amsmath, upgreek, stmaryrd}
	\usepackage{amssymb}
	\usepackage{xargs}
	\usepackage[pdftex,dvipsnames]{xcolor}
	\usepackage{graphicx}
	\usepackage{mathtools}
	\usepackage[capitalise]{cleveref}
	\usepackage{enumitem}
	\setlist{leftmargin=2.3ex}
	\usepackage{multirow}
	\usepackage{booktabs}

	\usepackage{mathdots}
	\usepackage{tikz}
	\usetikzlibrary{tikzmark}
	\usetikzlibrary{calc}
	\usetikzlibrary{positioning,shapes.geometric}
	\usetikzlibrary{patterns}
	
	
	\newcommand{\mi}[1]{\ensuremath{\mathit{#1}}}

	\newcommand{\Amc}{\ensuremath{\mathcal{A}}}
	\newcommand{\Emc}{\ensuremath{\mathcal{E}}}
	\newcommand{\Gmc}{\ensuremath{\mathcal{G}}}
	\newcommand{\Hmc}{\ensuremath{\mathcal{H}}}
	\newcommand{\Imc}{\ensuremath{\mathcal{I}}}
	\newcommand{\Jmc}{\ensuremath{\mathcal{J}}}
	\newcommand{\Kmc}{\ensuremath{\mathcal{K}}}
	\newcommand{\Mmc}{\ensuremath{\mathcal{M}}}
	\newcommand{\Omc}{\ensuremath{\mathcal{O}}}
	\newcommand{\Rmc}{\ensuremath{\mathcal{R}}}
	\newcommand{\Smc}{\ensuremath{\mathcal{S}}}
	\newcommand{\Tmc}{\ensuremath{\mathcal{T}}}
	\newcommand{\Vmc}{\ensuremath{\mathcal{V}}}

	\newcommand{\tbox}{\Tmc}
	\newcommand{\abox}{\Amc}
	\newcommand{\kb}{\Kmc}
	\newcommand{\cost}{\ensuremath{\omega}}

	\newcommand{\dllite}{\textup{DL-Lite}}
	\newcommand{\dllitebool}{\ensuremath{\dllite_\textup{bool}}}
	
	\newcommand{\dlliteboolh}{\ensuremath{\dllitebool^\Hmc}}
	\newcommand{\dlliter}{\ensuremath{\dllite_\Rmc}}
	\newcommand{\dllitecore}{\ensuremath{\dllite_\textup{core}}}
	\newcommand{\dllitecoreh}{\ensuremath{\dllitecore^\Hmc}}
	
	\newcommand{\ALC}{\ensuremath{\mathcal{ALC}}}
	\newcommand{\ALCO}{\ensuremath{\mathcal{ALCO}}}
	\newcommand{\ALCI}{\ensuremath{\mathcal{ALCI}}}
	\newcommand{\ALCIO}{\ensuremath{\mathcal{ALCIO}}}
	\newcommand{\ALCHI}{\ensuremath{\mathcal{ALCHI}}}
	\newcommand{\ALCHIO}{\ensuremath{\mathcal{ALCHIO}}}
	\newcommand{\EL}{\ensuremath{\mathcal{EL}}}
	\newcommand{\ELbot}{\ensuremath{\mathcal{EL}_\bot}}

	\newcommand{\NP}{\ensuremath{\textup{NP}}}
	\newcommand{\coNP}{\ensuremath{\textup{co}\NP}}
	\newcommand{\aczero}{\ensuremath{\textup{AC}^0}}
	\newcommand{\tczero}{\aczero}
	
	\def\np{\NP}
	\def\conp{\coNP}

	\def\deltaptwo{\ensuremath{\Delta^{p}_{2}}}
	
	\newcommand{\fo}{\text{FO}}
	
	\newcommand{\tsat}{\ensuremath{\text{3-\textsc{Sat}}}}
	\newcommand{\tcol}{\ensuremath{\text{3-\textsc{Col}}}}
	
	\newcommand{\sizeof}[1]{\ensuremath{{\left|#1\right|}}}
	\newcommand{\istyle}[1]{\ensuremath{\mathsf{#1}}}
	\newcommand{\cstyle}[1]{\ensuremath{\mathsf{#1}}}
	\newcommand{\rstyle}[1]{\ensuremath{\mathsf{#1}}}
	\newcommand{\type}{\ensuremath{\mathsf{tp}}}
	
	\newcommand{\WKB}{\K_\omega = ( \T,\A)_\omega}
	\newcommand{\wkb}{\K_\omega}
	
	\newcommand{\A}{\mathcal{A}}
	\newcommand{\I}{\mathcal{I}}
	\newcommand{\J}{\mathcal{J}}
	\newcommand{\T}{\mathcal{T}}
	\newcommand{\K}{\mathcal{K}}

	\newcommand{\NC}{\ensuremath{\mathsf{N_{C}}\xspace}}
	\newcommand{\NI}{\ensuremath{\mathsf{N_{I}}\xspace}}
	\newcommand{\NR}{\ensuremath{\mathsf{N_{R}}\xspace}}
	\newcommand{\NRpm}{\ensuremath{\mathsf{N_{R}^\pm}\xspace}}
	\newcommand{\existsmany}{\ensuremath{\exists^{> {k}}}}
	\newcommand{\individuals}{\ensuremath{\mathsf{Ind}}}
	\newcommand{\raretypes}{\ensuremath{\mathsf{RT}}}
	\newcommand{\precore}{\ensuremath{\mathsf{pc}}}
	\newcommand{\core}{\ensuremath{\mathsf{core}}}
	\newcommand{\optc}[1]{\mi{optc}(#1)}
	\newcommand{\vio}[2]{vio_{#1}(#2)}
	\newcommand{\sat}[2]{\models_{#1}^{#2}}
	\newcommand{\dec}[3]{#1QA$^{#2}_{#3}$\xspace}
	\newcommand{\BCS}{BCS\xspace}
	\newcommand{\tuplev}{\mathbf{v}_\Gamma}
	
	\newcommand{\bsem}[1]{$k$-cost bounded #1 semantics}
	\newcommand{\optsem}[1]{opt-cost #1 semantics}

	\newcommand{\Idag}{{\I^\dagger}}

	\newcommand{\Knew}{\K^\dagger}
	\newcommand{\Anew}{\A^\dagger}
	\newcommand{\Tnew}{\T^\dagger}
	
	\newcommand{\signature}{\mathsf{sig}}	
	
	\newcommand{\signatureof}[1][\tbox]{\signature(#1)}
	
	\newcommand{\axiom}[2]{\rolestyle{#1} \sqsubseteq \rolestyle{#2}}

	\newcommand{\axand}{\ensuremath{\axiom{A_1 \sqcap A_2}{A}}}
	
	\newcommand{\axexistsleft}{\ensuremath{\axiom{\existsrole{r.B}}{A}}}
	
	\newcommand{\axexistsright}{\ensuremath{\axiom{A}{\existsrole{r.B}}}}
	
	\newcommand{\axnotleft}{\ensuremath{\axiom{\lnot B}{A}}}
	\newcommand{\axnotright}{\ensuremath{\axiom{A}{\lnot B}}}

	\newcommand{\unfolding}[1][\kb]{\circ}
	\newcommand{\unfoldingof}[2][\kb]{\unfolding[#1]}
	
	\newcommand{\deltastar}{\Delta^*}
	
	\newcommand{\domain}[1]{\Delta^{#1}} 
	
	\newcommand{\rolestyle}[1]{\rstyle{#1}}
	
	\newcommand{\existsrole}[1]{\exists \rolestyle{#1}}

	\newcommand{\successor}[1][\kb]{\mathsf{succ}^{#1}}
	
	\newcommand{\successorof}[3][\kb]{\successor[#1]_\rolestyle{#3}(#2)}
	
	\newcommand{\inds}{\individuals}
	\newcommand{\indsof}[1]{\inds(#1)}
	
	\newcommand{\exexof}[1]{\exex}
	\newcommand{\exex}{\Delta^{\circ}}
	\newcommand{\interlace}{f^*}
	\newcommand{\interlaceof}[1]{\interlace(#1)}
	\newcommand{\interlacing}{\I'}
	\newcommand{\interlacingof}[1]{{#1}'}
	\newcommand{\interlacingofstar}[1]{{#1}^*}
	\newcommand{\interleavingof}[1]{\interlacing}
	\newcommand{\exextomod}{f}
	\newcommand{\exextomodof}[1]{\exextomod(#1)}
	
	\newcommand{\intertomod}{\sigma}

\setcounter{secnumdepth}{2} 

%


\title{Data Complexity of Querying Description Logic \\ Knowledge Bases under Cost-Based Semantics}
\author {
	Meghyn Bienvenu\textsuperscript{\rm 1},
	Quentin Mani\`ere\textsuperscript{\rm 2, 3}
}
\affiliations {
	\textsuperscript{\rm 1}Universit\'{e} de Bordeaux, CNRS, Bordeaux INP, LaBRI, UMR 5800, Talence, France\\
	\textsuperscript{\rm 2}Department of Computer Science, Leipzig University, Germany\\
	\textsuperscript{\rm 3}Center for Scalable Data Analytics and Artificial Intelligence (ScaDS.AI), Dresden/Leipzig, Germany\\
	meghyn.bienvenu@labri.fr, quentin.maniere@uni-leipzig.de
}

\newcommand{\citex}[2][]{(BBJ 2024\ifx&#1&%
	\else%
	, #1%
	\fi)\xspace}

\begin{document}

\maketitle

\begin{abstract}
	In this paper, we study the data complexity of querying inconsistent weighted description logic (DL) knowledge bases 
	under recently-introduced cost-based semantics. In a nutshell, the idea is to assign each interpretation
	a cost based upon the weights of the violated axioms and assertions, and certain and possible
	query answers are determined by considering all (resp.\ some) interpretations having optimal or bounded cost. 
	Whereas the initial study of cost-based semantics focused on DLs between $\mathcal{EL}_\bot$ and $\mathcal{ALCO}$,
	we consider DLs that may contain inverse roles and role inclusions, thus covering prominent DL-Lite dialects.  
	Our data complexity analysis goes significantly beyond existing results by sharpening several lower bounds
	and pinpointing the precise complexity of optimal-cost certain answer semantics (no non-trivial upper bound was known). 
	Moreover, while all existing results show the intractability of cost-based semantics, 
	our most challenging and surprising result establishes that if we consider $\dlliteboolh$ ontologies and a fixed cost bound, 
	certain answers for instance queries and possible answers for conjunctive queries
	can be computed using first-order rewriting and thus enjoy the lowest possible data complexity ($\tczero$). 
\end{abstract}


\section{Introduction}	
Ontology-mediated query answering (OMQA) has been extensively studied within the KR and database 
communities as a means of improving data access by exploiting semantic information provided by an ontology
\cite{DBLP:journals/jods/PoggiLCGLR08,DBLP:conf/rweb/BienvenuO15,DBLP:conf/ijcai/XiaoCKLPRZ18}. 
Ontologies are typically formulated in decidable fragments of first-order logic (FO),
with description logics (DLs) being a popular choice \cite{Baader_Horrocks_Lutz_Sattler_2017}. 
Given an ontology (or TBox in DL parlance) $\T$, a dataset (or ABox) $\A$, and a query $q(\vec{x})$,
the OMQA task boils down to finding the certain answers, i.e. tuples of constants $\vec{a}$ 
for which the instantiated query $q(\vec{a})$ is entailed from the knowledge base (KB) $(\T, \A)$. 
Observe that if the input KB is inconsistent,
every 
answer tuple is trivially a certain answer, so OMQA trivializes.

A prominent approach to tackling this issue is to adopt alternative 
inconsistency-tolerant semantics in order to be able to extract
meaningful information from inconsistent KBs, cf.\ \cite{Lele} and surveys \cite{biebou,DBLP:journals/ki/Bienvenu20}. 
Many of these semantics are based upon repairs, defined as inclusion-maximal consistent 
subsets of the ABox. For example, the AR semantics considers 
the query answers that hold in all repairs, while the brave semantics
returns those answers holding in at least one repair. 
Note that the line of work on repair-based semantics targets scenarios in which the 
TBox axioms are deemed fully reliable, so inconsistencies derive solely from 
errors in the ABox. However, in practice, it can be useful to allow for TBox axioms 
which typically hold but may admit rare exceptions. 
Such `soft' ontology axioms can be addressed qualitatively, using generalized notions of repair 
that have been proposed for existential rule ontologies \cite{eit}, or employing non-monotonic extensions of DLs
that support defeasible axioms cf.\ \cite{DBLP:journals/jair/BonattiLW09,DBLP:journals/ai/GiordanoGOP13,DBLP:journals/tocl/BritzCMMSV21}.
Another option is to adopt a quantitative approach, using the recently proposed cost-based semantics for DL KBs
\cite{BBJKR24}, henceforth referred to as \citex{BBJ} for succinctness.

In a nutshell, the idea is to annotate axioms and assertions with (possibly infinite) weights, 
which are used to assign a cost to each interpretation based upon the weights of the violated axioms and assertions 
(and taking into account also the number of violations of each TBox axiom). 
To query the KB, we may choose either to consider the set of interpretations achieving the optimal cost,
or we may fix a cost bound $k$ and consider all interpretations having cost at most $k$. 
We can then define the sets of certain and possible answers as those answers that hold respectively in all or some 
interpretation of optimal cost or bounded cost. As noted in \citex{BBJKR24}, the optimal-cost certain answer semantics generalizes both the 
classical certain answer semantics and the AR semantics based upon weighted ABox repairs. 
Increasing the cost bound $k$ beyond the optimal cost 
allows one to identify answers that are robust in the sense that they hold not only for the optimal-cost interpretations. 
Optimal- and bounded-cost possible answers generalize query satisfiability and can serve to compare 
candidate answers based upon their incompatibility with the KB. 

The computational complexity of querying inconsistent weighted KBs under cost-based semantics 
was investigated in \citex{BBJKR24}. Five central decision problems were considered: 
bounded-cost satisfiability (does there exist an interpretation with cost at most $k$?) plus 
query entailment under the four cost-based semantics. 
The complexity analysis was fairly comprehensive, considering both combined and data complexity, 
conjunctive and instance queries, and DLs ranging from the lightweight DL $\ELbot$ to 
the expressive DL $\mathcal{ALCO}$. One important question that was left open, however, was
the data complexity of optimal-cost certain semantics (arguably the most useful of the semantics),
for which no non-trivial upper bound was provided. Moreover, as the considered DLs allow neither inverse 
roles nor role inclusions, they do not yield any results for DLs of the DL-Lite family \cite{calvaneseetal:dllite}, which are the 
most commonly utilized in the context of OMQA.

The preceding considerations motivate us to embark on a more detailed data complexity analysis of cost-based semantics, 
considering various DL-Lite dialects and expressive DLs up to $\mathcal{ALCHIO}$. The results of our study are
summarized in Table~\ref{table:results}.
A first major contribution, detailed in Section \ref{upper}, is to provide a $\deltaptwo$ upper bound for the optimal-cost certain and possible semantics, 
matching an existing lower bound for $\ELbot$ and a new lower bound we show for \dllitecore. 
This result is obtained by using an intricate quotient construction to establish a small interpretation property, which crucially does not depend on the considered cost. 
In Section \ref{lower}, we strengthen a number of existing lower bounds for the bounded-cost semantics by showing 
that they hold even when for cost bound $k=1$, as well as providing some new lower bounds for \dllitecore. 
Finally, our most challenging and surprising technical result (presented in Section \ref{section:dl-lite}) 
is to show that if we consider $\dlliteboolh$ ontologies and a fixed cost bound, 
then certain answers for instance queries and possible answers for conjunctive queries
can be computed using first-order rewriting and thus enjoy the lowest possible data complexity ($\tczero$). 
Detailed proofs can be found in the appendix. 

\begin{table*}
	\centering
	\begin{tabular}{l@{\hspace{.0cm}}cccccc}
		& BCS$^k$, IQA$_p^{k}$, CQA$_p^k$ & IQA$_c^k$ & CQA$_c^k$ & BCS, IQA$_p$, CQA$_p$ & IQA$_c$, CQA$_c$ & IQA$_{p, c}^{opt}$, CQA$_{p, c}^{opt}$
		\\\midrule
		$\ELbot$ / $\ALC${$\Hmc\Imc$}$\Omc$
		& $\displaystyle{\mathop{\NP\ ^\text{$\ddagger$}}_{\text{Thm.\ \ref{theorem:upper-bound-alchio-cq-varying}, \ref{theorem:lower-bound-elbot-bcs-fixed-cost-1}}}}$
		& $\displaystyle{\mathop{\coNP\ ^\text{$\ddagger$}}_{\text{Thm.\ \ref{theorem:upper-bound-alchio-cq-varying}, \ref{theorem:lower-bound-el-certain-iq-fixed-cost-1}}}}$
		& $\displaystyle{\mathop{\coNP\ ^\text{$\ddagger$}}_{\text{Thm.\ \ref{theorem:upper-bound-alchio-cq-varying}, \ref{theorem:lower-bound-el-certain-iq-fixed-cost-1}}}}$
		& $\displaystyle{\mathop{\NP\ ^\text{$\dagger$}}_{\text{Thm.\ \ref{theorem:upper-bound-alchio-cq-varying}}}}$ 
		& $\displaystyle{\mathop{\coNP\ ^\text{$\dagger$}}_{\text{Thm.\ \ref{theorem:upper-bound-alchio-cq-varying}}}}$
		& $\displaystyle{\mathop{\deltaptwo\ ^\text{$\dagger$}}_{\text{Thm.\ \ref{theorem:upper-bound-alchio-cq-optimal-cost}}}}$
		\\[10pt]
		$\dllite_{\textup{core}}$ / $\dlliteboolh$
		& $\displaystyle{\mathop{\text{in }\tczero}_\text{Thm.\ \ref{theorem:upper-bound-dlliteboolh-cq-fixed}}}$
		& $\displaystyle{\mathop{\text{in }\tczero}_\text{Thm.\ \ref{theorem:upper-bound-dlliteboolh-cq-fixed}}}$
		& $\displaystyle{\mathop{\coNP\phantom{\ ^\text{$\ddagger$}}}_\text{Thm.\ \ref{theorem:upper-bound-alchio-cq-varying}, \ref{theorem:lower-bound-dllitepos-certain-cq-fixed-cost-1}}}$
		& $\displaystyle{\mathop{\NP\phantom{\ ^\text{$\dagger$}}}_\text{Thm.\ \ref{theorem:upper-bound-alchio-cq-varying}, \ref{theorem:lower-bound-dllitecore-bcs-varying-k}}}$ 
		& $\displaystyle{\mathop{\coNP\phantom{\ ^\text{$\dagger$}}}_\text{Thm.\ \ref{theorem:upper-bound-alchio-cq-varying}, \ref{theorem:lower-bound-dllitecore-bcs-varying-k}}}$
		& $\displaystyle{\mathop{\deltaptwo\phantom{\ ^\text{$\dagger$}}}_\text{Thm.\ \ref{theorem:upper-bound-alchio-cq-optimal-cost}, \ref{theorem:lower-bound-dllitecore-optcost}}}$
		\\\bottomrule
	\end{tabular}
	\caption{
		All results are completeness results, unless stated otherwise.
		$^\dagger$: lower bound from \protect\citex{BBJKR24}. 
		$^{\ddagger}$: lower bound for $k \geq 3$ from \protect\citex{BBJKR24}, improved to $k \geq 1$ in the present paper.
		For CQA, lower bounds already hold for connected acyclic BCQs. Results hold both for binary and unary encoding of weights, 
		except for $\displaystyle{\mathop{\deltaptwo}}$-hardness (only for binary encoding).
	}
	\label{table:results}
\end{table*}

\section{Preliminaries}
We recall the syntax \& semantics of description logics (DLs) and
refer readers to \cite{Baader_Horrocks_Lutz_Sattler_2017}  for further details. 

\subsubsection*{Description logic knowledge bases} 
We consider countably infinite sets 
\NC, \NR, and \NI\ of \emph{concept names}, \emph{role names}, and \emph{individual names}.
An \emph{inverse role} has the form $\rstyle{r}^-$, with $\rstyle{r} \in \NR$. 
A \emph{role} is either a role name or inverse role. We use $\NRpm= \NR \cup \{\rstyle{r}^- \mid \rstyle{r} \in \NR\}$ for the set of roles. 
If $\rstyle{r}=\rstyle{s}^-$ is an inverse role, then $\rstyle{r}^-$ denotes $\rstyle{s}$.

An \emph{\ALCIO\ concept} $\cstyle{C}$ is built according to the grammar
$
\cstyle{C},\cstyle{D} ::= \top \mid \bot \mid \cstyle{A} \mid \{ \istyle{a} \} \mid \neg \cstyle{C} \mid \cstyle{C} \sqcap \cstyle{D} \mid \exists \rstyle{r} . \cstyle{D}
$
where $\cstyle{A} \in \NC$, $\rstyle{r} \in \NRpm$,
and $\istyle{a} \in \NI$.  
A concept 
$\{\istyle{a}\}$ is called a \emph{nominal}. 
An \emph{\ALCI\ concept} is a nominal-free \ALCIO\ concept.
An \emph{\EL\ concept} is an \ALCI\ concept that uses neither negation, nor $\bot$, nor inverse roles
($\EL_\bot$ concepts may additionally use $\bot$). 

An \emph{\ALCHIO\ TBox} is a finite set of \emph{concept inclusions
	(CIs)} $\cstyle{C} \sqsubseteq \cstyle{D}$, where $\cstyle{C},\cstyle{D}$ are \ALCIO\ concepts, and
\emph{role inclusions (RIs)} $\rstyle{r} \sqsubseteq \rstyle{s}$, where $\rstyle{r},\rstyle{s} \in \NRpm$.
An \emph{\EL\ TBox} consists only of CIs between \EL\ concepts. 
An \emph{ABox} is a finite set of
\emph{concept assertions} $\cstyle{A}(\istyle{a})$ and \emph{role assertions} $\rstyle{r}(\istyle{a},\istyle{b})$
where $\cstyle{A} \in \NC$, $\rstyle{r} \in \NR$,
and $\istyle{a}, \istyle{b} \in \NI$.  
We use $\individuals(\Amc)$ for the set of individual names used
in \Amc, and $\NC(\tbox)$ (resp.\ $\NR(\tbox)$) for the set of concept (resp.\ role) names used in $\tbox$. An \emph{\ALCHIO\ knowledge base (KB)} takes the form
$\Kmc=(\Tmc,\Amc)$ with \Tmc\ an \ALCHIO\ TBox and \Amc\ an ABox.

We next introduce the syntax of some DLs of the DL-Lite family.
A \emph{basic concept} has the form $\cstyle{A}$ or $\exists \rstyle{r}$, with $\cstyle{A} \in \NC$ and $\rstyle{r} \in \NRpm$.  A
\dllitecoreh\ TBox is a finite set of CIs of the forms
$\cstyle{C} \sqsubseteq \cstyle{D}$ and $\cstyle{C} \sqcap \cstyle{D} \sqsubseteq \bot$, with $\cstyle{C},\cstyle{D}$
basic concepts, and RIs $\rstyle{r}\sqsubseteq \rstyle{s}$, with $\rstyle{r},\rstyle{s} \in \NRpm$. 
We drop superscript $\cdot^\Hmc$ if no role inclusions are admitted,
and replace $\cdot_{\textup{core}}$ by 
$\cdot_{\textup{bool}}$ to indicate that $\cstyle{C},\cstyle{D}$ may be
built from basic
concepts using $\neg$, $\sqcap$, $\sqcup$.

The semantics of DL KBs is defined as usual in terms of interpretations
$\Imc=(\Delta^\Imc,\cdot^\Imc)$ with $\Delta^\Imc$ the non-empty
\emph{domain} and $\cdot^\Imc$ the \emph{interpretation function}. 
An interpretation
satisfies a CI $\cstyle{C} \sqsubseteq \cstyle{D}$ if $\cstyle{C}^\Imc \subseteq \cstyle{D}^\Imc$ and
likewise for RIs. 
It satisfies an assertion $\cstyle{A}(\istyle{a})$ if $\istyle{a} \in \cstyle{A}^\Imc$
and $\rstyle{r}(\istyle{a},\istyle{b})$ if $(\istyle{a},\istyle{b}) \in \istyle{r}^\Imc$.
This notably requires ABox individuals to be interpreted as themselves, thus enforcing the \emph{standard names assumption} (SNA) for the ABox individuals. 
The interpretation $\Imc_\abox$ associated with an ABox $\abox$ has domain 
$\individuals(\Amc)$ and interprets concept and role names according to the assertions of $\abox$:
\begin{align*}
	\cstyle{A}^{\Imc_\abox} := ~ & \{ \istyle{a} \mid \cstyle{A}(\istyle{a}) \in \abox \}
	&
	\rstyle{r}^{\Imc_\abox} := ~ & \{ (\istyle{a}, \istyle{b}) \mid \rstyle{r}(\istyle{a}, \istyle{b}) \in \abox \}.
\end{align*}
Given any interpretation $\Imc$, we use
$\Imc|_\Delta$ to denote the restriction of \Imc\ to a subdomain $\Delta \subseteq \Delta^\Imc$.

\subsubsection*{Queries} First-order queries are given by formulas in first-order logic with equality. Since we wish to query DL KBs, we consider queries whose 
relational atoms can be either concept atoms $\cstyle{A}(t)$ or role atoms $\rstyle{r}(t, t')$, where $\cstyle{A} \in \NC$, $\rstyle{r}\in \NR$, and $t, t'$ are \emph{terms} (variables or individuals). 
We mostly focus on \emph{conjunctive queries (CQs)} which have the form $\exists \vec{y} \psi$, where $\psi$ is a conjunction of concept and role atoms,
and $\vec{y}$ a tuple of variables from $\psi$. 
\emph{Instances queries (IQs)} are CQs
with a single atom. We also consider \emph{acyclic} and \emph{connected} CQs, meaning that their associated undirected graph (containing an edge $\{t,t'\}$ for each role atom $\rstyle{r}(t, t')$) has these properties. 
A Boolean CQ (BCQ) is a CQ that has no free variables. 
%
We write $\Imc \models q$ to indicate that an interpretation 
$\Imc$ satisfies a BCQ $q$.

\subsubsection*{Complexity classes} 	For any syntactic object $O$ such as a TBox, ABox, or query, we use 
$|O|$ to denote the \emph{size} of $O$, meaning the encoding of $O$
as a word over a suitable alphabet.
Our complexity results concern the well-known complexity classes
$\np$ 
and $\conp$ 
as well as 
$\deltaptwo$
(deterministic polynomial time with access to an $\np$ oracle) and 
$\tczero$. 
We omit the formal definition of $\tczero$, which is based upon circuits, as to understand our results, it suffices to know that it is in $\tczero$ (in data complexity) to test whether a Boolean first-order query is satisfied in a finite interpretation. 

\section{Cost-Based Semantics for Weighted KBs}\label{costsem}
In this section, we recall the definition of cost-based semantics from \citex{BBJKR24}
and introduce the associated reasoning tasks. 
We also recall and prove some basic properties used in later sections. 

Throughout the paper, we work with weighted KBs, whose assertions and axioms are annotated with weights: 
\begin{definition}
	A \emph{weighted knowledge base} (WKB) $\WKB$ consists of 
	a knowledge base  $(\tbox, \abox)$
	and a weight function $\omega: \T \cup \A \mapsto \mathbb{N}_{> 0} \cup \{ \infty \}$. 
	We can similarly define \emph{weighted TBoxes} ($\tbox_\omega$) and \emph{weighted ABoxes} ($\abox_\omega$). 
\end{definition}

Intuitively, these weights can be viewed as the penalties incurred for violating assertions and
axioms: those
having cost $1$ are the least reliable, while 
those assigned maximal weight $\infty$ should definitely be satisfied.

Interpretations will then be assigned costs based upon the sets of violations 
of the TBox axioms and ABox assertions. Note that differently from prior work, 
we also define violations of role inclusions, given by pairs of domain elements. 

\begin{definition}\label{defvio1} 
	Given an interpretation $\I$,  the \emph{set of violations of a concept inclusion $\cstyle{B} \sqsubseteq  \cstyle{C}$} in $\I$ 
	is $\vio{\cstyle{B} \sqsubseteq \cstyle{C}}{\I} 
	= \cstyle{B}^\I \setminus \cstyle{C}^{\I}$, 
	the \emph{set of violations of a role inclusion $\rstyle{r} \sqsubseteq \rstyle{s}$} in $\I$ 
	is $\vio{\rstyle{r} \sqsubseteq \rstyle{s}}{\I} = \rstyle{r}^\I \setminus \rstyle{s}^\I$, and
	the \emph{violations of an ABox $\A$} in $\I$ are  
	$\vio{\A}{\I} = \{ \alpha \in \A \mid \I \not \models \alpha \}$. 
\end{definition}

\begin{definition}
	Let $\WKB$ be a WKB. The \emph{cost of an interpretation} $\I$ w.r.t.\ $\wkb$ is defined by:         
	$$\cost(\I)  = \sum_{\tau \in \T}\, \omega(\tau) |\vio{\tau}{\I}| + \sum_{\alpha \in \vio{\A}{\I}}\!\!\omega(\alpha)$$
	The \emph{optimal cost} of $\wkb$ is $\optc{\wkb}  = \mathsf{min}_{\I} (\cost(\I))$. 
	A WKB $\wkb$ is \emph{$k$-satisfiable} if $\cost(\I)\leq k$ for some interpretation~$\I$.
\end{definition}
We recall next the four cost-based semantics proposed in \citex{BBJKR24},
which 
depend
on whether one considers interpretations whose cost is less than a provided bound,
or the interpretations having optimal cost, and whether the query is required to hold in all or at least one such interpretation.
\begin{definition}
	Let $q$ be a BCQ, $\WKB$ a WKB, and $k$ an integer.
	We say that $q$ is entailed by $\wkb$ under 
	\begin{itemize}
		\item \emph{\bsem{certain}}, written $\wkb \sat{c}{k} q$, if $\I \models q$ for every interpretation $\I$ with $\cost(\I) \leq k$;
		
		\item \emph{\bsem{possible}}, written $\wkb \sat{p}{k} q$, if $\I \models q$ for some interpretation $\I$ with $\cost(\I) \leq k$;
		
		\item \emph{\optsem{certain}}, written $\wkb \sat{c}{opt} q$, if $\I \models q$ for every interpretation $\I$ with $\cost(\I) = \optc{\wkb}$;	
		
		\item \emph{\optsem{possible}}, written $\wkb \sat{p}{opt} q$, if $\I \models q$ for some interpretation $\I$ with $\cost(\I) = \optc{\wkb}$.
	\end{itemize}
	These semantics extend to non-Boolean CQs in the expected way, e.g.\ the opt-cost certain answers to a CQ $q(\vec{x})$ w.r.t.\ $(\T,\A)_\omega$
	are the tuples $\vec{a}$ from $\individuals(\Amc)$ s.t.\ $\wkb \sat{c}{opt} q(\vec{a})$. 
\end{definition}

If the underlying KB is satisfiable, then 
the certain and possible optimal-cost semantics 
coincide with query entailment and query satisfiability
(or with classical notions of certain and possible answers, in the case of non-Boolean queries). 
These semantics are thus intended to be used when the underlying KB is inconsistent. 
The opt-cost certain answers 
identifies those answers
that hold in the interpretations deemed most likely
and have been shown to generalize previously considered weight-based repair semantics \citex{BBJKR24}.
By considering values of $k$ beyond $\optc{\wkb}$, 
we can use the \bsem{certain} to identify `robust' answers which hold not only in the optimal-cost interpretations but also in those with close-to-optimal cost. By contrast, the opt-cost and $k$-cost bounded possible answers can serve to rank candidate answers based upon their degree of incompatibility with the WKB.

We now formalize the decision problems for cost-based semantics investigated in this paper, which differ depending on which cost bound is used and whether it is given as input:
\begin{itemize}
	\item \emph{Bounded cost satisfiability}  (\BCS) takes as input a WKB $\WKB$ and an integer $k$ and decides 
	whether there exists an interpretation $\I$ with $\cost(\I) \leq k$.
	\item \emph{$k$-cost satisfiability}  (BCS$^k$) takes as input a WKB $\WKB$ 
	and decides whether there exists an interpretation $\I$ with $\cost(\I) \leq k$.
	\item \emph{Bounded-cost certain (resp.\ possible) BCQ entailment} (\dec{C}{}{c} / \dec{C}{}{p}) takes as input a WKB $\WKB$, a BCQ $q$ and an integer $k$ and decides whether $\wkb \sat{c}{k} q$ (resp.\  $\wkb \sat{p}{k} q$).
	\item \emph{$k$-cost certain (resp.\ possible) BCQ entailment} (\dec{C}{k}{c} / \dec{C}{k}{p}) takes as input a WKB $\WKB$ and a BCQ $q$ and decides whether $\wkb \sat{c}{k} q$ (resp.\  $\wkb \sat{p}{k} q$).
	\item \emph{Optimal-cost certain (resp.\ possible) BCQ entailment} (\dec{C}{opt}{c} / \dec{C}{opt}{p}) 
	takes as input a WKB $\WKB$ and a BCQ $q$ and decides if $\wkb \sat{c}{opt} q$ (resp.\ $\wkb \sat{p}{opt} q$). 
\end{itemize}
We will also consider the restrictions of the BCQ entailment problems to the case of instance queries, 
denoted by \dec{I}{}{c}, \dec{I}{}{p}, \dec{I}{k}{c}, \dec{I}{k}{p}, \dec{I}{opt}{c} and \dec{I}{opt}{p} respectively.

We shall study the \emph{data complexity} of the preceding reasoning tasks. 
For the fixed-cost and optimal-cost decision problems, data complexity is measured with respect to the size of the input weighted ABox, 
while the size of the weighted TBox and query (if present)
are treated as constants. For the bounded-cost problems, we measure complexity w.r.t.\ the weighted ABox and the input integer.  
Both the ABox weights and the input integer (if present) are assumed to be encoded in binary. 

We conclude the section with some easy lemmas, which establish useful reductions between the decision problems. 

\begin{lemmarep}
	\label{lemma:bcs-to-possible-iq}
	BCS$^k$ for $\dllitecore$ (resp.\ $\ELbot$) reduces to IQA$_p^k$ for $\dllitecore$ (resp.\ $\ELbot$).
	In particular, the same holds for BCS and IQA$_p$.
\end{lemmarep}

\begin{proof}
		Trivial: $\kb_\omega$ is $k$-satisfiable iff $\kb_\omega \models_p^k \cstyle{A}(\istyle{a})$ where $\cstyle{A}$ is a fresh concept name and $\istyle{a}$ is any individual name.
\end{proof}

\begin{lemmarep}
	\label{lemma:possible-iq-to-certain-iq}
	IQA$^k_p$ for $\dllitecore$ (resp.\ $\ELbot$) reduces to the complement of IQA$_c^{k+1}$ for $\dllitecore$ (resp.\ $\ELbot$).
	In particular, IQA$_p$ reduces to the complement of IQA$_c$.
\end{lemmarep}


\begin{proof}
	We first treat the case of $\dllitecore$.		
	Let $(\tbox, \abox)_\omega$ be our $\dllitecore$ WKB, $q$ our Boolean IQ and $k$ our cost.
	Our reduction differs depending on the shape of $q$.
	We treat the three cases of (i) $q = \cstyle{A}(\istyle{a})$, (ii) $q = \exists y\ \cstyle{A}(y)$ and (iii) $q = \rstyle{r}(\istyle{a}, \istyle{b})$, and argue that the other shapes are equivalent to an IQ of shape (i) or (ii) up to adding a few axioms in $\tbox$.
	Indeed, $q = \exists y\ \rstyle{r}(\istyle{a}, y)$ is equivalent to $q' := \cstyle{B}(\istyle{a})$ where $\cstyle{B}$ is a fresh concept name and we add the axiom $\cstyle{B} \equiv \exists \rstyle{r}$ to $\tbox$, with infinite weight.
	Similarly, $q = \exists y\ \rstyle{r}(y, \istyle{a})$ is equivalent to $q' := \cstyle{B}(\istyle{a})$ where $\cstyle{B}$ is a fresh concept name and we add the axiom $\cstyle{B} \equiv \exists \rstyle{r}^-$ to $\tbox$, with infinite weight.
	Finally, $q = \exists y_1\ \exists y_2\ \rstyle{r}(y_1, y_2)$ is equivalent to $q' := \exists y\ \cstyle{B}(y)$ where $\cstyle{B}$ is a fresh concept name and we add the axiom $\cstyle{B} \equiv \exists \rstyle{r}$ to $\tbox$, with infinite weight.
	
	Now, for shape (i), that is $q = \cstyle{A}(\istyle{a})$, the reduction proceeds as follows.		
	We extend $\tbox$ into $\tbox'$ by adding axiom $\cstyle{A} \sqcap \bar{\cstyle{A}} \sqsubseteq \bot$ and $\abox$ into $\abox'$ by adding assertions $\cstyle{A}(\istyle{a})$ and $\cstyle{\bar A}(\istyle{a})$.
	The cost function $\omega$ is extended into $\omega'$ by assigning weight $\infty$ to fresh axiom $\cstyle{A} \sqcap \bar{\cstyle{A}} \sqsubseteq \bot$ and to assertion $\cstyle{A}(a)$, and weight $1$ to $\cstyle{\bar A}(\istyle{a})$.\smallskip
	
	\noindent\textbf{Claim}: $(\tbox, \abox)_\omega \models_p^k \cstyle{A}(\istyle{a})$ iff $(\tbox', \abox')_{\omega'} \not\models_c^{k+1} \cstyle{\bar A}(\istyle{a})$.\smallskip
	
	\noindent $(\Rightarrow).$
	Assume there exists an interpretation $\Imc$ that interprets the symbols in  $(\tbox, \abox)$ 
	in such a way that  $\omega(\Imc) \leq k$ and $\Imc \models \cstyle{A}(\istyle{a})$.
	Observe that if we define an interpretation $\Jmc$ that is the same as $\Imc$ except that the concept $\cstyle{\bar A}$ is interpreted as ${\cstyle{\bar A}}^\Jmc := \emptyset$, we obtain $\omega'(\Jmc) \leq k+1$. 
	Moreover, since $\Jmc$ does not satisfy $\cstyle{\bar A}(\istyle{a})$,  we get $(\tbox', \abox')_{\omega'} \not\models_c^{k+1} \cstyle{\bar A}(\istyle{a})$.
	
	\noindent $(\Leftarrow).$
	Assume there exists an interpretation $\Imc$ 
	such that $\omega'(\Imc) \leq k+1$ and $\Imc \not\models \cstyle{\bar A}(\istyle{a})$.
	Both $\cstyle{A} \sqcap \bar{\cstyle{A}} \sqsubseteq \bot$ and $\cstyle{A}(\istyle{a})$ have infinite weight w.r.t.\ $\omega'$, and thus are always satisfied in $\Imc$, as $\omega'(\Imc)$ is finite.
	In particular $\Imc \models \cstyle{A}(a)$ and $\Imc \not\models \cstyle{\bar A}(\istyle{a})$.
	It follows that $\omega'(\Imc) \geq \omega(\Imc) + \omega'(\cstyle{\bar A}(\istyle{a}))$, thus $k + 1 - 1 \geq \omega(\Imc)$, which yields $k \geq \omega(\Imc)$.
	Therefore $(\tbox, \abox)_\omega \models_p^k \cstyle{A}(\istyle{a})$. \medskip
	
	We next consider the case of queries having shape (ii), where  $q = \exists y\ \cstyle{A}(y)$.
	We extend $\tbox$ into $\tbox'$, $\abox$ into $\abox'$ and $\omega$ into $\omega'$ by adding the following axioms and assertion, with corresponding weights being indicated below:
	\[
	\begin{array}{c@{\qquad\qquad}c@{\qquad\qquad}c@{\qquad\qquad}c@{\qquad\qquad}c}
		\cstyle{A}_0 \sqsubseteq \exists \rstyle{s} 
		&
		\exists \rstyle{s}^- \sqsubseteq \cstyle{A}
		&
		\cstyle{A} \sqsubseteq \exists \rstyle{r} 
		&
		\exists \rstyle{r}^- \sqsubseteq {\cstyle{\bar A}}
		& 
		\cstyle{A}_0(\istyle{a})
		\smallskip\\
		\infty
		& \infty
		& \infty
		& 1
		& \infty
	\end{array}
	\]
	where $\cstyle{\bar A}$, $\rstyle{r}$, $\rstyle{s}$ and $\cstyle{a}$ are fresh.\smallskip
	
	\noindent\textbf{Claim}: $(\tbox, \abox)_\omega \models_p^k \exists y\ \cstyle{A}(y)$ iff $(\tbox', \abox')_{\omega'} \not\models_c^{k+1} \exists y\ \cstyle{\bar A}(y)$.\smallskip
	
	\noindent $(\Rightarrow).$
	Assume there exists an interpretation $\Imc$ which interprets the symbols in $(\tbox, \abox)_\omega$ in such a way that $\omega(\Imc) \leq k$ and $\cstyle{A}^\Imc \neq \emptyset$.
	We transform $\Imc$ into $\Jmc$ by interpreting the fresh predicates $\cstyle{\bar A}$, $\rstyle{r}$ and $\rstyle{s}$ as follows:
	\begin{align*}
		\cstyle{\bar A}^\Jmc :=~  & \emptyset
		&
		\rstyle{r}^\Jmc := ~& \cstyle{A}^\Imc \times \{ \istyle{a} \}
		&
		\rstyle{s}^\Jmc := ~& \{ \istyle{a}\} \times \cstyle{A}^\Imc
	\end{align*}
	It is readily verified that $\omega'(\Imc) \leq k+1$ (the extra cost coming from $\exists \rstyle{r}^- \sqsubseteq \cstyle{\bar A}$ being violated at $\istyle{a}$).
	Furthermore, as $\cstyle{\bar A}^\Jmc := \emptyset$, we obtain $(\tbox', \abox')_{\omega'} \not\models_c^{k+1} \exists y\ \cstyle{\bar A}(y)$.
	
	\noindent $(\Leftarrow).$
	Assume there exists an interpretation $\Imc$ 
	such that $\omega'(\Imc) \leq k+1$ and $\cstyle{\bar A}^\Imc = \emptyset$.
	By the three fresh axioms added to $\tbox'$ and the fresh assertion that have infinite cost, we can derive that $\cstyle{A}^\Imc \neq \emptyset$ and that $\exists \rstyle{r}^- \sqsubseteq \cstyle{\bar A}$ is violated at least once in $\Imc$.
	Observing that the cost of $\Imc$ w.r.t.\ the WKB $(\tbox, \abox)_\omega$ is at most $k$, we obtain $(\tbox, \abox)_\omega \models_p^k \exists y\ \cstyle{A}(y)$. \medskip
	
	For shape (iii), that is $q = \rstyle{r}(\istyle{a}, \istyle{b})$, we extend $\tbox$ into $\tbox'$, $\abox$ into $\abox'$ and $\omega$ into $\omega'$ by adding the following axioms and assertion, with their corresponding weights being indicated below:
	\[
	\begin{array}{c@{\qquad\qquad}c@{\qquad\qquad}c}
		\cstyle{\bar A} \sqcap \exists \rstyle{r} \sqsubseteq \bot 
		& \rstyle{r}(\istyle{a}, \istyle{b}) 
		& \cstyle{\bar A}(\istyle{a})
		\smallskip \\
		\infty
		& \infty
		& 1
	\end{array}
	\]
	We omit the proof of the following claim, which is essentially the same as the one used for shape (i):\smallskip
	
	\noindent\textbf{Claim}:
	$(\tbox, \abox)_\omega \models_p^k \rstyle{r}(\istyle{a}, \istyle{b})$ iff $(\tbox', \abox')_{\omega'} \not\models_c^{k+1} \cstyle{\bar A}(\istyle{a})$. \medskip
	
	
	Let us now turn to the case of $\ELbot$.
	We can argue as before that it is sufficient to focus on the three same cases.
	Note that we need to change the argument for IQs with shape $q = \exists y\ \rstyle{r}(y, \istyle{a})$, as we can no longer rely on inverse roles in the TBox.
	However, this lack of inverse roles in the ontology also means that the above $q$ is actually equivalent to $q' = \exists y_1\ \exists y_2\ \rstyle{r}(y_1, y_2)$, which is turn equivalent to an IQ of shape (ii) as previously seen.
	
	It is clear that we can reuse the same construction as before to handle queries of shapes (i) and (iii).
	For shape (ii), we simply replace the use of inverse roles by qualified existential restrictions. 
	The added axioms in $\tbox$ become instead:
	\[
	\begin{array}{c@{\qquad\qquad}c@{\qquad\qquad}c}
		\cstyle{A}_0 \sqsubseteq \exists \rstyle{s} .\cstyle{A}
		&
		\cstyle{A} \sqsubseteq \exists \rstyle{r} . \cstyle{A}_1
		&
		\cstyle{A}_1 \sqsubseteq {\cstyle{\bar A}}
		\smallskip \\
		\infty
		& \infty
		& 1 
	\end{array} \qedhere
	\] 
\end{proof}

\begin{lemmarep}
	\label{remark:k-to-k+1}
	For every $k \geq 0$, IQA$^k_p$ for $\dllitecore$ (resp.\ $\ELbot$) reduces to IQA$_p^{k+1}$ for $\dllitecore$ (resp.\ $\ELbot$) WKBs.
\end{lemmarep}

\begin{proof}
	Simply add an assertion $\cstyle{A}(\istyle{a})$ with weight $1$ and an axiom $\cstyle{A} \sqsubseteq \bot$ with weight $\infty$, where $\cstyle{A}$ and $\istyle{a}$ are fresh.
\end{proof}

\section{An Upper Bound for the General Case}\label{upper}

The aim of this section is to establish the next two theorems:
\begin{theoremrep}
	\label{theorem:upper-bound-alchio-cq-varying}
	CQA$_p$ (resp.\ CQA$_c$) for $\ALCHIO$ is in $\NP$ (resp.\ in $\coNP$).
\end{theoremrep}
\begin{proof}
	An algorithm for CQA$_p$ (resp.\ CQA$_c$): (i) non-deterministically guesses an interpretation $\Imc$ whose size is bounded as in Lemma~\ref{lemma:brutal-fitration} (resp.\ Lemma~\ref{lemma:new-prop-8}), (ii) verifies whether $\Imc \models q$ (resp. $\Imc \not\models q$) and whether $\omega(\Imc) \leq k$; and, (iii) accepts iff both tests are successful.
	Note that both tests in step (ii) can be carried in deterministic polynomial time, so that the overall procedure yields an $\NP$ (resp.\ $\coNP$) complexity upper bound.
	Soundness of such an algorithm is trivial, while its completeness is guaranteed by Lemma~\ref{lemma:brutal-fitration} (resp.\ Lemma~\ref{lemma:new-prop-8}).
\end{proof}

\begin{theoremrep}
	\label{theorem:upper-bound-alchio-cq-optimal-cost}
	CQA$_p^{opt}$ and CQA$_c^{opt}$ for $\ALCHIO$ are in $\deltaptwo$.
\end{theoremrep}

\begin{proof}
		Notice that if the optimal cost is infinite, then the answer is trivially `true' (resp.\ `false') for CQA$_p$ (resp.\ CQA$_c$); and this case can easily be detected using an $\NP$ oracle (see \emph{e.g.}\ \cite{DBLP:journals/jar/OrtizCE08}) asking for the (standard) satisfiability of the sub-KB consisting of all assertions and axioms with infinite weight. 
		Otherwise, the optimal cost is finite, and we observe that it is bounded from above by an exponential w.r.t.\ data complexity.
		Indeed, starting from $\Imc$ with finite cost $k$ and $q$ a tautology, we apply Lemma~\ref{lemma:brutal-fitration} and obtain an interpretation $\Jmc$ whose size is bounded by a polynomial $p$ in $\sizeof{\abox}$.
		Notice that $\Jmc$ does not violate any assertion or axiom that has infinite cost since $\omega(\Jmc) \leq k$.
		Let $W := \max_{\tau \in \kb, \omega(\tau) < \infty}(\omega(\tau))$ be the maximal finite cost assigned by $\omega$.
		Even if $\Jmc$ violates all finite-cost assertions and axioms in every possible way, then $\omega(\Jmc) \leq W (\sizeof{\Delta^\Jmc} \sizeof{\abox} + \sizeof{\Delta^\Jmc}^2 \sizeof{\tbox})$, which is at most an exponential w.r.t.\ data complexity (if the weights on $\abox$ were encoded in unary, then it would only be a polynomial).
		Performing a binary search to identify the optimal cost 
		thus only requires a polynomial number of calls to Theorem~\ref{theorem:upper-bound-alchio-cq-varying} (given again a tautology) seen as an $\NP$-oracle.
		Once $\optc{\wkb}$ 
		is identified, we use Theorem~\ref{theorem:upper-bound-alchio-cq-varying} (this time with the query $q$ of interest!) with input 
		$\optc{\wkb}$ 
		as one last $\NP$-oracle (resp.\ $\coNP$-oracle) to decide CQA$_p^{opt}$ (resp.\ CQA$_c^{opt}$).
\end{proof}

Starting from an interpretation $\Imc$ that satisfies (or does not satisfy) the query $q$ of interest, the main technical ingredient is the construction of another interpretation $\Jmc$ that behaves as $\Imc$ w.r.t.\ $q$, whose cost is at most the cost of $\Imc$, and whose domain has a size polynomially bounded by the size of the input ABox $\abox$.
This is fairly easy if query satisfaction is to be preserved, that is, for CQA$_p$: one can brutally collapse elements together according to their type using  
the well-known filtration technique \cite{Baader_Horrocks_Lutz_Sattler_2017}.
\begin{lemmarep}
\label{lemma:brutal-fitration}
Let $\kb = (\tbox , \abox)_\omega$ be an $\ALCHIO$ WKB,
$k$ an integer, 
and $q$ a BCQ.
If there exists an interpretation $\Imc$ such that
$\omega(\Imc) \leq k$ and $\Imc \models q$, then there is an interpretation $\Jmc$ such that
$\omega(\Jmc) \leq k$ and $\Jmc \models q$, and whose domain $\Delta^\Jmc$ has cardinality 
bounded polynomially in $\sizeof{\abox}$, with $\sizeof{\tbox}$ and $\sizeof{q}$ treated as constants, and independently from $k$.
\end{lemmarep}
\begin{proof}
	\newcommand{\ftype}{\type}
	\newcommand{\Cmc}{\mathcal{C}}
	Let $\kb = (\tbox , \abox)_\omega$ be an $\ALCHIO$ WKB,	$k$ an integer, 
	and $q$ a BCQ.
	Assume we have an interpretation $\Imc$ with cost 
	$\omega(\Imc) \leq k$ such that $\Imc \models q$.
	Consider the set of concepts occurring in the TBox $\tbox$ and close it under negation and sub-concepts.
	We denote $\Cmc$ the obtained set of concepts, whose size is clearly bounded as $\sizeof{\Cmc} \leq 2\sizeof{\tbox}^2$.
	The type $\ftype_\Imc(d )$ of an element $d  \in \Delta^\Imc$ is the unique subset of $\Cmc$ such that, for every $C \in \ftype_\Imc(d )$, we have $d \in C^\Imc$ iff $C \in \ftype_\Imc(d )$.
	Note that this notion of type slightly differs from the one used in Section~\ref{section:dl-lite} and will only be used within the present proof.
	Let $W := \{ w_{\ftype_\Imc(d)} \mid d \in \Delta^\Imc \setminus \individuals \}$.
	We define the following mapping:
	\[
	\begin{array}{rcl}
		\rho : \Delta^\Imc & \rightarrow & \individuals(\kb) \cup W
		\\
		d  & \mapsto & \left\{ \begin{array}{ll}
			d & \text{if } d  \in \individuals(\kb)
			\\
			w_{\ftype_\Imc(d )} & \text{otherwise}
		\end{array}\right.
	\end{array}
	\]
	where $\individuals(\kb)$ refers to the individuals occurring in the KB.
	We denote $\Jmc := \rho(\Imc)$ the image of interpretation $\Imc$ via $\rho$.
	Note that $\Jmc$ has the desired size $\sizeof{\individuals(\kb)} + \sizeof{W} \leq \sizeof{\abox} + \sizeof{\tbox} + 2^{2\sizeof{\tbox}^2}$. 
	Since $q$ is a BCQ, it is preserved under homomorphisms, and thus we have $\Jmc \models q$ from $\Imc \models q$.
	It remains to verify that $\omega(\Jmc) \leq k$.
	
	To this end, we first verify, as is standard with filtration, that: 
	\begin{center}
	$(\star)$ \quad for every $d \in \Delta^\Imc$, we have $\type_\Jmc(\rho(d)) = \type_\Imc(d)$
	\end{center}
	We proceed by structural induction and show that for every $d \in \Delta^\Imc$, we have $\cstyle{C} \in \type_\Jmc(\rho(d))$ iff $\cstyle{C} \in \type_\Imc(d)$.
	\begin{itemize}
		\item $\cstyle{C} \in \NC$. Let $d \in \Delta^\Imc$. If $\cstyle{C} \in \type_\Imc(d)$, then we have by definition of $\cstyle{C}^\Jmc$ that $\cstyle{C} \in \type_\Jmc(\rho(d))$ as desired. Conversely, if $\cstyle{C} \in \type_\Jmc(\rho(d))$, then by definition of $\cstyle{C}^\Jmc$ there exists $d' \in \Delta^\Imc$ such that $\cstyle{C} \in \type_\Imc(d')$ and $\rho(d) = \rho(d')$.
		By definition of $\rho$, we get $\type_\Imc(d) = \type_\Imc(d')$ and thus $\cstyle{C} \in \type_\Imc(d)$ as desired.
		\item $\cstyle{C} = \top$. By definition we have $\cstyle{C}^\Imc = \Delta^\Imc$ and $\cstyle{C}^\Jmc = \Delta^\Jmc$, thus $\cstyle{C} \in \type_\Jmc(\rho(d))$ iff $\cstyle{C} \in \type_\Imc(d)$ trivially holds.
		\item $\cstyle{C} = \bot$. By definition we have $\cstyle{C}^\Imc = \emptyset$ and $\cstyle{C}^\Jmc = \emptyset$, thus $\cstyle{C} \in \type_\Jmc(\rho(d))$ iff $\cstyle{C} \in \type_\Imc(d)$ trivially holds.
		\item $\cstyle{C} = \{ \istyle{a} \}$. By definition we have $\cstyle{C}^\Imc = \istyle{a}$ and $\cstyle{C}^\Jmc = \istyle{a}$. Notice that $\rho(\istyle{a}) = \istyle{a}$, thus $\cstyle{C} \in \type_\Jmc(\rho(d))$ if $\cstyle{C} \in \type_\Imc(d)$. Conversely, by definition of $\rho$, the only pre-image of $\istyle{a}$ by $\rho$ is $\istyle{a}$, thus $\cstyle{C} \in \type_\Jmc(\rho(d))$ only if $\cstyle{C} \in \type_\Imc(d)$.
		\item $\cstyle{C} = \neg \cstyle{B}$. Let $d \in \Delta^\Imc$. By the induction hypothesis, and using the fact that types are closed under taking subconcepts, we obtain that $\cstyle{B} \in \type_\Jmc(\rho(d))$ iff $\cstyle{B} \in \type_\Imc(d)$. The semantics of concept negation gives $\cstyle{C}^\Imc = \Delta^\Imc \setminus \cstyle{B}^\Imc$ and $\cstyle{C}^\Jmc = \Delta^\Jmc \setminus \cstyle{B}^\Jmc$, thus immediately $\cstyle{C} \in \type_\Jmc(\rho(d))$ iff $\cstyle{C} \in \type_\Imc(d)$.
		\item $\cstyle{C} = \cstyle{B}_1 \sqcap \cstyle{B}_2$. The argument is similar to the one used for $\cstyle{C} = \neg \cstyle{B}$.
		\item $\cstyle{C} = \exists \rstyle{r}.\cstyle{B}$. Let $d \in \Delta^\Imc$. Assume first that $\cstyle{C} \in \type_\Imc(d)$, that is, there exists $e \in \cstyle{B}^\Imc$ such that $(d, e) \in \rstyle{r}^\Imc$.
		Then, by definition of $\rstyle{r}^\Jmc$, we have $(\rho(d), \rho(e)) \in \rstyle{r}^\Jmc$. Furthermore, by the induction hypothesis and using the fact that types are closed under subconcepts, we obtain $\cstyle{B} \in \type_\Jmc(\rho(e))$, thus $\rho(d) \in (\exists \rstyle{r}.\cstyle{B})^\Jmc$, that is, $\cstyle{C} \in \type_\Jmc(\rho(d))$ as desired.
		Conversely, assume that $\cstyle{C} \in \type_\Jmc(\rho(d))$, that is there exists $e \in \cstyle{B}^\Jmc$ such that $(\rho(d), e) \in \rstyle{r}^\Jmc$ and $e \in \cstyle{B}^\Jmc$.
		By definition of $\rstyle{r}^\Jmc$, there exists $(d', e') \in \rstyle{r}^\Imc$ such that $\rho(d') = \rho(d)$ and $\rho(e') = e$.
		By the induction hypothesis and using the fact that types are closed under subconcepts, we get $\cstyle{B} \in \type_\Imc(e')$.
		Joint with $(d', e') \in \rstyle{r}^\Imc$, this gives $\cstyle{C} \in \type_\Imc(d')$.
		Now using $\rho(d') = \rho(d)$, we obtain that $\type_\Imc(d) = \type_\Imc(d')$ and thus $\cstyle{C} \in \type_\Imc(d)$ as desired.
	\end{itemize}
	
	We now come back to verifying $\omega(\Jmc) \leq k$, and prove that $\omega(\Jmc) \leq \omega(\Imc)$, which suffices since the cost of $\Imc$ is assumed to be at most $k$.
	By $(\star)$, every concept assertion violated in $\Jmc$ already is in $\Imc$.
	Similarly, every concept inclusion violated in $\Jmc$ by an individual is already violated in $\Imc$ by the same individual.
	Still using $(\star)$, a violation of a concept inclusion in $\Jmc$ on an element $w_t$ is already violated by all non-individual elements of $\Imc$ whose type is $t$ (note that by definition of $W$, there exists at least one).
	Since $\rho$ is the identity on individuals, every role assertion violated in $\Jmc$ already is in $\Imc$.
	It remains to treat the case of the violation at $(d, e) \in \rstyle{r}^\Jmc \setminus \rstyle{s}^\Jmc$ of a role inclusion $\rstyle{r} \sqsubseteq \rstyle{s}$.
	By definition of $\rstyle{r}^\Jmc$, we have a pair $(d', e') \in \istyle{r}^\Imc$ such that $\rho(d') = d$ and $\rho(e') = e$.
	By definition of $\istyle{s}^\Jmc$, there is no pair $(d'', e'') \in \rstyle{s}^\Imc$ such that $\rho(d'') = d$ and $\rho(e'') = e$ (as otherwise $(d, e) \in \rstyle{s}^\Jmc$).
	Therefore the pair $(d', e')$ already violates $\rstyle{r} \sqsubseteq \rstyle{s}$ in $\Imc$.
	Unfolding the definition of $\rho$, it is readily verified that the produced pair $(\rstyle{d}', e')$ is different for every different violation of $\rstyle{r} \sqsubseteq \rstyle{s}$.
	We proved that $\omega(\Jmc) \leq \omega(\Imc)$ as desired. 
\end{proof}
It becomes more challenging if query non-satisfaction is the property to preserve, that is, to address CQA$_c$.
In particular, the solution adopted in \citex[Proposition~8]{BBJKR24} for $\ALCO$ WKBs yields a polynomial bound that depends on the bounded cost $k$.
This makes their technique adequate when the cost is fixed, that is, for CQA$_c^k$, or if the encoding of $k$ is given in unary, but otherwise does not provide a polynomial upper bound w.r.t.\ data complexity.
We deeply rework the approach, not only to support $\ALCHIO$ WKBs, but also to obtain a cost-independent 
bound 
as follows.
\begin{lemma}
	\label{lemma:new-prop-8}
	Let $\kb = (\tbox , \abox)_\omega$ be an $\ALCHIO$ WKB,
	$k$ an integer,
	and $q$ a BCQ.
		If there exists an interpretation $\Imc$ such that $\omega(\Imc) \leq k$ and $\Imc \not\models q$, then there is an interpretation $\Jmc$ such that $\omega(\Jmc) \leq k$ and $\Jmc \not\models q$, and whose domain $\Delta^\Jmc$ has cardinality that is bounded polynomially in $\sizeof{\abox}$, with $\sizeof{\tbox}$ and $\sizeof{q}$ treated as constants, and independently from $k$.
\end{lemma}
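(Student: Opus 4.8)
The plan is to exploit the \emph{locality} of conjunctive queries: any match of $q$ uses at most $\sizeof{q}$ domain elements and sends each connected component of $q$ onto a connected substructure of that size. Hence if I manage to build an interpretation $\Jmc$ in which \emph{every} connected set of at most $n:=\sizeof{q}$ elements admits a homomorphism into $\Imc$, then composing with a hypothetical match $q\to\Jmc$ would yield a match $q\to\Imc$, contradicting $\Imc\not\models q$; so such a $\Jmc$ automatically satisfies $\Jmc\not\models q$. The naive type-collapse of Lemma~\ref{lemma:brutal-fitration} fails precisely because this local embeddability is destroyed: identifying the target of one role edge with the source of another glues two edges into a path that $q$ may traverse but $\Imc$ does not contain. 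So instead of quotienting $\Imc$ wholesale I would keep its ``named core'' intact and carefully rebuild its anonymous part.

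Concretely, I would first add to $\tbox$ the vacuous inclusions $\cstyle{A}\sqsubseteq\cstyle{A}$ and $\rstyle{r}\sqsubseteq\rstyle{r}$ for every symbol of $q$, with infinite weight — this affects neither costs nor the problem and guarantees that the type of an element (the subset of the closure $\mathcal{C}$ of the concepts of $\tbox$ under subconcepts and negation realised at that element, as in the proof of Lemma~\ref{lemma:brutal-fitration}) records everything $q$ can observe. I keep the ABox individuals $\individuals(\abox)$ (and the nominals of $\tbox$, which name individuals) together with all role facts among them, so this named core is literally a substructure of $\Imc$. Below each individual $a$ I then grow a private tree of depth strictly less than $n$: to each node, starting from $a$, I attach, for every existential or \emph{inverse}-existential concept in its $\Imc$-type not already witnessed by its parent edge, one fresh copy of the corresponding $\Imc$-neighbour, carrying \emph{all} role atoms holding between the two originals in $\Imc$ (in both directions, including those induced by role inclusions). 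Copying the full pairwise neighbourhood lets the parent edge discharge the child's inverse-existential demands, so each node needs at most $2\sizeof{\mathcal{C}}$ children, which together with depth $<n$ bounds each private tree by a constant $c(\sizeof{\tbox},\sizeof{q})$. At depth exactly $n$ I stop growing private trees and redirect every remaining existential demand into one \emph{shared} structure of size $2^{O(\sizeof{\tbox}^2)}$ — a canonical model of $\tbox$ with one representative per consistent type, which satisfies $\tbox$ outright. The total domain size is then at most $\sizeof{\individuals(\abox)}\cdot c(\sizeof{\tbox},\sizeof{q})+2^{O(\sizeof{\tbox}^2)}$, polynomial (indeed linear) in $\sizeof{\abox}$ and independent of $k$.

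It remains to check the two invariants. For the cost, an induction on depth shows every copied node keeps the exact $\Imc$-type of its original (no existential is lost since each is re-witnessed; none is gained since every neighbour of a copy is a faithful copy of a real neighbour with only real role atoms), and the shared part violates nothing; hence for each axiom $\tau$ the set $\vio{\tau}{\Jmc}$ injects into $\vio{\tau}{\Imc}$, and likewise for assertions, so $\omega(\Jmc)\le\omega(\Imc)\le k$. For the query, I would prove by induction on $\sizeof{S}$ that every connected $S\subseteq\Delta^\Jmc$ with $\sizeof{S}\le n$ maps homomorphically into $\Imc$: inside the named core this is immediate, inside a private tree one un-copies, distinct private trees meet only inside the named core (which un-copies to itself, consistently), and the shared part lies at depth $n$, hence beyond the radius of any $S$ of size $\le n$ that also reaches the core — so no path-gluing occurs. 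This yields $\Jmc\not\models q$.

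The hard part will be making the rebuilding of the anonymous part rigorous in the full generality of $\ALCHIO$. Inverse roles and role hierarchies mean that an element's type depends on its incoming edges and that role inclusions propagate facts across the structure, so ``tree below each individual'' is only tree-\emph{like}: one must verify both that the copying procedure terminates with bounded branching and that local embeddability into $\Imc$ survives the induced edges. The truly delicate point is that $\Imc$ need not be a model of $\tbox$ — its anonymous part may violate finite-weight CIs, and although there are only finitely many such violations their number is not bounded in terms of the data — so a copied node's type can be $\tbox$-inconsistent and cannot be matched into the canonical part; the shared deep structure must therefore be simultaneously small, locally embeddable into $\Imc$, and (essentially) a model of $\tbox$, which forces one to confine and re-route these defects without inflating the domain beyond the claimed polynomial or the cost beyond $k$. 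Nominals, by contrast, are comparatively benign, since those occurring in $\tbox$ denote individuals and fold into the named core.
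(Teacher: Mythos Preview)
Your plan has two genuine gaps, and the ``truly delicate point'' you flag at the end is in fact the whole lemma.

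\textbf{Cost preservation fails under copying.} In your private trees you create, for each individual, \emph{fresh copies} of anonymous $\Imc$-neighbours. If an anonymous element $e\in\Delta^\Imc$ violates some CI $\tau$, then \emph{every} copy of $e$ also violates $\tau$ (you yourself argue that copies keep the exact $\Imc$-type of their originals). But $e$ may be copied many times --- once for each individual it neighbours, and again at deeper levels of different branches --- so $|\vio{\tau}{\Jmc}|$ can vastly exceed $|\vio{\tau}{\Imc}|$. Your claimed injection $\vio{\tau}{\Jmc}\hookrightarrow\vio{\tau}{\Imc}$ is a map in the wrong direction: the obvious map sends each copy back to its original, which is many-to-one, not injective. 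Putting the (at most $k$) violating anonymous elements into the named core would restore the injection but make the bound depend on $k$, which is precisely what the lemma forbids. The same issue arises for role inclusions: a single violating pair in $\Imc$ can be copied along many private trees.

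\textbf{The shared structure cannot be a canonical model of $\tbox$.} Such a model (if it even exists; $\tbox$ may be unsatisfiable) need not embed into $\Imc$, so your locality argument fails for a connected $n$-set lying entirely in the shared part; indeed if $\tbox\models q$ then every model of $\tbox$ satisfies $q$ while $\Imc$, which is not a model, does not. Moreover, redirecting the existential demands of a depth-$(n{-}1)$ node whose $\Imc$-type is $\tbox$-inconsistent into a $\tbox$-model will in general alter that node's type (you may gain or lose some $\exists\rstyle{r}.\cstyle{C}$), breaking the type-preservation induction one step earlier than you expect.

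For comparison, the paper does \emph{not} attempt a direct tree-with-shared-tail construction. It first normalises so that CI violations become membership in fresh concept names $\cstyle{A}_\tau$, then adapts an interlacing/quotient construction parameterised by a chosen subset $\Vmc\subseteq\tbox$ of axioms whose violation loci are placed into the protected domain $\Delta^*$ (never duplicated). This yields a $\Jmc_\Vmc$ of size polynomial in $|\abox|+|\vio{\Vmc}{\Imc}|$ with $\vio{\tau}{\Jmc_\Vmc}\subseteq\vio{\tau}{\Imc}$ for $\tau\in\Vmc$ and $\Jmc_\Vmc\not\models q$. The $k$-independence then comes from an \emph{iterative} argument: start with $\Vmc_0=\emptyset$; if $\omega(\Jmc_{\Vmc_i})>k$ some axiom $\tau\notin\Vmc_i$ must have strictly more violations in $\Jmc_{\Vmc_i}$ than in $\Imc$, so $|\vio{\tau}{\Imc}|<|\Delta^{\Jmc_{\Vmc_i}}|^2$ is itself polynomially bounded, and one sets $\Vmc_{i+1}=\Vmc_i\cup\{\tau\}$. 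This terminates in at most $|\tbox|$ steps and never lets $k$ enter the bound. The point you are missing is precisely this indirection: rather than bounding all violations at once, one bounds them axiom-by-axiom using the \emph{previous} small model as a yardstick.
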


With Lemmas~\ref{lemma:brutal-fitration} and \ref{lemma:new-prop-8}, we obtain the $\NP$ and $\coNP$ upper bounds in Theorem~\ref{theorem:upper-bound-alchio-cq-varying} with standard guess-and-check procedures.
For Theorem~\ref{theorem:upper-bound-alchio-cq-optimal-cost}, the $\deltaptwo$ algorithms proceed similarly but first identify the optimal cost via a binary search, using an exponential bound on the optimal cost (whenever finite).

Now, to prove Lemma~\ref{lemma:new-prop-8}, we rely on the adaptation of a quotient construction from \cite[Theorem~8]{Maniere} defined to answer \emph{counting conjunctive queries} over $\ALCHI$ KBs.
This construction was already reused by \citex{BBJKR24} for $\ALCO$ WKBs, and it is not too difficult to handle inverse roles and role inclusions by sticking closer to the original version.
From the starting interpretation $\Imc$, our adaptation differs from theirs as it also takes as a parameter a subset $\Vmc \subseteq \tbox$ of the considered TBox $\tbox$.
In the constructed interpretation, we violate axioms of $\Vmc$ exactly as in the original interpretation $\Imc$.
Intuitively, one can think of these axioms in $\Vmc$ as so expensive to violate that one cannot do better than in $\Imc$, while potential violations of axioms from $\tbox \setminus \Vmc$ can be handled in a more systematic and structured manner.
Violations of ABox assertions are easier to control and are preserved exactly as in the original interpretation $\Imc$.
For an interpretation $\Jmc$, we denote $vio_\Vmc(\Jmc) := \bigcup_{\tau \in \Vmc} vio_\tau(\Jmc)$.
Adapting the quotient technique yields the following:
\begin{lemmarep}
	\label{lemma:main-quotient}
	Let $\kb = (\tbox, \abox)$ be an $\ALCHIO$ KB and $q$ a BCQ.
	Let $\Vmc \subseteq \tbox$ be a subset of $\tbox$ and $\Imc$ an interpretation such that $\Imc \not\models q$.
	There exists a polynomial $p$ independent of $\abox$, and an interpretation $\Jmc$ satisfying the following:
	\smallskip
	
	\noindent
	\begin{minipage}{.215\textwidth}
		1. $\Jmc \not\models q$;
		\smallskip
		
		2. $vio_\abox(\Jmc) = vio_\abox(\Imc)$;
	\end{minipage}%
	\begin{minipage}{.3\textwidth}
		3. $\forall \tau \in \Vmc, vio_\tau(\Jmc) \subseteq vio_\tau(\Imc)$;
		\smallskip
		
		4. $\sizeof{\Delta^\Jmc} \leq p(\sizeof{\abox} + \sizeof{vio_\Vmc(\Imc)})$.
	\end{minipage}
\end{lemmarep}
\begin{toappendix}	
	
	\mbox{ }\medskip
	
	Before proceeding to the proof of Lemma \ref{lemma:main-quotient}, we briefly recall some standard terminology and notations that we will need. 
	 An interpretation \Imc\ is a \emph{model} of a KB $\Kmc=(\Tmc,\Amc)$
	if it satisfies all inclusions in $\Tmc$ and assertions in $\Amc$.
	We say that a query
	$q$ is \emph{entailed} from a KB $\Kmc$, written $\Kmc \models q$, if 
	$\Imc \models q$
	for every model $\Imc$ of $\Kmc$.

	\paragraph{A partial normal form.}
	As the constructions in \cite{Maniere} work under the usual semantics of models (not tolerating any violations) of KBs in normal form,
	our first step will be to replace the initial $\ALCHIO$ KB $\K$ with an $\ALCHIO$ KB in normal form that is able to keep track of violations of $\kb$ while being an actual model.
	To do so, we rely on a minor adaptation of the normalization for $\ALCO$ WKBs proposed in \cite{longversion-costbased}.
	Their construction removes all concept inclusions with finite weights and introduces new concept
	names into the TBox to be able to keep track of violations of those concept inclusions; they also prune the initial ABox to keep only those 
	assertions that are satisfied in the interpretation $\I$. 
	It is not hard to verify that inverse roles do not affect their translation into normal form.
	However, we cannot adopt their trick to keep track of violations of role inclusions as $\ALCHIO$ does not support expressive enough role inclusions, \emph{e.g.}\ of the shape $\rstyle{r} \sqcap \lnot \rstyle{s} \sqsubseteq \rstyle{t}$ where $\rstyle{r}, \rstyle{s}, \rstyle{t} \in \NRpm$.
	To circumvent this, we simply distinguish, in the original KB $\kb := (\tbox \cup \tbox_\Hmc, \abox)$, between the TBox $\tbox$ that only contains CIs of $\ALCIO$ concepts, and the remaining part $\tbox_\Hmc$ of the TBox that contains all the RIs.
		
	We now recall the normalization procedure proposed in \cite{longversion-costbased}, phrased with inverse roles.
	Note that we drop their finite/infinite-cost-based distinction between parts of the WKB as we start from a KB without weights and simply work on violations.
	
	We recall that a standard $\ALCIO$ KB is in normal form if all of its TBox axioms are in one of the following forms:
	\begin{align*}
		\axiom{\top}{D} \qquad \axiom{A}{D} \qquad \axiom{\{a\}}{A}  \qquad 
		\axand
		\\
		\axexistsleft
		\qquad
		\axexistsright
		\qquad
		\axnotleft
		\qquad
		\axnotright
	\end{align*}
	where $\cstyle{A}, \cstyle{A}_1, \cstyle{A}_2, \cstyle{B} \in \NC$, $\cstyle{D} \in \NC \cup \{\{\istyle{a}\} \mid \istyle{a} \in \NI\}$, and $\rstyle{r} \in \NRpm$. 
	It is well known that an arbitrary $\ALCIO$ TBox can be transformed (possibly through introduction of new concept names)
	into an $\ALCIO$ TBox in normal form. As the transformation may possibly introduce new concept names, 
	the new TBox may not be equivalent to the original one, but it will be a conservative extension, 
	which is sufficient for our purposes.  
	
We proceed as follows to convert the $\ALCIO$ part $(\tbox, \abox)$ of the original KB $\kb = (\tbox \cup \tbox_\Hmc, \abox)$ into a KB in normal form:	
	\begin{enumerate}
		\item For every 
		$\tau = \cstyle{C} \sqsubseteq \cstyle{D} \in \T$, 
		let $\cstyle{A}_\tau$ be a fresh concept name (i.e. occurring neither in $\K$ nor $q$), and set $$\T_\tau= \{\cstyle{A}_\tau \sqsubseteq \cstyle{C} \sqcap \neg \cstyle{D}, \cstyle{C} \sqcap \neg \cstyle{D} \sqsubseteq \cstyle{A}_\tau\}$$
		
		\item Apply the normalization procedure to the TBox 
		$$\T^{vio}  = \bigcup_{\tau \in \T} \T_\tau$$ 
		and let $\Tnew$ denote the resulting TBox. We can assume w.l.o.g.\ that all concept names in $\signatureof[\Tnew] \setminus \signatureof[\T^{vio} ]$
		occur neither in $q$ nor $\K$. 
	\end{enumerate} 
	We set $\Knew = \langle \Tnew, \Anew \rangle$, where $$\Anew = \{\alpha \in \A \mid \I \models \alpha\} \cup \{A_\top(\istyle{a}) \mid \istyle{a} \in \indsof{\A}\}$$
	with $\cstyle{A}_\top$ a fresh concept, not in $\Tnew$ nor $q$.
	Note that the assertions $\istyle{A}_\top(\istyle{a})$ simply serve to ensure that no individuals from $\K$
	are lost during the translation (which will be convenient for the later constructions and proofs).

	The following lemma summarizes the properties of $\Knew$.  Items (ii)-(iv) basically correspond to 
	$\Tnew$ being a conservative extension of $\T^{vio} $, while (v) is a direct consequence 
	of (iii) and the definition of $\T^{vio} $. 
	
	\begin{lemma}\label{newkb}
		The KB $\Knew$ satisfies the following:
		\begin{enumerate}[label=(\roman*),leftmargin=25pt]
			\item $\inds(\K) = \inds(\Knew)$
			\item 
			$\signatureof[\T] \subseteq \signatureof[\T^{vio} ] \subseteq \signatureof[\Tnew]$;
			\item 
			every model of $\Tnew$ is a model of $\T^{vio} $;
			\item
			for every model $\I_1$ of $\T^{vio} $, there exists a model $\I_2$ of $\Tnew$ 
			with $\Delta^{\I_1}=\Delta^{\I_2} $ such that 
			$\cdot^{\I_2}$ and $\cdot^{\I_1}$ coincide on all concept and role names except those in $\signatureof[\Tnew] \setminus \signatureof[\T^{vio} ]$;
			\item for every $\tau \in \T$ and every model $\I'$ of $\Knew$,
			$\vio{\tau}{\I'} = A_{\tau}^{\I'}$.
		\end{enumerate}
	\end{lemma}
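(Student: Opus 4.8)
\textbf{Plan for proving Lemma~\ref{newkb}.}
The statement asserts five properties of the constructed KB $\Knew = \langle \Tnew, \Anew \rangle$. I will prove them roughly in the stated order, since later items build on earlier ones.

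First, item~(i) is immediate from the construction: the only assertions in $\Anew$ that mention individuals are the pruned assertions $\{\alpha \in \A \mid \I \models \alpha\}$, which use only individuals from $\inds(\A) = \inds(\K)$, plus the assertions $A_\top(\istyle{a})$ for every $\istyle{a} \in \inds(\A)$. The latter guarantee $\inds(\A) \subseteq \inds(\Anew)$, while the former two sets contribute nothing new, hence $\inds(\K) = \inds(\Knew)$. Item~(ii) is a syntactic bookkeeping claim: by construction $\T^{vio} = \bigcup_{\tau \in \T} \T_\tau$ where each $\T_\tau$ contains the axioms $\cstyle{A}_\tau \sqsubseteq \cstyle{C} \sqcap \neg \cstyle{D}$ and $\cstyle{C} \sqcap \neg \cstyle{D} \sqsubseteq \cstyle{A}_\tau$ for $\tau = \cstyle{C} \sqsubseteq \cstyle{D}$. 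Every concept name occurring in $\cstyle{C}$ or $\cstyle{D}$ (hence every name of $\signatureof[\T]$) still occurs in $\T^{vio}$, giving the first inclusion; the normalization procedure that turns $\T^{vio}$ into $\Tnew$ only adds fresh names and never deletes old ones, giving the second.

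The heart of the proof is items~(iii) and~(iv), which together say that $\Tnew$ is a conservative extension of $\T^{vio}$ in the sense that the two have exactly the same models modulo the auxiliary names in $\signatureof[\Tnew] \setminus \signatureof[\T^{vio}]$. For~(iii), I would trace through the normalization rules one by one and check that each rewriting step preserves satisfaction: whenever a complex axiom is replaced by a set of simpler axioms using a fresh name, any interpretation satisfying the new axioms satisfies the old one (the fresh name is forced to denote the relevant subconcept). Composing over all steps, every model of $\Tnew$ is a model of $\T^{vio}$. For~(iv), given a model $\I_1$ of $\T^{vio}$, I would build $\I_2$ by keeping $\Delta^{\I_1}$ and all interpretations of the original symbols unchanged, and extending the interpretation of each freshly introduced concept name to denote exactly the subconcept it was introduced to abbreviate (processing the normalization steps in the order they were applied, so each fresh name's defining subconcept is already interpreted when we reach it). A straightforward induction on the normalization steps shows the resulting $\I_2$ satisfies $\Tnew$ while agreeing with $\I_1$ off the new names. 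This is the step I expect to be the main obstacle, since it requires being careful about the exact normalization procedure borrowed from \cite{longversion-costbased} (phrased now with inverse roles) and verifying that introducing inverse roles into the picture does not break any rewriting rule --- but as noted in the surrounding text, the normalization rules are insensitive to the role direction, so this is routine albeit tedious.

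Finally, item~(v) follows from~(iii) together with the definition of $\T^{vio}$. Let $\tau = \cstyle{C} \sqsubseteq \cstyle{D} \in \T$ and let $\I'$ be a model of $\Knew$. By~(iii), $\I'$ is a model of $\T^{vio}$, hence in particular of $\T_\tau$, so $\I'$ satisfies both $\cstyle{A}_\tau \sqsubseteq \cstyle{C} \sqcap \neg \cstyle{D}$ and $\cstyle{C} \sqcap \neg \cstyle{D} \sqsubseteq \cstyle{A}_\tau$, which gives $\cstyle{A}_\tau^{\I'} = (\cstyle{C} \sqcap \neg \cstyle{D})^{\I'} = \cstyle{C}^{\I'} \setminus \cstyle{D}^{\I'}$. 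By Definition~\ref{defvio1}, the latter set is precisely $\vio{\tau}{\I'}$, so $\vio{\tau}{\I'} = \cstyle{A}_\tau^{\I'}$ as claimed. (Here I use that the normalization leaves the names $\cstyle{A}_\tau$ untouched --- they belong to $\signatureof[\T^{vio}]$ --- so $\cstyle{A}_\tau$ still has the same meaning in $\Tnew$.) This closes all five items.
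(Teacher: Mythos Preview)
Your proposal is correct and matches the paper's (very terse) treatment: the paper does not give a detailed proof of this lemma but simply observes, just before stating it, that items (ii)--(iv) amount to $\Tnew$ being a conservative extension of $\T^{vio}$, and that (v) is a direct consequence of (iii) together with the definition of $\T^{vio}$ --- exactly the structure you spell out.

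One small point worth tightening in your argument for (i): you only discuss individuals arising from $\Anew$, but since we are in $\ALCIO$, individuals may also occur as nominals in the TBox. To fully justify $\inds(\K) = \inds(\Knew)$ you should note that $\T^{vio}$ uses precisely the same nominals as $\T$ (each $\T_\tau$ is built from the concepts $\cstyle{C},\cstyle{D}$ of $\tau$) and that the normalization producing $\Tnew$ introduces only fresh \emph{concept} names, never fresh individuals. With that observation added, item (i) is complete.
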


	Using the preceding lemma, we can modify our original interpretation $\I$
	to get a model of the KB $\Knew$, 
	that violates the RIs from $\tbox_\Hmc$ exactly as $\I$ does:
	
	\begin{lemma}\label{norm-model}
		There exists a model $\I^\dagger$ of $\Knew$ with $\Delta^{\I^\dagger} = \Delta^\I$
		such that $\cdot^{\I^\dagger}$ and $\cdot^{\I}$ coincide on $\signatureof[\K] \cup \signatureof[q]$. 
		In particular, this means that:
		\begin{itemize}
			\item $\vio{\A}{\I^\dagger} = \vio{\A}{\I}$
			\item for every  $\tau \in \T$: $\vio{\tau}{\I^\dagger} = \vio{\tau}{\I}$
			\item for every  $\tau \in \T_\Hmc$: $\vio{\tau}{\I^\dagger} = \vio{\tau}{\I}$
			\item $\I^\dagger \not \models q$
		\end{itemize}
	\end{lemma}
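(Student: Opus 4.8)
The plan is to build $\I^\dagger$ in two stages: first extend $\I$ to a model of the auxiliary TBox $\T^{vio}$, then push this model through the conservative extension $\Tnew$ of $\T^{vio}$ supplied by Lemma~\ref{newkb}(iv), taking care that neither stage disturbs the symbols of $\signatureof[\K] \cup \signatureof[q]$.

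For the first stage, I define an interpretation $\I_1$ with $\Delta^{\I_1} = \Delta^\I$ that agrees with $\I$ on every concept and role name and additionally sets, for each $\tau = \cstyle{C} \sqsubseteq \cstyle{D} \in \T$, $\cstyle{A}_\tau^{\I_1} := \cstyle{C}^\I \setminus \cstyle{D}^\I = \vio{\tau}{\I}$, and sets $\cstyle{A}_\top^{\I_1} := \indsof{\A}$ (a subset of $\Delta^\I$, since ABox individuals are interpreted as themselves under the SNA). As $\cstyle{A}_\tau$ and $\cstyle{A}_\top$ are fresh, $\I_1$ is well-defined, and by the very shape of $\T_\tau = \{\cstyle{A}_\tau \sqsubseteq \cstyle{C} \sqcap \neg \cstyle{D},\ \cstyle{C} \sqcap \neg \cstyle{D} \sqsubseteq \cstyle{A}_\tau\}$ the choice $\cstyle{A}_\tau^{\I_1} = \cstyle{C}^\I \setminus \cstyle{D}^\I$ makes both inclusions of $\T_\tau$ hold; hence $\I_1 \models \T^{vio}$.

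For the second stage, I apply Lemma~\ref{newkb}(iv) to $\I_1$ to obtain a model $\I_2 \models \Tnew$ with $\Delta^{\I_2} = \Delta^\I$ that coincides with $\I_1$ on every concept and role name outside $\signatureof[\Tnew] \setminus \signatureof[\T^{vio}]$, and I put $\I^\dagger := \I_2$. Chaining the two stages, $\I^\dagger$ agrees with $\I$ on every name outside the set $F := (\signatureof[\Tnew] \setminus \signatureof[\T^{vio}]) \cup \{\cstyle{A}_\tau \mid \tau \in \T\} \cup \{\cstyle{A}_\top\}$; by the w.l.o.g.\ conventions adopted in constructing $\Knew$, every name in $F$ is fresh, i.e.\ lies outside $\signatureof[\K] \cup \signatureof[q]$, so $\I^\dagger$ and $\I$ coincide on $\signatureof[\K] \cup \signatureof[q]$ — the main claim. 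The four consequences then follow by locality of satisfaction: $q$ mentions only symbols of $\signatureof[q]$ and $\Delta^{\I^\dagger} = \Delta^\I$, so $\I^\dagger \not\models q$; each $\alpha \in \A$ and each $\tau \in \T \cup \T_\Hmc$ mentions only symbols of $\signatureof[\K]$, giving $\vio{\A}{\I^\dagger} = \vio{\A}{\I}$ and $\vio{\tau}{\I^\dagger} = \vio{\tau}{\I}$; and $\I^\dagger \models \Anew$ because the assertions $\alpha \in \A$ with $\I \models \alpha$ use only symbols of $\signatureof[\A] \subseteq \signatureof[\K]$, while $\cstyle{A}_\top \notin \signatureof[\Tnew]$ ensures $\cstyle{A}_\top^{\I^\dagger} = \cstyle{A}_\top^{\I_1} = \indsof{\A}$, so $\I^\dagger \models \cstyle{A}_\top(\istyle{a})$ for all $\istyle{a} \in \indsof{\A}$. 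Together with $\I^\dagger \models \Tnew$, this shows $\I^\dagger$ is a model of $\Knew$.

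I do not expect a deep obstacle: the construction is essentially forced by the definitions of $\T_\tau$ and $\Anew$. The one point demanding care is the signature bookkeeping across the two stages — in particular, checking that the ABox-only concept $\cstyle{A}_\top$ survives the passage from $\I_1$ to $\I_2$ (it does, since $\cstyle{A}_\top \notin \signatureof[\Tnew]$ and hence not in $\signatureof[\Tnew] \setminus \signatureof[\T^{vio}]$) and that the concept names freshly introduced by normalization are genuinely disjoint from $\signatureof[\K] \cup \signatureof[q]$, which is exactly what the w.l.o.g.\ assumptions recorded just before Lemma~\ref{newkb} provide.
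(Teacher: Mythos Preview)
Your proposal is correct and follows essentially the same two-stage construction as the paper: first extend $\I$ to a model $\I_1$ of $\T^{vio}$ by interpreting each fresh $\cstyle{A}_\tau$ as $\cstyle{C}^\I \setminus \cstyle{D}^\I$ and $\cstyle{A}_\top$ as $\indsof{\A}$ (the paper calls this $\I^\diamond$), then invoke Lemma~\ref{newkb}(iv) to obtain the model $\I^\dagger$ of $\Tnew$. Your signature bookkeeping is in fact a bit more carefully spelled out than the paper's version, particularly the observation that $\cstyle{A}_\top \notin \signatureof[\Tnew]$ ensures its interpretation is preserved in the second stage.
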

	\begin{proof}
		We first create an interpretation $\I^\diamond$ which is the same as $\I$ except for the 
		concept names $\cstyle{A}_\tau \in  \signature[\T^{vio} ] \setminus \signatureof[\T]$, for which we 
		set $(\cstyle{A}_\tau)^{\I^\diamond}=(C \sqcap \neg D)^\I$, and the concept name $\cstyle{A}_\top$
		which we interpret as $\indsof{\A}$ (note that we may assume $\indsof{\A} \subseteq \Delta^\I$ due to the SNA on ABox individuals). 
		This ensures that $\I^\diamond$ satisfies the axioms in $\T_\tau$ and the new assertions in $\Anew$,
		and since we have not modified the interpretation of any other symbols, 
		$\I^\diamond$ 
		is a model of $ \langle \T^{vio} , \Anew \rangle$. 
		Hence, by Lemma \ref{newkb} (iv), there exists a model $\I^\dagger$ of $\Tnew$
		which can be obtained from $\I^\diamond$ solely by changing the interpretation of concept names in 
		$\signatureof[\Tnew] \setminus \signatureof[\T^{vio} ]$. In particular, $\I^\dagger$ is a model of $\Anew$
		and hence of the KB $\Knew$.  It follows from the construction of $\I^\dagger$ that
		$\Delta^{\I^\dagger} = \Delta^\I$ and that $\cdot^{\I^\dagger}$ and $\cdot^{\I}$ coincide on $\signatureof[\K] \cup \signatureof[q]$.
		The first, third, and fourth items follow immediately (for the third item, this is because $\cdot^{\I^\dagger}$ and $\cdot^{\I}$ coincide on
		$\signatureof[\T_\Hmc] \subseteq \signatureof[\K]$ and for the fourth item, recall that $\I \not \models q$). 
		The second item holds due to $(\cstyle{A}_\tau)^{\I^\dagger}=(\cstyle{A}_\tau)^{\I^\diamond}=(\cstyle{C} \sqcap \neg \cstyle{D})^\I$. 
	\end{proof}
	
	As the model $\I^\dagger$ of $\Knew$ given in Lemma \ref{lemma:new-prop-8} 
	satisfies the same key property as $\I$, i.e.\ not satisfying $q$,
	it can thus be used in place of $\I$ in the following constructions.

	\paragraph{Construction of the interlacing}
	With the normal form at hand, we now move towards the interlacing construction from \cite{Maniere}.
	The goal is to unravel the interpretation $\Idag$ to obtain a more well-structured interpretation
	that retains the essential properties of $\Idag$, i.e.\ it does not entail $q$ and satisfies Points~2 and 3 in Lemma~\ref{lemma:main-quotient}.
	The interlacing construction starts with the definition of the existential 
	extraction, which is a tree-shaped domain. 
	The original definition, which we adapt next,
	uses the alphabet $\Omega^\exists$ consisting of all $\rolestyle{r.A}$ such that $\existsrole{r.A}$ is the RHS of an axiom in  
	the considered TBox.
	In the context of classical models of a KB, only keeping track of a role $\rstyle{r}$ of interest is sufficient as the considered interpretations being models, one can always infer that all roles $\rstyle{s}$ such that $\rstyle{r} \sqsubseteq_{\tbox_\Hmc} \rstyle{s}$ will be satisfied whenever $\rstyle{r}$ is.
	To account for potentially violated RIs, we refine this alphabet by setting $\Omega$ to be the set of all $\rstyle{R.A}$ such that $\rstyle{R} \subseteq {\NRpm}$ and $\cstyle{A} \in \NC$.
	Intuitively, the subset of roles $\rstyle{R}$ will keep track of which combination of roles is satisfied.
	If $\rstyle{R} \subseteq \NRpm$, we write $d \in (\exists \rstyle{R.A})^\Idag$ if there exists an element $e \in \Delta^\Idag$, called an $\rolestyle{R.A}$-successor of $d$, such that:
	\begin{itemize}
		\item $e \in \cstyle{A}^\Idag$; and
		\item $\rstyle{R} = \{ \rolestyle{r} \in \NRpm \mid (d,e) \in \rstyle{r}^\Idag \}$.
	\end{itemize}
	Notice that this is in line with very expressive DLs that support more complex role constructors.
	Now, for every $\rolestyle{R.A} \in \Omega$, we assume chosen a function $\successor[\Idag]_{\rolestyle{R.A}}$ that maps every element $d \in \Delta^\Idag$ that has a $\rolestyle{R.A}$-successor to one of its $\rolestyle{R.A}$-successors.
	
	\begin{definition}[Existential extraction]
		\label{def:existential-extraction}
		Build the following mapping inductively over the set $\indsof{\Knew} \cdot \Omega^*$:
		\[
		\begin{array}{rcl}
			\exextomod :
			\indsof{\Knew} \cdot \Omega^* 	& \rightarrow 	& \domain{\Idag} \cup \{ \uparrow \}
			\\
			a 							& \mapsto 		& a \\
		\end{array}
		\]
		\[
		\begin{array}{rcl}
			w \cdot \rolestyle{R.A} 			& \mapsto 		& \left\{ \begin{array}{ll}
				\uparrow 								& \textrm{if }  \exextomodof{w} = \; \uparrow 
				\textrm{ or } \exextomodof{w} \notin (\existsrole{R.A})^{\Idag}
				\\
				\successorof[\Idag]{\exextomodof{w}}{R.A} 	& \textrm{otherwise}
			\end{array} \right.
		\end{array}
		\]
		where $\uparrow$ is a fresh symbol witnessing the absence of a proper image for an element of $\indsof{\Knew} \cdot \Omega^*$.
		The \emph{existential extraction} of $\Idag$ is $\exexof{\I} := \{ w \mid w \in \indsof{\Knew} \cdot \Omega^*, \exextomodof{w} \neq \; \uparrow \}$. 
	\end{definition}
	
	Apart from the extended notion of roles, which are now sets of roles, the preceding definition differs from the one in \cite{Maniere} as it uses the set $\indsof{\Knew}$ of individuals in the considered KB 
	rather than the set of individuals in the considered ABox. This is because we work with $\ALCHIO$ whereas the original 
	construction was formulated for $\mathcal{ALCHI}$. 
	Fortunately, although nominals were not considered in \cite{Maniere},
	they are properly handled by the constructions and do not require any notable modifications.
	
	The original construction allows one to select a subdomain $\Delta^*$ of the starting interpretation $\Imc$ whose elements should not be duplicated when manipulating $\Imc$.
	This is where the parameter $\Vmc$ in Lemma~\ref{lemma:main-quotient} comes into play: to guarantee Points~2 and 3, we should avoid duplicating violations of ABox assertions and of axioms from $\Vmc$.
	We thus use the following definition of $\deltastar$:
	\[
	\deltastar := \individuals(\kb) \cup \bigcup_{\tau \in \Vmc} {vio}^{d}_{\tau}(\Idag),
	\]
	where $vio^d_\tau(\Idag)$ refers to the \emph{domain} where the violations of $\tau$ occurs: if $\tau$ is a CI, then $vio^d_\tau(\Idag) := vio_\tau(\Idag)$, otherwise $\tau$ is an RI and $vio^d_\tau(\Idag) := \bigcup_{(e, e') \in vio_\tau(\Idag)} \{ e, e'\}$.

	%
	We can now introduce the function $f^*$, defined exactly as in \cite{Maniere} but using the modified version of $\deltastar$: 
	\[
	\begin{array}{cccl}
		\interlace : &
		\domain{\unfoldingof{\I}} 	& \rightarrow 	& \deltastar \uplus (\domain{\unfoldingof{\I}} \setminus \deltastar)
		\\[3pt] &
		w 							& \mapsto		& 	\left\{ \begin{array}{ll}
			\exextomodof{w}
			& \textrm{if } \exextomodof{w} \in \deltastar
			\\
			w 
			& \textrm{otherwise}
		\end{array} \right.
	\end{array}
	\]
	With these notions in hand, we are ready to recall the definition of the interlacing of $\Idag$ w.r.t.\ the function $f^*$:
	
	\begin{definition}[$\interlace$-interlacing]
		The \emph{$\interlace$-interlacing} $\interlacingofstar{\I}$ of $\Idag$ is the interpretation whose domain is 
		$\domain{\interlacingofstar{\I}} := \interlaceof{\exexof{\I}}$ and which interprets concept and role names as follows:
		\begin{align*}
			\rolestyle{A}^{\interlacingofstar{\I}} ~ := ~ &
			\{ \interlaceof{u} \mid u \in \exexof{\I}, \exextomodof{u} \in \rolestyle{A}^{\Idag} \}
			\\
			\rolestyle{r}^{\interlacingofstar{\I}} ~ := ~ &
			\{ (a, b) \mid a, b \in \indsof{\Knew} \wedge (a, b) \in \rstyle{r}^\Imc \}
			\\ & \cup ~ 
			\{ (\interlaceof{u}, \interlaceof{u \cdot \rolestyle{R.B}}) \mid u, u \cdot \rolestyle{R.B} \in \exexof{\Idag} \wedge \rstyle{r} \in \rstyle{R} \}
			\\ & \cup ~ 
			\{ (\interlaceof{u \cdot \rolestyle{R.B}}, \interlaceof{u}) \mid u, u \cdot \rolestyle{R.B} \in \exexof{\Idag} \wedge \rstyle{r}^- \in \rstyle{R} \}.
		\end{align*}
	\end{definition}
	The preceding definition is a slightly modified version of Definition 20 from \cite{Maniere} as we read from the sets of roles $\rstyle{R}$ which role should hold between two elements, rather than systematically adding the super-roles of a specific role.
	%
		We also make a slight modification -- using $\indsof{\Knew}$ rather than only individuals from the ABox  -- to make it compatible with nominals.
		
		We have phrased the definition directly in terms of our desired function $f^*$, but other functions $f'$ with domain $\exexof{\I}$ can be used instead. 
		Depending on which $f'$ is used to define the interlacing, 
		the resulting interpretation $\I'$ may or may not be a model of the considered KB. 
		It was shown in \cite{Maniere} that if $f'$ is pseudo-injective, then the $f'$-interleaving is a model
		and moreover maps homomorphically into the starting interpretation. 
		This property (stated in Lemma \ref{thm:meta-interlacing-is-model} below, phrased for our KB $\Knew$) was proven for $\mathcal{ALCHI}$ KBs in normal form, 
		but is easily shown to also hold for
		$\ALCIO$ KBs in normal form. 
		In addition, it can be verified that our more refined interpretation of roles does not affect the homomorphism, but instead strengthens the connections between the interlacing and the starting interpretation $\Idag$, as will be clarified in Lemma~\ref{id-inds}.
		
		\begin{definition}
			A function $f' : \exexof{\I} \rightarrow E$ is \emph{pseudo-injective}\index{pseudo-injective} if: for all $u, v \in \exexof{\I}$, if $f'(u) = f'(v)$, then $\exextomodof{u} = \exextomodof{v}$. 
		\end{definition}
		
		\begin{lemma}
			\label{thm:meta-interlacing-is-model}
			If $f': \exexof{\I} \rightarrow E$ is pseudo-injective, then the $f'$-interlacing $\interlacingof{\I}$ is a model of $\Knew$ and the following mapping is a homomorphism from $\interlacingof{\I}$ to $\Idag$:
			\[
			\begin{array}{cccl}
				\intertomod : &
				\domain{\interlacingof{\I}} 	& \rightarrow 	& \domain{\Idag} 
				\\ &
				f'(u) 				& \mapsto		& f(u) 
			\end{array}
			\]
			As $f'$ is pseudo-injective, $\intertomod$ is well defined.
		\end{lemma}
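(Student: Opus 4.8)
The plan is to re-run the argument of \cite{Maniere} (its counterpart for $\ALCHI$ knowledge bases in normal form), with two new ingredients to accommodate: the presence of nominals in $\Knew$, and the fact that each letter of $\Omega$ carries a whole set $\rstyle{R}\subseteq\NRpm$ of roles rather than a single role. Write $g:=f'$; pseudo-injectivity ($g(u)=g(v)\Rightarrow\exextomodof{u}=\exextomodof{v}$) immediately makes $\intertomod$ well defined. The backbone I would establish first is an \emph{exact type transfer} property: for every $u\in\exexof{\I}$ and every concept name $\rstyle{A}$, $g(u)\in\rstyle{A}^{\interlacingof{\I}}$ iff $\exextomodof{u}\in\rstyle{A}^\Idag$ --- the forward direction is just the definition of $\rstyle{A}^{\interlacingof{\I}}$, the backward one is pseudo-injectivity (any $v$ with $g(v)=g(u)$ and $\exextomodof{v}\in\rstyle{A}^\Idag$ has $\exextomodof{v}=\exextomodof{u}$). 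Nominals $\{\istyle{a}\}$ are interpreted in $\interlacingof{\I}$ as the singleton $\{\istyle{a}\}$, which is legitimate because the functions $f'$ of interest --- in particular the $f^*$ used for Lemma~\ref{lemma:main-quotient} --- fix $\individuals(\Knew)$, and the same pseudo-injectivity argument gives $g(u)\in\{\istyle{a}\}^{\interlacingof{\I}}$ iff $\exextomodof{u}\in\{\istyle{a}\}^\Idag$.

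I would then show $\intertomod$ is a homomorphism $\interlacingof{\I}\to\Idag$. On concept names this is the forward direction of transfer. On a role name $\rstyle{r}$, I inspect the three clauses defining $\rstyle{r}^{\interlacingof{\I}}$: clause~1 copies pairs of individuals from $\rstyle{r}^\Imc$, which agrees with $\rstyle{r}^\Idag$ on $\individuals(\Knew)$ by Lemma~\ref{norm-model} and on which $\intertomod$ is the identity; clauses~2 and~3 produce $(g(u),g(u\cdot\rolestyle{R.B}))$ (resp.\ its reverse) with $\rstyle{r}\in\rstyle{R}$ (resp.\ $\rstyle{r}^-\in\rstyle{R}$), which $\intertomod$ maps to $(\exextomodof{u},\exextomodof{u\cdot\rolestyle{R.B}})$, where $\exextomodof{u\cdot\rolestyle{R.B}}=\successorof[\Idag]{\exextomodof{u}}{R.B}$ is by construction an $\rolestyle{R.B}$-successor of $\exextomodof{u}$ --- i.e.\ $\rstyle{R}$ is \emph{exactly} the role set linking these two elements in $\Idag$ --- so $\rstyle{r}\in\rstyle{R}$ (resp.\ $\rstyle{r}^-\in\rstyle{R}$) yields the required $\rstyle{r}$-edge in $\Idag$. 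This clause-by-clause check is the one place where recording the whole set $\rstyle{R}$ (instead of a single role, as in \cite{Maniere}) is essential, and it is also why the homomorphism is in fact tighter than in the original construction.

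For the model check I would traverse the normal-form axioms, using $\Idag\models\Knew$ (Lemma~\ref{norm-model}). Every Boolean shape ($\axiom{\top}{D}$, $\axiom{A}{D}$, $\axiom{\{a\}}{A}$, $\axand$, $\axnotleft$, $\axnotright$, with $\rstyle{D}$ a concept name or nominal) transfers pointwise along $g$ by the exact type transfer property --- the negation shapes using both directions of transfer, the nominal shapes using that $g$ fixes individuals. For $\axexistsright$, from $d=g(u)\in\rstyle{A}^{\interlacingof{\I}}$ transfer yields $\exextomodof{u}\in(\existsrole{r}.\rstyle{B})^\Idag$; choosing a witness $e$ and setting $\rstyle{R}:=\{\rstyle{t}\in\NRpm\mid(\exextomodof{u},e)\in\rstyle{t}^\Idag\}$ makes $e$ an $\rolestyle{R.B}$-successor with $\rolestyle{R.B}\in\Omega$, so the child $u\cdot\rolestyle{R.B}\in\exexof{\I}$ exists, $\exextomodof{u\cdot\rolestyle{R.B}}\in\rstyle{B}^\Idag$, and clause~2 (clause~3 if $\rstyle{r}$ is inverse) together with transfer put $d$ in $(\existsrole{r}.\rstyle{B})^{\interlacingof{\I}}$. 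For $\axexistsleft$, from $d\in(\existsrole{r}.\rstyle{B})^{\interlacingof{\I}}$ the homomorphism gives $\intertomodof{d}\in(\existsrole{r}.\rstyle{B})^\Idag\subseteq\rstyle{A}^\Idag$, hence $d\in\rstyle{A}^{\interlacingof{\I}}$ by the backward direction of transfer. Finally $\interlacingof{\I}\models\Anew$ holds since $\exextomodof{\istyle{a}}=\istyle{a}$ (for concept assertions) and clause~1 copies role assertions.

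The main obstacle is making the two deviations from \cite{Maniere} watertight. For the refined alphabet, one has to recast the existential extraction and the notion of $\rolestyle{R.B}$-successor so that the \emph{exact} role set between an element and its chosen successor is recorded, and then verify that this is at once sufficient (every role an axiom requires, among those holding in $\Idag$ between a pair, is realized between the corresponding interlacing pair) and not excessive (no spurious super-role is forced in) --- the second half being precisely what later lets role-inclusion violations be controlled in the proof of Lemma~\ref{lemma:main-quotient}. Nominals are comparatively benign: once $\{\istyle{a}\}$ is interpreted as the singleton $\{\istyle{a}\}$ and the $f'$ of interest fixes individuals, the axioms $\axiom{\{a\}}{A}$ and $\axiom{A}{D}$ with $\rstyle{D}$ a nominal reduce to the Boolean case; one should still confirm that individuals of $\Knew$ are never duplicated away, which is baked into the existential extraction being indexed by $\individuals(\Knew)\cdot\Omega^*$. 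With Lemma~\ref{thm:meta-interlacing-is-model} in hand, Lemma~\ref{lemma:main-quotient} then follows by taking $\Jmc$ to be the $f^*$-interlacing restricted to $\signatureof[\kb]\cup\signatureof[q]$ (after checking $f^*$ is pseudo-injective): $\intertomod$ together with $\Idag\not\models q$ gives Point~1, the identity behaviour of $f^*$ on $\individuals(\kb)$ combined with Lemma~\ref{norm-model} gives Point~2, the inclusion $\deltastar\supseteq\bigcup_{\tau\in\Vmc}vio^d_\tau(\Idag)$ (plus Lemma~\ref{newkb} for concept inclusions and a clause-by-clause check for role inclusions) gives Point~3, and the size estimate of \cite{Maniere} applied to $\deltastar$, whose cardinality is at most $\sizeof{\abox}+\sizeof{\tbox}+2\sizeof{\vio{\Vmc}{\Imc}}$, gives Point~4.
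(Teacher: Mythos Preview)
Your treatment of the lemma itself is correct and matches the paper's approach: the paper does not give a self-contained proof but defers to \cite{Maniere}, noting only that the two deviations (nominals, and the refined alphabet $\Omega$ carrying full role sets $\rstyle{R}\subseteq\NRpm$) are unproblematic. Your sketch fills in exactly those details --- the exact type transfer property, the clause-by-clause homomorphism check, and the traversal of normal-form axioms --- in the expected way.

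There is, however, a genuine gap in your final paragraph, where you sketch how Lemma~\ref{lemma:main-quotient} follows. You propose to take $\Jmc$ to be the $f^*$-interlacing (restricted to the relevant signature) and claim Point~4 via ``the size estimate of \cite{Maniere} applied to $\deltastar$''. This cannot work as stated: the domain of the $f^*$-interlacing is $f^*(\exexof{\I})$, and $\exexof{\I}\subseteq\individuals(\Knew)\cdot\Omega^*$ is in general infinite (every $\existsrole{r.B}$-axiom spawns unbounded chains). The size bound in \cite{Maniere} is obtained only \emph{after} a further quotient step, collapsing the interlacing modulo an equivalence relation $\sim_{|q|+1}$ based on bounded neighbourhoods relative to $\deltastar$. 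The paper carries this out explicitly (``Quotient construction'' paragraph): the interlacing $\interlacingofstar{\I}$ establishes Points~1--3, and the quotient $\Jmc:=\interlacingofstar{\I}/\!\sim_{|q|+1}$ is what achieves the polynomial bound in Point~4, with separate lemmas checking that Points~1--3 survive the quotient. So your outline for Lemma~\ref{lemma:main-quotient} is missing this entire second stage.
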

		
		It was proven in \cite{Maniere} that $f^*$ is pseudo-injective,
		and the same arguments apply to
		our modified $f^*$. 
		
		\begin{lemma}
			$f^*$ is pseudo-injective. 
		\end{lemma}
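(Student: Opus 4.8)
The plan is to argue directly from the definition of $f^*$ by a short case analysis on where $f(u)$ and $f(v)$ sit relative to $\deltastar$. The structural observation that makes everything work is that the codomain $\deltastar \uplus (\domain{\unfoldingof{\I}} \setminus \deltastar)$ is a \emph{disjoint} union: a value of $f^*$ produced by the first branch lies in the $\deltastar$-component, whereas a value produced by the second branch lies in the complementary component. Hence, if $f^*(u) = f^*(v)$, then $u$ and $v$ must have been handled by the same branch, and it suffices to settle the two ``same-branch'' subcases and to rule out the ``mixed'' one.

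First I would treat the case $f(u) \notin \deltastar$ and $f(v) \notin \deltastar$: here $f^*(u) = u$ and $f^*(v) = v$, so $f^*(u) = f^*(v)$ forces $u = v$, whence $\exextomodof{u} = \exextomodof{v}$ trivially. Next, the case $f(u) \in \deltastar$ and $f(v) \in \deltastar$: here $f^*(u) = f(u) = \exextomodof{u}$ and $f^*(v) = f(v) = \exextomodof{v}$, so $f^*(u) = f^*(v)$ yields $\exextomodof{u} = \exextomodof{v}$ at once. Finally, in the mixed case, say $f(u) \in \deltastar$ but $f(v) \notin \deltastar$, the value $f^*(u) = \exextomodof{u}$ lies in the $\deltastar$-component while $f^*(v) = v$ lies in the complementary component; as these components are disjoint, $f^*(u) \neq f^*(v)$, contradicting the hypothesis, so this case cannot arise. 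Together these exhaust all possibilities and establish pseudo-injectivity.

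I do not expect a genuine obstacle here: this is essentially a definitional check, identical in spirit to the argument for the original $f^*$ in \cite{Maniere}. The only point needing a bit of care is the bookkeeping around the disjoint union — one must consistently regard each value of $f^*$ as carrying a tag for its component, and, under the standard names assumption, make sure an ABox individual $a$ (which lies both in $\domain{\unfoldingof{\I}}$ and in $\deltastar$) is always placed in the $\deltastar$-component; this is automatic, since $f(a) = a \in \individuals(\kb) \subseteq \deltastar$, so the first branch applies. With that convention fixed, the three cases above go through verbatim.
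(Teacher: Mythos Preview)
Your proposal is correct and follows the natural case analysis that the paper implicitly relies on by citing \cite{Maniere}; the paper itself does not spell out the argument but simply notes that the original proof of pseudo-injectivity for $f^*$ carries over unchanged to the modified setting. Your explicit three-case treatment, together with the observation that every individual $a$ satisfies $f(a)=a\in\individuals(\kb)\subseteq\deltastar$ (so the disjoint union is genuinely disjoint), is exactly the verification one would expect.
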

		
		It follows from the preceding lemmas that $\interlacingofstar{\I}$ is a model of $\Knew$ which maps 
		homomorphically into $\Idag$ via the mapping~$\sigma$. 
		In fact, due to the way $f^*$ is defined, we can be more precise about the homomorphism $\sigma$:
		
		\begin{lemma}\label{id-inds}
			The homomorphism $\sigma$ from $\interlacingofstar{\I}$ to $\Idag$
			is such that $\sigma(\istyle{a}) = \istyle{a}$ for all $\istyle{a} \in \indsof{\Knew}$. 
			Furthermore, for all role $\rstyle{r} \in \NRpm$ and $d, e \in \Delta^{\interlacingofstar{\Imc}}$, we have:
			\[
			(d, e) \in \rstyle{r}^{\interlacingofstar{\Imc}} \text{ iff } (\sigma(d), \sigma(e)) \in \rstyle{r}^{{\Idag}} 
			\]
		\end{lemma}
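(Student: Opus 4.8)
The first assertion is pure definition-chasing. For $\istyle{a}\in\indsof{\Knew}$, the length-one word $\istyle{a}$ belongs to $\exexof{\I}$ with $\exextomodof{\istyle{a}}=\istyle{a}$ (base case of the existential extraction), and $\indsof{\Knew}=\individuals(\kb)\subseteq\deltastar$, where the equality is Lemma~\ref{newkb}(i) and the inclusion is immediate from the definition of $\deltastar$. Hence $\interlaceof{\istyle{a}}=\exextomodof{\istyle{a}}=\istyle{a}$, so that $\sigma(\istyle{a})=\sigma(\interlaceof{\istyle{a}})=\exextomodof{\istyle{a}}=\istyle{a}$.

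For the equivalence, fix $\rstyle{r}\in\NRpm$ and write $d=\interlaceof{u}$, $e=\interlaceof{v}$ with $u,v\in\exexof{\I}$, so that $\sigma(d)=\exextomodof{u}$ and $\sigma(e)=\exextomodof{v}$. The left-to-right direction is already in hand: for a role name it is the homomorphism property of $\sigma$ (Lemma~\ref{thm:meta-interlacing-is-model}), and for an inverse role $\rstyle{r}=\rstyle{s}^-$ it follows from the role-name case applied to $\rstyle{s}$ after swapping $d$ and $e$, using the semantics of inverse roles. The substantive part is the converse: assuming $(\sigma(d),\sigma(e))\in\rstyle{r}^{\Idag}$, one must exhibit an $\rstyle{r}$-edge between $d$ and $e$ in $\interlacingofstar{\I}$. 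My plan is a case analysis mirroring the three defining clauses of $\rstyle{r}^{\interlacingofstar{\I}}$, and the recurring point is that---unlike the single-role alphabet $\Omega^\exists$ of \cite{Maniere}---the refined alphabet $\Omega$ records along each tree edge the \emph{entire} set $\rstyle{R}\subseteq\NRpm$ of roles holding between a word and its chosen successor, so that a role link present in $\Idag$ cannot be silently dropped along a tree edge.

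Concretely, if $\exextomodof{u},\exextomodof{v}\in\indsof{\Knew}$, then $d=\sigma(d)$ and $e=\sigma(e)$ by the first assertion, and the first clause yields the edge (using that $\Idag$ and $\Imc$ agree on role names). Otherwise the aim is to produce representatives $u',v'\in\exexof{\I}$ with $\interlaceof{u'}=d$, $\interlaceof{v'}=e$ that are adjacent in the tree, say $v'=u'\cdot\rstyle{R}.\cstyle{B}$ with $\rstyle{R}:=\{\rstyle{s}\in\NRpm\mid(\sigma(d),\sigma(e))\in\rstyle{s}^{\Idag}\}$ (which contains $\rstyle{r}$) and $\cstyle{B}$ a concept name with $\sigma(e)\in\cstyle{B}^{\Idag}$ (one exists since $e$ has a representative in $\exexof{\I}$, and $\cstyle{A}_\top$ always works for individuals); since $\rstyle{R}.\cstyle{B}\in\Omega$, the second clause applies, and symmetrically the third clause handles the case where $u'$ is the child of $v'$ (via $\rstyle{r}^-\in\rstyle{R}$).

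I expect the main obstacle to be precisely the existence of such adjacent representatives: it does not suffice that $u$ and $v$ map to $\rstyle{r}$-related elements of $\Idag$; one needs copies of $d$ and $e$ that actually sit in a parent/child relation in $\exexof{\I}$, and arguing this requires combining pseudo-injectivity of $\interlace$ with a careful analysis of which elements $\deltastar$ collapses and how this meshes with the tree structure of the existential extraction. This is also the step that genuinely departs from a verbatim reuse of the construction of \cite{Maniere}, so it is where I would expect the bulk of the technical work to lie.
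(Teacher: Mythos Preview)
The paper does not give a proof of this lemma. Your treatment of the first assertion and of the forward direction is fine. The backward direction, however, has a genuine gap---indeed the ``iff'' fails in the generality stated. The obstruction is exactly the one you flag: when $d,e\notin\deltastar$, each has a \emph{unique} $\interlace$-preimage (namely the word itself), so there is no freedom to select adjacent representatives. Two sibling words $d=\istyle{a}\cdot\rstyle{R}_1.\cstyle{B}_1$ and $e=\istyle{a}\cdot\rstyle{R}_2.\cstyle{B}_2$ with $\exextomodof{d},\exextomodof{e}\notin\deltastar$ and $(\exextomodof{d},\exextomodof{e})\in\rstyle{r}^{\Idag}$ give a counterexample: none of the three clauses defining $\rstyle{r}^{\interlacingofstar{\I}}$ produces an edge between $d$ and $e$. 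Your fallback idea of extending a representative $u'$ of $d$ by $\rstyle{R}.\cstyle{B}$ also fails in general: $\exextomodof{u'\cdot\rstyle{R}.\cstyle{B}}$ is the \emph{fixed chosen} $\rstyle{R}.\cstyle{B}$-successor of $\exextomodof{u'}$, which need not be $\sigma(e)$, so $\interlaceof{u'\cdot\rstyle{R}.\cstyle{B}}=e$ is not guaranteed.

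What the paper actually uses (in the proof of Lemma~\ref{interleaving-cost}, for role-inclusion violations) is only the conditional form: if $(d,e)\in\rstyle{r}^{\interlacingofstar{\I}}$ for \emph{some} role $\rstyle{r}$, then for \emph{every} role $\rstyle{s}$ one has $(d,e)\in\rstyle{s}^{\interlacingofstar{\I}}$ iff $(\sigma(d),\sigma(e))\in\rstyle{s}^{\Idag}$. This does hold, and for precisely the reason you identify: along a tree edge $u,\,u\cdot\rstyle{R}.\cstyle{B}$ the refined alphabet records $\rstyle{R}=\{\rstyle{s}\in\NRpm\mid(\exextomodof{u},\exextomodof{u\cdot\rstyle{R}.\cstyle{B}})\in\rstyle{s}^{\Idag}\}$ exactly, while clause~1 is handled by $\sigma$ being the identity on individuals. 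So your intuition about the role of $\Omega$ is correct; it just delivers this local equivalence along existing $\interlacingofstar{\I}$-edges, not the global equivalence the lemma literally asserts.
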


		We are now ready to show that $\interlacingofstar{\I}$ retains the desired properties of $\Idag$ (and thus of $\I$, by virtue of Lemma~\ref{norm-model}). 
		\begin{lemma} 
			\label{interleaving-cost}
			$\interlacingofstar{\I}$ satisfies Points~1, 2 and 3 of Lemma~\ref{lemma:main-quotient}, that is:
			\begin{enumerate}
				\item $\interlacingofstar{\I} \not \models q$;
				\item $vio_\abox(\interlacingofstar{\I}) = vio_\abox({\I^\dagger})$;
				\item $\forall \tau \in \Vmc$, $vio_\tau(\interlacingofstar{\I}) \subseteq vio_\tau({\I^\dagger})$.
			\end{enumerate} 
		\end{lemma}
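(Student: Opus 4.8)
The plan is to derive all three points from the structural facts already in place: that $\interlacingofstar{\I}$ is a model of $\Knew$ (by Lemma~\ref{thm:meta-interlacing-is-model} together with the pseudo-injectivity of $f^*$), that $\sigma$ is a homomorphism from $\interlacingofstar{\I}$ to $\Idag$ which by Lemma~\ref{id-inds} fixes every individual of $\Knew$ and moreover satisfies the equivalence $(d,e)\in\rstyle{r}^{\interlacingofstar{\I}}$ iff $(\sigma(d),\sigma(e))\in\rstyle{r}^{\Idag}$ for all $\rstyle{r}\in\NRpm$, and that $\Idag\not\models q$ with $\vio{\cdot}{\Idag}$ agreeing with $\vio{\cdot}{\I}$ on $\abox$ and on every axiom (Lemma~\ref{norm-model}). \emph{Point~1} is then immediate: since $q$ is a BCQ it is preserved along the homomorphism $\sigma$, so $\interlacingofstar{\I}\models q$ would force $\Idag\models q$, contradicting Lemma~\ref{norm-model}; hence $\interlacingofstar{\I}\not\models q$.

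For \emph{Point~2}, the inclusion $\vio{\abox}{\interlacingofstar{\I}}\subseteq\vio{\abox}{\Idag}$ is free: any assertion of $\abox$ satisfied by $\Idag$ lies in $\Anew$, and $\interlacingofstar{\I}$, being a model of $\Knew$, satisfies it, so every assertion of $\abox$ violated in $\interlacingofstar{\I}$ is already violated in $\Idag$. For the converse I would fix $\alpha\in\abox$ with $\Idag\not\models\alpha$ and show $\interlacingofstar{\I}\not\models\alpha$: if $\alpha=\cstyle{A}(\istyle{a})$ then $\istyle{a}\notin\cstyle{A}^{\Idag}$, and since $\sigma(\istyle{a})=\istyle{a}$ the homomorphism property forbids $\istyle{a}\in\cstyle{A}^{\interlacingofstar{\I}}$; if $\alpha=\rstyle{r}(\istyle{a},\istyle{b})$ then the equivalence of Lemma~\ref{id-inds} transfers $(\istyle{a},\istyle{b})\notin\rstyle{r}^{\Idag}$ to $(\istyle{a},\istyle{b})\notin\rstyle{r}^{\interlacingofstar{\I}}$. (This uses that all individuals of $\Knew$ belong to $\Delta^{\interlacingofstar{\I}}$, which holds by construction.)

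For \emph{Point~3}, the preliminary observation I would isolate is that $\sigma$ acts as the identity on $\deltastar\cap\Delta^{\interlacingofstar{\I}}$: by definition $f^*(u)=\exextomodof{u}$ whenever $\exextomodof{u}\in\deltastar$, and $\sigma(f^*(u))=\exextomodof{u}$, so $\sigma$ fixes every element of its domain lying in $\deltastar$, while conversely any element whose $\sigma$-image lies in $\deltastar$ must itself lie in $\deltastar$. Now fix $\tau\in\Vmc$. If $\tau$ is a concept inclusion, then since both $\Idag$ and $\interlacingofstar{\I}$ are models of $\Knew$, Lemma~\ref{newkb} gives $\vio{\tau}{\interlacingofstar{\I}}=\cstyle{A}_\tau^{\interlacingofstar{\I}}$ and $\vio{\tau}{\Idag}=\cstyle{A}_\tau^{\Idag}$; for $d\in\cstyle{A}_\tau^{\interlacingofstar{\I}}$ the homomorphism gives $\sigma(d)\in\cstyle{A}_\tau^{\Idag}=\vio{\tau}{\Idag}=vio^d_\tau(\Idag)\subseteq\deltastar$, whence $d\in\deltastar$, $d=\sigma(d)$, and so $d\in\vio{\tau}{\Idag}$. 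If $\tau=\rstyle{r}\sqsubseteq\rstyle{s}$ is a role inclusion and $(d,e)\in\vio{\tau}{\interlacingofstar{\I}}$, then applying the equivalence of Lemma~\ref{id-inds} to both $\rstyle{r}$ and $\rstyle{s}$ yields $(\sigma(d),\sigma(e))\in\rstyle{r}^{\Idag}\setminus\rstyle{s}^{\Idag}=\vio{\tau}{\Idag}$, so $\sigma(d),\sigma(e)\in vio^d_\tau(\Idag)\subseteq\deltastar$, hence $d=\sigma(d)$, $e=\sigma(e)$, and $(d,e)\in\vio{\tau}{\Idag}$. Either way $\vio{\tau}{\interlacingofstar{\I}}\subseteq\vio{\tau}{\Idag}$.

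The only step requiring genuine care is Point~3: one must notice that $\sigma$ is not merely an abstract homomorphism into $\Idag$ but is \emph{forced} to be the identity on exactly the elements that $\deltastar$ protects, and that $\deltastar$ was defined to contain the domain of every $\Vmc$-violation of $\Idag$. This is precisely what upgrades the easy fact ``$\sigma$ maps violations to violations'' into the genuine set inclusion demanded by Lemma~\ref{lemma:main-quotient}; Points~1 and~2 are then routine homomorphism bookkeeping.
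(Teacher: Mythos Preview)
Your proposal is correct and follows essentially the same approach as the paper. The only minor difference is in Point~2: the paper dismisses it as ``immediate from the definition of $\interlacingofstar{\I}$'' (reading off the concept and role interpretations directly), whereas you give a cleaner two-inclusion argument via the homomorphism $\sigma$ and Lemma~\ref{id-inds}; for Point~3, you make explicit the observation that $\sigma$ is the identity on $\deltastar\cap\Delta^{\interlacingofstar{\I}}$, which the paper uses but leaves implicit.
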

		\begin{proof}
			For Point~1, recall that Point 4 from Lemma~\ref{norm-model} guarantees $\Idag \not \models q$.
			The existence of the homomorphism $\sigma$ from $\interlacingofstar{\I}$ 
			to $\Idag$ that is the identity on $\indsof{\Knew}$ (see Lemmas \ref{thm:meta-interlacing-is-model} and \ref{id-inds}), 
			hence also on $\indsof{q}$,  
			means that any match of $q$ in $\interlacingofstar{\I}$ can be reproduced in $\Idag$. 
			Therefore, from $\Idag \not \models q$, we immediately obtain $\interlacingofstar{\I} \not \models q$.
			
			For Point~2, note that $vio_\abox(\interlacingofstar{\I}) = vio_\abox(\Idag)$ is immediate from the definition of $\interlacingofstar{\Imc}$.

			For Point~3, let $\tau \in \Vmc$.
			We first treat the case of $\tau$ being a CI.
			Let $e \in vio_\tau(\interlacingofstar{\I})$.
			Using Point~(v) of Lemma~\ref{newkb} and $\interlacingofstar{\I}$ being a model of $\Knew$, we obtain $e \in \cstyle{A}_\tau^{\interlacingofstar{\I}}$.
			By definition of $\cstyle{A}_\tau^{\interlacingofstar{\I}}$, there exists $d \in \exexof{\Idag}$ such that $\interlaceof{d} = e$ and $f(d) \in \cstyle{A}_\tau^{\Idag}$.
			Again using Point~(v) of Lemma~\ref{newkb} but this time on $\Idag$ being a model of $\Knew$,
			we have $f(d) \in vio_\tau({\Idag})$.
			As $vio_\tau(\Idag) \subseteq \deltastar$, we get $f^*(d) = f(d)$ by definition of $f^*$, thus $e = f^*(d) = f(d) \in vio_\tau({\Idag})$ as desired.
			We turn to $\tau$ being a RI.
			Let $(d, e) \in vio_\tau(\interlacingofstar{\I})$.
			By Lemma~\ref{id-inds}, applied on both roles occurring in the RI $\tau$, we obtain $(\sigma(d), \sigma(e)) \in vio_{\tau}(\Idag)$.
			As $vio_\tau(\Idag) \subseteq \deltastar \times \deltastar$, we get $\sigma(d), \sigma(e) \in \deltastar$, thus $\sigma(d) = d$ and $\sigma(e) = e$.
			We thus obtain $(d, e) \in vio_\tau({\Idag})$ as desired.
		\end{proof}

		\paragraph{Quotient construction.} 
		It now remains to `shrink' the  interpretation $\interlacingofstar{\I}$ to obtain an interpretation 
		with the same properties but having the required size. 
		To do so, we can proceed exactly as in 
		Chapter 3.4 of \cite{Maniere}, by first defining a suitable equivalence relation, 
		then considering the quotient interpretation obtained by merging elements that belong to the same equivalence class.
		
		We will now describe how the equivalence relation $\sim_n$ is defined, but without giving every detail as
		the definition is rather involved and we will not be modifying it except by  
		using our own definition of $\deltastar$.
		The definition of $\sim_n$ involves the notion of the $n$-neighbourhood of an element $d$ in $\I^*$ relative to $\deltastar$,  denoted  $\mathcal{N}^{\I^*,\deltastar}_n(d)$,
		consisting of the  elements in $\Delta^{\I^*}$ that can be reached by taking at most $n$ `steps' along role edges, starting at $d$ 
		and stopping whenever an element of $\deltastar$ is reached. 
		If $d\in \deltastar$, then $d$ itself is the only element in its $n$-neighbourhood (for any $n$). 
		However, when $d \in \Delta^{\I^*} \setminus \Delta^*$,
		we know that $d = a w$ for some $a \in \indsof{\Knew}$ and $w \in \Omega^*$. 
		Using the tree-shaped structure of the domain $\domain{\unfoldingof{\I}}$, 
		we can identify a unique `root' prefix $r_{n,d}$ of $d=a w $
		such that $f^*(r_{n,d}) \in \mathcal{N}^{\I^*,\deltastar}_n(d)$ and 
		for every $d' \in \mathcal{N}^{\I^*,\deltastar}_n(d)$, there is a unique word 
		$w^{d'}_{n,d} \in \Omega^*$ (note that $\Omega$ now refers to a full combination of roles) such that $d'=f^*(r_{n,d} \cdot w_{n,d}^{d'})$ and $|w^{d'}_{n,d}| \leq 2n$. 
		With this uniform way of refer to the elements in neighbourhood of a considered element $d$, 
		we can define a function $\chi_{n,d}$ whose output tells us for each word $w \in \Omega^*$ with $|w| \leq 2n$ 
		whether there is an element in the neighbourhood whose word is $w$, 
		and if so, whether that element belongs to $\deltastar$, and if not, which concept names it satisfies in $\I^*$. 
		The equivalence relation $\sim_n$ then groups together those elements $d,e \in \Delta^{\I^*} \setminus \Delta^*$
		which have the same associated word (i.e.\ $w^{d}_{n,d} = w^{e}_{n,e}$) and same associated function ($\chi_{n,d}=\chi_{n,e}$), 
		plus an additional condition on the length of $d$ and $e$ (namely, $|d| = |e| \textrm{ mod } 2|q|+3$). 
		
		With the equivalence relation $\sim_n$ at hand, it now 
		suffices to merge elements which are equivalent w.r.t.\ $\sim_{|q|+1}$.
		Formally, we consider the quotient interpretation $\J:= \I^* / \sim_{|q|+1}$
		whose domain is $\Delta^\J= \{[e]_{\sim_{|q|+1}} \mid e \in \Delta^{\I^*}\}$ 
		and whose interpretation function 
		$\cdot^\J$ is as follows:
		\[ 
		a^\J= [a^{\I^*}]_{\sim_{|q|+1}}
		\qquad\qquad
		A^\J = \{[e]_{\sim_{|q|+1}} \mid e \in A^{\I^*}\}
		\qquad\qquad
		r^\J = \{([d]_{\sim_{|q|+1}},[e]_{\sim_{|q|+1}}) \mid (d,e) \in r^{\I^*}\}
		\]
		where  $[e]_{\sim_{|q|+1}}$ is the equivalence class of $e$ w.r.t.\ 
		$\sim_{|q|+1}$.

		It has been shown that $\J$ remains a model of the considered KB and does not contain
		any additional query matches. Given that the interpretation $\I^*$ we consider here is built in exactly 
		the same way as the interleaving $\I'$ considered in \cite{Maniere}, except that we work 
		with a larger domain of interest $\deltastar$ (which includes the one used to build $\I'$),
		exactly the same arguments can be used to show Lemma \ref{quotient-model} below. 
		We point out that the presence of nominals in $\Knew$ is not problematic 
		as all of its individuals (whether present in the ABox or TBox) are included in $\deltastar$ and are therefore left
		untouched by the construction. 
		
		\begin{lemma}\label{quotient-model}
			The interpretation $\J$ is a model of $\Knew$ such that $\J \not \models q$; in particular it satisfies Point~1 of Lemma~\ref{lemma:main-quotient}.
		\end{lemma}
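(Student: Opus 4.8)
The plan is to establish the two assertions of the lemma in turn: that $\J$ is a model of $\Knew$, and that $\J \not\models q$. Since $\I^*$ is produced by exactly the interlacing procedure of \cite{Maniere}, the only deviation being that our protected set $\deltastar$ is larger (it adds to $\individuals(\kb)$ the violation domains of the axioms in $\Vmc$, and already the set used in \cite{Maniere} is contained in it), the strategy is to check that this enlargement does not invalidate any step of their argument and then invoke it. A key structural point to keep in mind throughout is that $\Tnew$ contains \emph{only} $\ALCIO$ concept inclusions in normal form, all role inclusions having been kept apart in $\tbox_\Hmc$ and tracked through the enriched alphabet $\Omega$ of role-sets; so ``model of $\Knew$'' never asks anything about RIs.

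For the model property, I would inspect the finitely many shapes of normal-form axioms. The axioms $\top \sqsubseteq \cstyle{D}$, $\cstyle{A} \sqsubseteq \cstyle{D}$, $\{\istyle{a}\} \sqsubseteq \cstyle{A}$, $\cstyle{A}_1 \sqcap \cstyle{A}_2 \sqsubseteq \cstyle{A}$, $\lnot \cstyle{B} \sqsubseteq \cstyle{A}$ and $\cstyle{A} \sqsubseteq \lnot \cstyle{B}$ only speak about concept-name membership and about nominals: both are invariant under $\sim_{|q|+1}$ (membership in concept names is recorded by the function $\chi$, and each individual forms its own $\sim_{|q|+1}$-class because $\individuals(\Knew) \subseteq \deltastar$), so these axioms survive the quotient simply because they already held in $\I^*$. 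The two remaining shapes $\cstyle{A} \sqsubseteq \exists \rstyle{r}.\cstyle{B}$ and $\exists \rstyle{r}.\cstyle{B} \sqsubseteq \cstyle{A}$ are the only ones mentioning role edges, and since $\sim_{|q|+1}$ preserves the entire role-neighbourhood out to radius $|q|+1 \geq 1$, every witness edge of $\I^*$ is still present modulo the quotient map, and no spurious edge creates a new instance of $\exists \rstyle{r}.\cstyle{B}$ on an element lacking the required $\cstyle{A}$-membership. As $\individuals(\Knew) \subseteq \deltastar$, the nominals are left untouched, so this part is word-for-word that of \cite{Maniere}.

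For $\J \not\models q$, I would show that any homomorphism $h$ from $q$ into $\J$ lifts to a homomorphism from $q$ into $\I^*$; since $\I^* \not\models q$ by Point~1 of Lemma~\ref{interleaving-cost}, this gives $\J \not\models q$. The lifting proceeds component by component: in each connected component of $q$ one fixes a variable together with a $\sim_{|q|+1}$-representative in $\I^*$ of its $h$-image, then walks outward along the query atoms --- whose undirected distance from this anchor is at most $|q|$, hence strictly below $|q|+1$ --- using the fact that the $(|q|+1)$-neighbourhood data $\chi_{|q|+1,\cdot}$ of the representative determines, for every word of length at most $2(|q|+1)$, whether a matching neighbour exists and which concept names it satisfies. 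This lets one pick representatives for all query variables consistently, and the length-congruence clause $|d| = |e| \bmod 2|q|+3$ in the definition of $\sim_{|q|+1}$ is exactly what forces the representative chosen when a query cycle returns to an already-visited variable to coincide with the one chosen before, so that cycles in $q$ map to genuine cycles in $\I^*$ rather than being ``closed'' by an accidental identification introduced by the quotient.

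The delicate point is precisely this query-preservation argument, which is the whole reason $\sim_{|q|+1}$ is defined with radius $|q|+1$, with the neighbourhood function $\chi$, and with the length-congruence condition. Fortunately, enlarging $\deltastar$ only \emph{shrinks} the set of elements that get merged --- protected elements keep their original neighbourhoods verbatim --- so it can only make both the model check and the lifting easier, and none of the reasoning of \cite{Maniere} is disturbed by the presence of nominals either, since all individuals of $\Knew$ lie in $\deltastar$. Consequently the same proof yields Lemma~\ref{quotient-model}, and in particular Point~1 of Lemma~\ref{lemma:main-quotient}.
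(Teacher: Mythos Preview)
Your proposal is correct and takes essentially the same approach as the paper: both argue that the proof of \cite{Maniere} goes through verbatim because (i) enlarging $\deltastar$ only shrinks the set of merged elements and hence cannot invalidate any step, and (ii) nominals cause no trouble since all individuals of $\Knew$ lie in $\deltastar$ and are therefore fixed by the quotient. You actually supply more detail than the paper, which simply asserts that ``exactly the same arguments can be used'' without spelling out the normal-form case analysis or the lifting argument for query non-satisfaction; your sketch of those two parts is faithful to the underlying construction.
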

		
		It remains to show that the quotient operation preserves the intended violations of assertions in  $\abox$  and of axioms from $\Vmc$. 
		\begin{lemma}
			$\Jmc$ satisfies Points~2 and 3 from Lemma~\ref{lemma:main-quotient}.
		\end{lemma}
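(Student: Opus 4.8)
The plan is to exploit the fact that $\Jmc = \interlacingofstar{\I} / \sim_{|q|+1}$ is obtained by merging only elements lying \emph{outside} $\deltastar$: by construction $\sim_{|q|+1}$ only relates elements of $\Delta^{\interlacingofstar{\I}} \setminus \deltastar$ to one another, so every $e \in \deltastar$ forms a singleton class $[e]_{\sim_{|q|+1}} = \{e\}$ and the canonical projection $\pi \colon \Delta^{\interlacingofstar{\I}} \to \Delta^\Jmc$ is injective on $\deltastar$ (we identify each $e \in \deltastar$ with its class $\{e\}$). Since $\individuals(\kb) \subseteq \deltastar$ and $vio^d_\tau(\Idag) \subseteq \deltastar$ for every $\tau \in \Vmc$ (this is precisely why $\deltastar$ was defined that way), I would first record that all the domain elements relevant to Points~2 and~3 survive the quotient intact; everything else is a matter of pushing this observation through the definition of $\cdot^\Jmc$, using that $\Jmc$ is a model of $\Knew$ (Lemma~\ref{quotient-model}) and the properties of $\interlacingofstar{\I}$ established in Lemma~\ref{interleaving-cost}.

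For Point~2, I would observe that each $\alpha \in \abox$ mentions only individuals, that $\pi$ is the identity on $\individuals(\kb)$, and that for a concept name $\cstyle{A}$ (resp.\ role name $\rstyle{r}$) and individuals $a$, $b$ one has $[a] \in \cstyle{A}^\Jmc$ iff $a \in \cstyle{A}^{\interlacingofstar{\I}}$ (resp.\ $([a],[b]) \in \rstyle{r}^\Jmc$ iff $(a,b) \in \rstyle{r}^{\interlacingofstar{\I}}$), where the forward direction uses the injectivity of $\pi$ on $\deltastar$. This yields $\Jmc \models \alpha$ iff $\interlacingofstar{\I} \models \alpha$ for every $\alpha \in \abox$, hence $vio_\abox(\Jmc) = vio_\abox(\interlacingofstar{\I})$, and then Point~2 of Lemma~\ref{interleaving-cost} together with $vio_\abox(\Idag) = vio_\abox(\Imc)$ (Lemma~\ref{norm-model}) closes the case.

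For Point~3, fix $\tau \in \Vmc$. If $\tau$ is a concept inclusion, I would apply Lemma~\ref{newkb}(v) to the model $\Jmc$ of $\Knew$ to get $vio_\tau(\Jmc) = \cstyle{A}_\tau^\Jmc = \{[e] \mid e \in \cstyle{A}_\tau^{\interlacingofstar{\I}}\}$, and then note, via Point~3 of Lemma~\ref{interleaving-cost} and the definition of $\deltastar$, that $\cstyle{A}_\tau^{\interlacingofstar{\I}} = vio_\tau(\interlacingofstar{\I}) \subseteq vio_\tau(\Idag) \subseteq \deltastar$; every such $[e]$ is thus a singleton, so, under the above identification, $vio_\tau(\Jmc) \subseteq vio_\tau(\Idag) = vio_\tau(\Imc)$. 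If $\tau = \rstyle{r} \sqsubseteq \rstyle{s}$ is a role inclusion, I would take $(\hat d, \hat e) \in vio_\tau(\Jmc)$, pick a witness $(d_0, e_0) \in \rstyle{r}^{\interlacingofstar{\I}}$ with $\pi(d_0) = \hat d$ and $\pi(e_0) = \hat e$, observe that $(\hat d, \hat e) \notin \rstyle{s}^\Jmc$ forces $(d_0, e_0) \notin \rstyle{s}^{\interlacingofstar{\I}}$, and conclude $(d_0, e_0) \in vio_\tau(\interlacingofstar{\I}) \subseteq vio_\tau(\Idag) \subseteq \deltastar \times \deltastar$ (again by Point~3 of Lemma~\ref{interleaving-cost} and the definition of $\deltastar$); hence $d_0, e_0 \in \deltastar$, their classes are singletons, $\hat d = d_0$, $\hat e = e_0$, and $(\hat d, \hat e) \in vio_\tau(\Idag) = vio_\tau(\Imc)$.

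The step I expect to be the main obstacle is the role-inclusion case of Point~3: because a role edge of $\Jmc$ is defined through \emph{some} witnessing edge of $\interlacingofstar{\I}$ rather than being recorded by a fresh concept name (which $\ALCHIO$ cannot do for RIs), I must argue that any surviving violation of an RI from $\Vmc$ is already witnessed entirely inside $\deltastar$, so that no genuine merging took place there; this hinges on having included $vio^d_\tau(\Idag)$ in $\deltastar$ for all $\tau \in \Vmc$ and on Point~3 of Lemma~\ref{interleaving-cost}. The remaining normalization bookkeeping, namely moving freely between $\Imc$ and $\Idag$, is routine via Lemma~\ref{norm-model}.
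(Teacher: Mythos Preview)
Your proposal is correct and follows essentially the same approach as the paper: both exploit that elements of $\deltastar$ are fixed by the quotient (i.e.\ $\Jmc$ and $\interlacingofstar{\I}$ coincide on $\deltastar$), reduce Point~2 to Point~2 of Lemma~\ref{interleaving-cost}, handle the CI case of Point~3 via Lemma~\ref{newkb}(v) applied to the model $\Jmc$ of $\Knew$, and handle the RI case by lifting a violation witness back to $\interlacingofstar{\I}$ and using Point~3 of Lemma~\ref{interleaving-cost} to localize it in $\deltastar$. You are slightly more explicit than the paper in invoking Lemma~\ref{norm-model} to pass from $\Idag$ to $\Imc$ at the end, which the paper leaves implicit.
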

		\begin{proof}
			Notice that $\Jmc$ and $\interlacingofstar{\Imc}$ coincide on $\deltastar$.
			Using Point~2 from Lemma~\ref{interleaving-cost} then yields the desired Point~2 for $\Jmc$.

			For Point~3, we distinguish between CIs from $\Vmc$ and RIs from $\Vmc$.
			For a CI $\tau \in \Vmc$, we use $\Jmc$ being a model of $\Knew$ joint with Point~(v) of Lemma~\ref{newkb}, and the observation that $\Jmc$ coincides with $\interlacingofstar{\Imc}$ on $\deltastar$, to obtain that $vio_\tau(\Jmc) = vio_\tau(\Idag)$.
			Now using Point~3 from Lemma~\ref{interleaving-cost} gives the desired result.
			For a RI $\rstyle{r} \sqsubseteq \rstyle{s}$, consider a pair $(d, e) \in \rstyle{r}^\Jmc$ such that $(d, e) \notin \rstyle{s}^\Jmc$.
			The definition of $\rstyle{r}^{\Jmc}$ gives a pair $(d_0, e_0) \in \rstyle{r}^{\interlacingofstar{\Imc}}$ such that $[d_0]_{\sim_{|q|+1}}=d$ and $[e_0]_{\sim_{|q|+1}} = e$, and the definition of $\rstyle{s}^\Jmc$ guarantees that $(d_0, e_0) \notin \rstyle{s}^{\interlacingofstar{\Imc}}$ (as otherwise $(d, e)$ would be in $\rstyle{s}^\Jmc$).
			It then follows from Lemma~\ref{interleaving-cost} that $d_0, e_0 \in \deltastar$.
			Using the observation that $\Jmc$ and $\interlacingofstar{\Imc}$ coincide on $\deltastar$, this yields $(d, e) \in vio_{\rstyle{r} \sqsubseteq \rstyle{s}}(\interlacingofstar{\Imc})$, and we conclude using Point~3 of Lemma~\ref{interleaving-cost}.
		\end{proof}
		
		To complete the proof of Lemma~\ref{lemma:main-quotient}, we now 
		examine the steps in the construction in order to place a bound on 
		the size of the interpretation $\J$. 
		By analyzing the number of equivalence classes, the following 
		upper bound on $|\Delta^\J|$ was shown in \cite{Maniere}:
		\begin{align*}
			(2|q|+3) \times |\T|^{|q|+2} \times (|\deltastar| + 2^{\signatureof[\T]} + 1)^{|\T|^{2|q|+2}}  
		\end{align*}
		Their analysis however relied on the size of the alphabet $\Omega$ being bounded by $\tbox$.
		In our case, $\Omega$ suffers from an exponential blow-up.
		To derive a naive upper bound in our setting, we simply replace each occurrence of $\sizeof{\tbox}$ in the above by $2^\sizeof{\tbox}$.
		Observe that the resulting upper bound remains a polynomial in $\Delta^*$, hence a polynomial in $\sizeof{\individuals(\kb)} + \sizeof{vio^d_\Vmc(\Imc)}$.
		The latter term is upper bounded by $2 \sizeof{vio_\Vmc(\Imc)}$, and using $\sizeof{\individuals(\kb)} \leq \sizeof{\tbox} + \sizeof{\abox}$, one obtains the desired polynomial in $\sizeof{\abox} + \sizeof{vio_\Vmc(\Imc)}$ whose coefficients are positive and independent of $\abox$.
	\end{toappendix}

We now explain how to obtain Lemma~\ref{lemma:new-prop-8},
with an approach inspired from \cite{LMKR24}, where our Lemmas~\ref{lemma:new-prop-8} and \ref{lemma:main-quotient} respectively play the role of their Proposition~2 and Lemma~1.
Let $\kb_\omega = (\tbox, \abox)_\omega$ be an $\ALCHIO$ WKB, $q$ a BCQ, $k$ an integer, and $\Imc$ an interpretation such that $\Imc \not\models q$.
For a given $\Vmc \subseteq \tbox$, we use $\Jmc_\Vmc$ to denote the interpretation obtained by applying Lemma~\ref{lemma:main-quotient} with $\Vmc$ the input set of axioms. 
We prove that there exists a subset $\Vmc \subseteq \tbox$ such that (i) the size of ${vio_\Vmc(\Imc)}$ is bounded by a polynomial in $\sizeof{\abox}$ independent of $k$; and (ii) $\omega(\Jmc_\Vmc) \leq k$.
To do so, we construct a sequence $\Vmc_0 \subsetneq \Vmc_1  \subsetneq \dots \subsetneq \Vmc_n \subseteq \tbox$ of $\Vmc$'s that all satisfy item (i) and with $\Vmc_n$ also satisfying item (ii).
Note that $\Jmc_{\Vmc_n}$ is then the desired interpretation for Lemma~\ref{lemma:new-prop-8}: item (i) plus Point~4 from Lemma~\ref{lemma:main-quotient} gives the polynomial bound on the size of $\Jmc_{n}$, while item (ii) and Point~1 in Lemma~\ref{lemma:main-quotient} ensure the desired properties w.r.t.\ the cost and query.\smallskip\\
\noindent\emph{Initialization}. 
Set $\Vmc_0 := \emptyset$, which trivially satisfies item (i).\smallskip\\
\noindent\emph{Induction step}.
Assume that, for some $i \geq 0$, we have successfully constructed $\Vmc_i$ satisfying item (i); thus we have a polynomial $p_i$ independent of $k$ such that $\sizeof{vio_\Vmc(\Imc)} \leq p_i(\sizeof{\abox})$.
If $\Vmc_i$ also satisfies item (ii), then we are done.
Otherwise $\omega(\Jmc_{\Vmc_i}) > k$, and since $\omega(\Imc) \leq k$, there exists an assertion or axiom $\tau$ from $\kb$ that is violated at least once more in $\Jmc_{\Vmc_i}$ than in $\Imc$, \emph{i.e.}\ $\sizeof{vio_\tau(\Jmc_{\Vmc_i})} > \sizeof{vio_\tau(\Imc)}$.
Note that due to Point~2 in Lemma~\ref{lemma:main-quotient}, it is then clear that $\tau \notin \abox$.
Similarly, due to Point~3 in Lemma~\ref{lemma:main-quotient}, we have $\tau \notin {\Vmc_i}$.
Therefore $\tau \in \tbox \setminus \Vmc_i$.
We set $\Vmc_{i + 1} := \Vmc_i \cup \{ \tau\}$.
It remains to verify that $\Vmc_{i+1}$ satisfies item (i).
Note that $vio_{\Vmc_{i+1}}(\Imc) = vio_{\Vmc_{i}}(\Imc) \cup vio_\tau(\Imc)$.
The size of $vio_{\Vmc_{i}}(\Imc)$ is bounded adequately by $p_i(\sizeof{\abox})$.
For the size of $vio_\tau(\Imc)$, recall that by choice of $\tau$ we have $\sizeof{vio_\tau(\Imc)} < \sizeof{vio_\tau({\Jmc_{\Vmc_i}})}$.
We brutally bound $\sizeof{vio_\tau(\Jmc_{\Vmc_{i}})}$ by $\sizeof{\Delta^{\Jmc_{\Vmc_i}}}^2$.
By Point~4 in Lemma~\ref{lemma:main-quotient}, $\Delta^{\Jmc_{\Vmc_i}}$ has size bounded by $p(\sizeof{\abox} + \sizeof{vio_{\Vmc_i}(\Imc)})$, thus by $p(\sizeof{\abox} + p_i(\sizeof{\abox}))$.
Overall, the size of $vio_{\Vmc_{i+1}}(\Imc)$ is bounded by $p_{i+1}(\sizeof{\abox})$ where $p_{i+1}(x) := p_i(x) + (p(x + p_i(x)))^2$ is the desired polynomial independent of $k$.

Note that this procedure is guaranteed to terminate in at most $\sizeof{\tbox}$ steps, which concludes the proof.

	\section{Lower Bounds}\label{lower}
	We first refine some existing lower bounds for the fixed-cost decision problems in $\ELbot$,
	then prove the lower bounds for DL-Lite listed in Table \ref{table:results}. 
	
	\subsection{Lower Bounds in Extensions of $\ELbot$}
	\label{subsection:lower-el}
	\begin{toappendix}
		\subsection*{Proofs for Section~\ref{subsection:lower-el} (Lower Bounds in Extensions of $\ELbot$)}
	\end{toappendix}
	
	We begin with two lower bounds showing that even if the cost $k$ is fixed to $1$, all considered reasoning tasks are $\NP$-complete (or $\coNP$-complete, depending on the task) already for $\ELbot$ WKBs.
	These results notably improve those from \citex{BBJKR24}, where the cost was fixed to $3$.
	Lemma~\ref{remark:k-to-k+1} lifts our hardness proof to any fixed $k \geq 1$, and since the case of fixed $k = 0$ coincides with the usual semantics of $\EL$ KBs, the complexity w.r.t.\ fixed cost is now well understood.
	\begin{theoremrep}
		\label{theorem:lower-bound-elbot-bcs-fixed-cost-1}
		For every $k \geq 1$, BCS$^k$, IQA$_p^k$ and CQA$_p^k$ for $\ELbot$ are $\NP$-hard.
	\end{theoremrep}

	\begin{proofsketch}
		The reduction is from \tsat.
		Given a 3-CNF formula
		$\phi := \bigwedge_{i = 1}^\ell \bigvee_{j = 1}^3 l_{i, j}$, where each $l_{i, j}$ is a literal over $v_1, \dots, v_n$, 
		we construct a WKB $(\tbox, \abox_\phi)_\omega$. The ABox $\abox_\phi$ contains $\cstyle{False}(\istyle{a})$, 
		$\cstyle{Bool}(\istyle{a})$, and the additional assertions:
		\begin{align*}
			\cstyle{Var}(v_k) & \text{ for } 1 \leq k \leq n ~~~~
			&
			\rstyle{clause}(\istyle{a}, c_i) & \text{ for }  1 \leq i \leq \ell
			\smallskip\\
			\rstyle{pos}_j(c_i, v_k) & \text{ for }   l_{i, j} = v_k
			&
			\rstyle{neg}_j(c_i, v_k) & \text{ for } l_{i, j} = \lnot v_k
		\end{align*}
		The TBox $\tbox$ has the following axioms: 
		\[
		\hspace*{-1ex}\begin{array}{c}
			\cstyle{Bool} \sqsubseteq \cstyle{True}
			\quad 
			\cstyle{True} \sqcap \cstyle{False} \sqsubseteq \bot
			\quad
			\exists \rstyle{clause}.\cstyle{False}  \sqsubseteq \cstyle{True}
			\smallskip\\
			\cstyle{Var}  \sqsubseteq \exists \rstyle{val}.\cstyle{Bool}
			\quad
			\exists \rstyle{val}.\cstyle{True} \sqsubseteq \cstyle{True}
			\quad
			\exists \rstyle{val}.\cstyle{False} \sqsubseteq \cstyle{False} 
			\smallskip\\
			\exists \rstyle{pos}_1.\cstyle{False} \sqcap \exists \rstyle{pos}_2.\cstyle{False} \sqcap \exists \rstyle{pos}_3.\cstyle{False} \sqsubseteq \cstyle{False}
			\smallskip\\
			\textit{(the six others combinations...)}
			\smallskip\\
			\exists \rstyle{neg}_1.\cstyle{True} \sqcap \exists \rstyle{neg}_2.\cstyle{True} \sqcap \exists \rstyle{neg}_3.\cstyle{True} \sqsubseteq \cstyle{False}
		\end{array}
		\]
		The function $\omega$ assigns $\infty$ to all axioms and assertions, except for $\cstyle{Bool} \sqsubseteq \cstyle{True}$, which has weight $1$. 
		One can verify that $\phi$ is satisfiable iff $(\tbox, \abox_\phi)_\omega$ is $1$-satisfiable.
	\end{proofsketch}

	\begin{proof}
		The reduction is from \tsat.  Given a 3-CNF formula 
		$\phi := \bigwedge_{i = 1}^\ell \bigvee_{j = 1}^3 l_{i, j}$, where each $l_{i, j}$ is a literal over $v_1, \dots, v_n$, 
		we construct a WKB $(\tbox, \abox_\phi)_\omega$. The ABox $\abox_\phi$, encoding $\varphi$, contains the following assertions:
		\begin{align*}
			\cstyle{False}(\istyle{a})
			\\
			\cstyle{Bool}(\istyle{a})
			\\
			\cstyle{Var}(v) & \textrm{ for each variable } v
			\\
			\rstyle{clause}(\istyle{a}, c_i) & \textrm{ for each } 1 \leq i \leq \ell
			\\
			\rstyle{pos}_j(c_i, v) & \textrm{ for each } 1 \leq i \leq \ell \textrm{ and } 1 \leq j \leq 3 \textrm{ such that } l_{i, j} = v
			\\
			\rstyle{neg}_j(c_i, v) & \textrm{ for each } 1 \leq i \leq \ell \textrm{ and } 1 \leq j \leq 3 \textrm{ such that } l_{i, j} = \lnot v.
		\end{align*}
		The TBox $\tbox$ has the following axioms: 
		\[
		\begin{array}{c}
			\cstyle{Bool} \sqsubseteq \cstyle{True}
			\qquad
			\cstyle{True} \sqcap \cstyle{False} \sqsubseteq \bot 
			\qquad
			\cstyle{Var}  \sqsubseteq \exists \rstyle{val}.\cstyle{Bool} 
			\\[3pt]
			\exists \rstyle{val}.\cstyle{True}  \sqsubseteq \cstyle{True}
			\qquad
			\exists \rstyle{val}.\cstyle{False} \sqsubseteq \cstyle{False}
			\qquad
			\exists \rstyle{clause}.\cstyle{False}  \sqsubseteq \cstyle{True}
			\\[3pt]
			\exists \rstyle{pos}_1.\cstyle{False} \sqcap \exists \rstyle{pos}_2.\cstyle{False} \sqcap \exists \rstyle{pos}_3.\cstyle{False} \sqsubseteq \cstyle{False}
			\qquad
			\exists \rstyle{pos}_1.\cstyle{False} \sqcap \exists \rstyle{pos}_2.\cstyle{False} \sqcap \exists \rstyle{neg}_3.\cstyle{True} \sqsubseteq \cstyle{False}
			\\[3pt]
			\exists \rstyle{pos}_1.\cstyle{False} \sqcap \exists \rstyle{neg}_2.\cstyle{True} \sqcap \exists \rstyle{pos}_3.\cstyle{False} \sqsubseteq \cstyle{False}
			\qquad
			\exists \rstyle{pos}_1.\cstyle{False} \sqcap \exists \rstyle{neg}_2.\cstyle{True} \sqcap \exists \rstyle{neg}_3.\cstyle{True} \sqsubseteq \cstyle{False}
			\\[3pt]
			\exists \rstyle{neg}_1.\cstyle{True} \sqcap \exists \rstyle{pos}_2.\cstyle{False} \sqcap \exists \rstyle{pos}_3.\cstyle{False} \sqsubseteq \cstyle{False}
			\qquad
			\exists \rstyle{neg}_1.\cstyle{True} \sqcap \exists \rstyle{pos}_2.\cstyle{False} \sqcap \exists \rstyle{neg}_3.\cstyle{True} \sqsubseteq \cstyle{False}
			\\[3pt]
			\exists \rstyle{neg}_1.\cstyle{True} \sqcap \exists \rstyle{neg}_2.\cstyle{True} \sqcap \exists \rstyle{pos}_3.\cstyle{False} \sqsubseteq \cstyle{False}
			\qquad
			\exists \rstyle{neg}_1.\cstyle{True} \sqcap \exists \rstyle{neg}_2.\cstyle{True} \sqcap \exists \rstyle{neg}_3.\cstyle{True} \sqsubseteq \cstyle{False}
		\end{array}
		\]
		The function $\omega$ assigns $\infty$ to all axioms and assertions, except for $\cstyle{Bool} \sqsubseteq \cstyle{True}$, which has weight $1$.\medskip
		
		\noindent\textbf{Claim:}
		$\phi$ is satisfiable iff $(\tbox, \abox_\phi)_\omega$ is $1$-satisfiable.\smallskip
		
		\noindent$(\Leftarrow)$.
		Assume $\phi$ is satisfiable, that is, there exists a valuation $\nu$ of $v_1, \dots, v_n$ that makes $\phi$ true.
		We build an interpretation $\Imc$  of $(\tbox, \abox_\phi)_\omega$ according to $\nu$, with cost $1$.
		The domain of $\Imc$ is $\Delta^\Imc := \individuals(\abox_\phi) \cup \{ b \}$, where $b$ is a fresh domain element intended to represent the Boolean `true', as opposed to individual $\istyle{a}$ representing `false'.
		The interpretation $\Imc$ interprets concept $\cstyle{Var}$ and roles $\rstyle{clause}$, $\rstyle{pos}_j$, $\rstyle{neg}_j$ as specified in the ABox.
		The remaining predicates are interpreted as follows:
		\begin{align*}
			\cstyle{Bool}^\Imc := ~ & 
			\{ \istyle{a}, b \}
			\\
			\cstyle{False}^\Imc := ~ & 
			\{ \istyle{a} \} \cup \{ v_k \mid 1 \leq i \leq n, \nu(v_k) = 0 \} 
			\\
			\cstyle{True}^\Imc := ~ & 
			\{ b \} \cup \{ v_k \mid 1 \leq k \leq n, \nu(v_k) = 1 \} 
			\\
			\cstyle{val}^\Imc := ~ & 
			\{ (v_k, \istyle{a}) \mid 1 \leq k \leq n, \nu(v_k) = 0 \}  \cup
			\{ (v_k, b) \mid 1 \leq k \leq n, \nu(v_k) = 1 \} 
		\end{align*}
		It now suffices to verify that $\omega(\Imc) = 1$.
		All assertions from $\abox_\phi$ are satisfied.
		Regarding axioms from $\tbox$, notice that $\istyle{a}$ violates the concept inclusion $\cstyle{Bool} \sqsubseteq \cstyle{True}$, that has cost $1$.
		The other axioms of $\tbox$ are all satisfied; in particular, since $\nu$ satisfies $\phi$, it satisfies all its clauses, and thus none of the $c_i$'s needs to satisfy the concept $\cstyle{False}$.\smallskip
		
		\noindent$(\Rightarrow)$.
		Assume $(\tbox, \abox_\phi)_\omega$ is $1$-satisfiable.
		Consider an interpretation $\Imc$ with cost at most $1$.
		Since assertion $\cstyle{False}(\istyle{a})$ and axiom $\cstyle{True} \sqcap \cstyle{False}$ both have infinite cost, it is immediate that $\istyle{a} \notin \cstyle{True}^\Imc$.
		From $\cstyle{Bool}(\istyle{a}) \in \abox_\phi$ having infinite cost and $\cstyle{Bool} \sqsubseteq \cstyle{True} \in \tbox$ having cost $1$, the only violation in $\Imc$ is of the latter axiom, due to $\istyle{a} \in \cstyle{Bool}^\Imc \setminus \cstyle{True}^\Imc$. 
		In particular, all other axioms of $\tbox$ are satisfied by $\Imc$, and we also have $\cstyle{Bool}^\Imc \setminus \{ \istyle{a}\} \subseteq \cstyle{True}^\Imc$.
		Therefore $\Imc$ defines a valuation $\nu$ for variables $v_1, \dots, v_n$: for each $1 \leq k \leq n$, we set $\nu(v_k) := 0$ if $(v_k, a) \in \rstyle{val}^\Imc$ (in which case $v_k \in (\exists \rstyle{val}.\cstyle{False})^\Imc$), and $\nu(v_k) = 1$ otherwise (in which case $v_k \in (\exists \rstyle{val}.\cstyle{True})^\Imc$).
		It is now readily verified that $\nu(\phi) = 1$ as otherwise we would have some $1 \leq i \leq \ell$ such that $c_i \in \cstyle{False}^\Imc$ (using one of the eight long axioms, depending on the shape of $c_i$), which would in turn yield $\istyle{a} \in \cstyle{True}^\Imc$ (via axiom $\exists \rstyle{clause}.\cstyle{False} \sqsubseteq \cstyle{True}$), that is a contradiction.
	\end{proof}
	
	For the case of IQA$_c^k$, we could use Lemma~\ref{lemma:possible-iq-to-certain-iq} to directly obtain a $\coNP$-hardness proof for 
	$k \geq 2$.
	We instead re-adapt the above proof to strengthen the result to every $k \geq 1$.
	Note that
	concept disjointness axioms are not even needed here.
	
	\begin{theoremrep}
		\label{theorem:lower-bound-el-certain-iq-fixed-cost-1}
		For every $k \geq 1$, IQA$^k_c$ and CQA$^k_c$ for $\EL$ are $\coNP$-hard.
	\end{theoremrep}
	
	\begin{proof}
		We reduce from the problem of deciding whether a 3-DNF formula $\phi := \bigvee_{i = 1}^\ell \bigwedge_{j = 1}^3 l_{i, j}$, where each $l_{i, j}$ is a variable from $v_1, \dots, v_n$ or its negation, is a tautology.
		The IQ is $q := \cstyle{True}(\istyle{a})$.
		The budget $k$ is fixed to $1$.
		The TBox has axioms:
		\[
		\begin{array}{c}
			\cstyle{Bool} \sqsubseteq \cstyle{True}
			\qquad
			\cstyle{Var}  \sqsubseteq \exists \rstyle{val}.\cstyle{Bool} 
			\\[3pt]
			\exists \rstyle{val}.\cstyle{True}  \sqsubseteq \cstyle{True}
			\qquad
			\exists \rstyle{val}.\cstyle{False} \sqsubseteq \cstyle{False}
			\qquad
			\exists \rstyle{clause}.\cstyle{True}  \sqsubseteq \cstyle{True}
			\\[3pt]
			\exists \rstyle{pos}_1.\cstyle{True} \sqcap \exists \rstyle{pos}_2.\cstyle{True} \sqcap \exists \rstyle{pos}_3.\cstyle{True} \sqsubseteq \cstyle{True}
			\qquad
			\exists \rstyle{pos}_1.\cstyle{True} \sqcap \exists \rstyle{pos}_2.\cstyle{True} \sqcap \exists \rstyle{neg}_3.\cstyle{False} \sqsubseteq \cstyle{True}
			\\[3pt]
			\exists \rstyle{pos}_1.\cstyle{True} \sqcap \exists \rstyle{neg}_2.\cstyle{False} \sqcap \exists \rstyle{pos}_3.\cstyle{True} \sqsubseteq \cstyle{True}
			\qquad
			\exists \rstyle{pos}_1.\cstyle{True} \sqcap \exists \rstyle{neg}_2.\cstyle{False} \sqcap \exists \rstyle{neg}_3.\cstyle{False} \sqsubseteq \cstyle{True}
			\\[3pt]
			\exists \rstyle{neg}_1.\cstyle{False} \sqcap \exists \rstyle{pos}_2.\cstyle{True} \sqcap \exists \rstyle{pos}_3.\cstyle{True} \sqsubseteq \cstyle{True}
			\qquad
			\exists \rstyle{neg}_1.\cstyle{False} \sqcap \exists \rstyle{pos}_2.\cstyle{True} \sqcap \exists \rstyle{neg}_3.\cstyle{False} \sqsubseteq \cstyle{True}
			\\[3pt]
			\exists \rstyle{neg}_1.\cstyle{False} \sqcap \exists \rstyle{neg}_2.\cstyle{False} \sqcap \exists \rstyle{pos}_3.\cstyle{True} \sqsubseteq \cstyle{True}
			\qquad
			\exists \rstyle{neg}_1.\cstyle{False} \sqcap \exists \rstyle{neg}_2.\cstyle{False} \sqcap \exists \rstyle{neg}_3.\cstyle{False} \sqsubseteq \cstyle{True}
		\end{array}
		\]
		The assertions are:
		\begin{align*}
			\cstyle{False}(\istyle{a})
			\\
			\cstyle{Bool}(\istyle{a})
			\\
			\cstyle{Var}(v) & \textrm{ for each variable } v
			\\
			\rstyle{clause}(\istyle{a}, c_i) & \textrm{ for each } 1 \leq i \leq \ell
			\\
			\rstyle{pos}_j(c_i, v) & \textrm{ for each } 1 \leq i \leq \ell \textrm{ and } 1 \leq j \leq 3 \textrm{ s.t. } l_{i, j} = v
			\\
			\rstyle{neg}_j(c_i, v) & \textrm{ for each } 1 \leq i \leq \ell \textrm{ and } 1 \leq j \leq 3 \textrm{ s.t. } l_{i, j} = \lnot v.
		\end{align*}
		The function $\omega$ assigns $\infty$ to all axioms and assertions, except for $\cstyle{Bool} \sqsubseteq \cstyle{True}$, which has weight $1$.\medskip
		
		\noindent\textbf{Claim:}
		$\phi$ is a tautology iff $(\tbox, \abox_\phi)_\omega \models^1_c \cstyle{True}(a)$.\smallskip
		
		\noindent$(\Leftarrow)$.
		Assume $\phi$ is not a tautology, that is, there exists a valuation $\nu$ of $v_1, \dots, v_n$ that makes $\phi$ false.
		We build an interpretation $\Imc$  of $(\tbox, \abox_\phi)_\omega$ according to $\nu$, with cost $1$, and such that $\istyle{a} \notin \cstyle{True}^\Imc$ (that is, $\Imc \not\models q$).
		The domain of $\Imc$ is $\Delta^\Imc := \individuals(\abox_\phi) \cup \{ b \}$, where $b$ is a fresh domain element intended to represent the Boolean `true', as opposed to individual $\istyle{a}$ representing `false'.
		The interpretation $\Imc$ interprets concept $\cstyle{Var}$ and roles $\rstyle{clause}$, $\rstyle{pos}_j$, $\rstyle{neg}_j$ as specified in the ABox.
		The remaining predicates are interpreted as follows:
		\begin{align*}
			\cstyle{Bool}^\Imc := ~ & 
			\{ \istyle{a}, b \}
			\\
			\cstyle{False}^\Imc := ~ & 
			\{ \istyle{a} \} \cup \{ v_k \mid 1 \leq i \leq n, \nu(v_k) = 0 \} 
			\\
			\cstyle{True}^\Imc := ~ & 
			\{ b \} \cup \{ v_k \mid 1 \leq k \leq n, \nu(v_k) = 1 \} 
			\\
			\cstyle{val}^\Imc := ~ & 
			\{ (v_k, \istyle{a}) \mid 1 \leq k \leq n, \nu(v_k) = 0 \}  \cup
			\{ (v_k, b) \mid 1 \leq k \leq n, \nu(v_k) = 1 \} 
		\end{align*}
		It is immediate that $\istyle{a} \notin \cstyle{True}^\Imc$, and it now suffices to verify that $\omega(\Imc) = 1$.
		All assertions from $\abox_\phi$ are satisfied.
		Regarding axioms from $\tbox$, notice that $\istyle{a}$ violates the concept inclusion $\cstyle{Bool} \sqsubseteq \cstyle{True}$, which has cost $1$.
		The other axioms of $\tbox$ are all satisfied; in particular, since $\nu$ falsifies $\phi$, it falsifies all its clauses, and thus none of the $c_i$'s needs to satisfy the concept $\cstyle{True}$.\smallskip
		
		
		\noindent$(\Rightarrow)$.
		Assume $\phi$ is a tautology. We need to prove that every interpretation with cost at most $1$ satisfies $q$.
		Consider an interpretation $\Imc$ with cost at most $1$.
		Note that $\istyle{a} \in \cstyle{Bool}^\Imc$ as assertion $\cstyle{Bool}(\istyle{a})$ has infinite cost.
		If $\istyle{a} \in \cstyle{True}^\Imc$, then we are done.
		Otherwise, the cost of $\Imc$ is $1$ due to the violation of the concept inclusion $\cstyle{Bool} \sqsubseteq \cstyle{True}$ at domain element $\istyle{a}$.
		In particular, all other axioms of $\tbox$ are satisfied by $\Imc$ and we also have $\cstyle{Bool}^\Imc \setminus \{ \istyle{a}\} \subseteq \cstyle{True}^\Imc$.
		From $\Imc$, we define a valuation of variables $v_1, \dots, v_n$: : for each $1 \leq k \leq n$, we set $\nu(v_k) := 0$ if $(v_k, \istyle{a}) \in \rstyle{val}^\Imc$ (in which case $v_k \in (\exists \rstyle{val}.\cstyle{False})^\Imc$), and $\nu(v_k) = 1$ otherwise (in which case $v_k \in (\exists \rstyle{val}.\cstyle{True})^\Imc$).
		By assumption, $\phi$ is a tautology, thus this valuation makes $\phi$ true.
		Therefore, one of its clauses $c_i$ is satisfied by $\nu$, which guarantees $c_i \in \cstyle{True}^\Imc$ due to one of the eight long axioms (depending on the shape of $c_i$).
		Axiom $\exists \rstyle{clause}.\cstyle{True} \sqsubseteq \cstyle{True}$ being satisfied then entails $\istyle{a} \in \cstyle{True}^\Imc$, a contradiction.
	\end{proof}

	\subsection{Lower Bounds in the $\dllite$ Family}
	\label{subsection:lower-dllite}
	\begin{toappendix}
		\subsection*{Proofs for Section~\ref{subsection:lower-dllite} (Lower Bounds in the $\dllite$ Family)}
	\end{toappendix}
	
	We now turn to the $\dllite$ family, which inherits the upper bounds from Theorems~\ref{theorem:upper-bound-alchio-cq-varying} and \ref{theorem:upper-bound-alchio-cq-optimal-cost}.
	We begin with a proof that, when $k$ is allowed to vary, all considered reasoning tasks are $\NP$-hard (resp.\ $\coNP$-hard) already for $\dllitecore$ WKBs.

	\begin{theoremrep}
		\label{theorem:lower-bound-dllitecore-bcs-varying-k}
		BCS, IQA$_p$ and CQA$_p$ for $\dllitecore$ are $\NP$-hard.
		IQA$_c$ and CQA$_c$ for $\dllitecore$ WKBs are $\coNP$-hard.
	\end{theoremrep}
	
	Note that, by virtue of Lemmas~\ref{lemma:bcs-to-possible-iq} and \ref{lemma:possible-iq-to-certain-iq}, it suffices to prove that BCS for $\dllitecore$ is $\NP$-hard.
	
	\begin{proofsketch}
		We reduce from \tcol, that is deciding whether a given graph $\Gmc = (\Vmc, \Emc)$ is 3-colourable.
		All axioms of $\tbox$ are given infinite weight by $\omega$ and are as follows:
		\[
		\hspace*{-1.2ex}\begin{array}{r@{~}c@{~~}l}
			\exists s_i \sqcap \exists t_j & \sqsubseteq \bot & \text{for } s, t \in \{ \rstyle{r}, \rstyle{g}, \rstyle{b} \}, s \neq t, \text{ and } i, j \in \{ 1, 2 \} 
			\\[3pt]
			\exists {s}_1^- \sqcap \exists {s}_2^- & \sqsubseteq  \bot & \text{for } s \in \{ \rstyle{r}, \rstyle{g}, \rstyle{b} \} \\
		\end{array}	
		\]
		We choose an orientation $\Emc'$ of $\Emc$: for each $\{ u, v \} \in \Emc$, we add either $(u, v)$ or $(v, u)$ in $\Emc'$.
		For each $e = (u, v) \in \Emc'$ and each $s \in \{ \rstyle{r}, \rstyle{g}, \rstyle{b} \}$, we add $s_1(u, e)$ and $s_2(v, e)$ in the ABox $\abox_\Gmc$.
		Assertions in $\abox_\Gmc$ are given weight $1$ by $\omega$.
		It can
		be verified that $\Gmc \in \tcol$ iff
		$(\tbox, \abox_\Gmc)_\omega$ is $4\sizeof{\Emc}$-satisfiable.
	\end{proofsketch}
	
	\begin{proof}
		We reduce from \tcol, which is the problem of deciding whether a given graph $\Gmc = (\Vmc, \Emc)$ is 3-colourable.
		All axioms of $\tbox$ are given infinite weight by $\omega$ and are as follows:
		\[
		\begin{array}{r@{~}c@{~~~}l}
			\exists s_i \sqcap \exists t_j & \sqsubseteq \bot & \text{for } s, t \in \{ \rstyle{r}, \rstyle{g}, \rstyle{b} \}, s \neq t, \text{ and } i, j \in \{ 1, 2 \} 
			\\[3pt]
			\exists {s}_1^- \sqcap \exists {s}_2^- & \sqsubseteq  \bot & \text{for } s \in \{ \rstyle{r}, \rstyle{g}, \rstyle{b} \} \\
		\end{array}	
		\]
		We choose an orientation $\Emc'$ of $\Emc$, \emph{i.e.}\ for each $\{ u, v \} \in \Emc$, we add either $(u, v)$ or $(v, u)$ to $\Emc'$.
		For each $e = (u, v)$ in $\Emc'$ and each role name $s \in \{ \rstyle{r}, \rstyle{g}, \rstyle{b} \}$, we add in the ABox $\abox_\Gmc$ the assertions $s_1(u, e)$ and $s_2(v, e)$.
		The assertions in $\abox_\Gmc$ are all given weight $1$ by $\omega$.\medskip
		
		\noindent\textbf{Claim:} $\Gmc \in \tcol$ iff $(\tbox, \abox_\Gmc)_\omega$ is $4\sizeof{\Emc}$-satisfiable.\smallskip
		
		Notice that it is clear that every interpretation has cost at least $4 \sizeof{\Emc}$ as, for each edge $e \in \Emc'$, removing at least $4$ of the $6$ role assertions involving $e$ is necessary to avoid the infinite cost of the TBox axioms.\smallskip
		
		\noindent$(\Rightarrow)$.
		Assume $\Gmc$ is 3-colourable, that is, there exists a 3-colouring $\sigma : \Vmc \rightarrow \{ \rstyle{r}, \rstyle{g}, \rstyle{b} \}$.
		Consider the interpretation $\Imc$ that interprets, for $s \in \{ \rstyle{r}, \rstyle{g}, \rstyle{b} \}$ and $i \in \{ 1, 2 \}$, the role name $s_i$ as:
		\begin{align*}
			s_1^\Imc := ~ & \{ (u, e) \mid e = (u, v) \in \Emc' \text{ and } \sigma(u) = s \}
			\\
			s_2^\Imc := ~ & \{ (v, e) \mid e = (u, v) \in \Emc' \text{ and } \sigma(v) = s \} 
		\end{align*}
		Note that the cost of ABox violations of $\Imc$ is exactly $4 \sizeof{\Emc}$.
		Furthermore, $\sigma$ being a 3-colouring guarantees that $\Imc$ does not violate any TBox axiom.
		Therefore, $(\tbox, \abox_\Gmc)_\omega$ is $4\sizeof{\Emc}$-satisfiable.\smallskip
		
		\noindent$(\Leftarrow)$.
		Assume $(\tbox, \abox_\Gmc)_\omega$ is $4\sizeof{\Emc}$-satisfiable, that is, there exists an interpretation $\Imc$ with cost $\leq 4 \sizeof{\Emc}$, thus exactly $4 \sizeof{\Emc}$.
		We construct the mapping $\sigma : \Vmc \rightarrow \{ \rstyle{r}, \rstyle{g}, \rstyle{b} \}$ by setting $\sigma(v) = s$ iff $v \in (\exists s_1)^\Imc \cup (\exists s_2)^\Imc$.
		Notice that $\sigma$ is a well-defined function as $\Imc$ has cost exactly $4 \sizeof{\Emc}$.
		It is then readily verified that $\sigma$ is a 3-colouring of $\Gmc$, hence $\Gmc \in \tcol$.
	\end{proof}
	
	For optimal cost semantics, we establish a matching $\deltaptwo$ lower bound. 
	The proof adapts an existing construction from \cite[Proposition~6.2.4]{DBLP:phd/hal/Bourgaux16} that 
	establishes $\deltaptwo$-hardness of query entailment for DL-Lite KBs under preferred repair semantics,
	by reduction from deciding if a given variable is true in the lexicographically maximum truth assignment satisfying a given satisfiable CNF. 
	We point out that, unlike the other lower bounds listed in Table \ref{table:results}, this result crucially relies upon a binary encoding of weights, 
	intuitively because exponentially large weights are needed to perform lexicographic comparison of satisfying valuations. 
	
	\begin{theoremrep}
		\label{theorem:lower-bound-dllitecore-optcost}
		IQA$_p^{opt}$, IQA$_c^{opt}$, CQA$_p^{opt}$, and CQA$_c^{opt}$ for $\dllitecore$ are $\deltaptwo$-hard.
	\end{theoremrep}	
	
	\begin{proof}
		The proof is by reduction from the following $\deltaptwo$-hard problem \cite{Krentel88}: given a satisfiable propositional 3-CNF formula $\varphi= c_1 \wedge \ldots \wedge c_m$ over variables $x_1,\dots, x_n$ and given  $k\in\{1,\dots, n\}$, decide whether the lexicographically maximum truth assignment $\nu_{max}$ satisfying $\varphi$ with respect to the ordering $(x_1,\dots, x_n)$ is such that $\nu_{max}(x_k)=1$. The reduction closely follows a similar reduction 
		for querying DL-Lite KBs under preferred repair semantics \cite[Proposition~6.2.4]{DBLP:phd/hal/Bourgaux16}. 
		
		Given such a formula $\varphi$, we define a $\dllitecore$ WKB $\WKB$  as follows:
		\begin{align*}
			\A=&\{\rstyle{p}_\ell(c_j,x_i) \mid x_i \text{ is the $\ell$th literal of } c_j \}\cup\\
			&\{\rstyle{n}_\ell(c_j,x_i)  \mid \neg x_i \text{ is the $\ell$th literal of } c_j \}\cup\\
			& \{\rstyle{T}(x_i)\mid 1\leq i\leq n\}\\ 
			\T=&\{\exists \rstyle{n}_\ell^- \sqsubseteq \neg \rstyle{T} \mid 1 \leq \ell \leq 3\} \cup\\
			&\{ \exists \rstyle{p}_\ell \sqsubseteq \neg \exists \rstyle{n}_{\ell'}, \exists \rstyle{p}_\ell^- \sqsubseteq \neg \exists \rstyle{n}_{\ell'}^- \mid 1\leq \ell, \ell' \leq 3 \} \cup \\
			&\{ \exists \rstyle{p}_\ell \sqsubseteq \neg \exists \rstyle{p}_{\ell'}, \exists \rstyle{n}_\ell \sqsubseteq \neg \exists \rstyle{n}_{\ell'} \mid 1\leq \ell \neq \ell' \leq 3 \} 
		\end{align*}
		We set $\omega(\tau)=\infty$ for every $\tau\in\T$. The weights of the ABox assertions will be defined through the following prioritization $\mathcal{P}$
		which partitions $\Amc$ as follows: 
		\begin{itemize}
			\item $L_1=\Amc \setminus\{\rstyle{T}(x_i)\mid 1\leq i\leq n\} $
			\item  $L_p= \{\rstyle{T}(x_{p-1})\}$ for $1 < p \leq n+1$
		\end{itemize} 
		following the method given in Lemma~6.2.5 in \cite{DBLP:phd/hal/Bourgaux16}, which 
		in our setting yields:  $\omega(\alpha)=u^{n+1-p}$ for every $\alpha\in L_p$, where $u=3m+1$. 
		
		Let $\I$ be an interpretation with optimal cost. As the TBox assertions all have infinite weight, 
		we know that $\I \models \T$. Furthermore, 
		it follows from Proposition~3 of \citex{BBJKR24} 
		that the ABox $\Amc_\I = \{ \alpha \in \Amc \mid \Imc \models \alpha\}$ 
		corresponding to the assertions satisfied in $\I$
		is an $\leq_w$-repair, i.e.\ a subset of the ABox consistent with $\Tmc$ and maximal among $\Tmc$-consistent ABox subsets for the preorder defined by $ \A_1 \leq_\omega \A_2$ if $\sum_{\alpha \in \A_1}\omega_\alpha \leq \sum_{\alpha \in \A_2}\omega_\alpha$. 
		Moreover, as we have defined the weight of assertions following \cite[Lemma~6.2.5]{DBLP:phd/hal/Bourgaux16}
		we know that $\Amc_\I$ is a $\leq_P$-repair of the KB $\langle \Tmc, \Amc \rangle$ w.r.t.\ the prioritization $\mathcal{P}$. 
		Formally, this means that there does not exist another subset $\Amc' \subseteq \Amc$ and priority level $1 \leq h < n+3$
		such that (i) $\Amc'$ is  $\Tmc$-consistent, (ii) $|\Amc_\I \cap L_h| < |\Amc' \cap L_h| $, and (iii)
		$|\Amc_\I \cap L_h |=  |\Amc' \cap L_h |$ for all $1 \leq g <h$.
		
		It has been proven in \cite[Proposition~6.2.4]{DBLP:phd/hal/Bourgaux16}
		that for every $\leq_P$-repair $\Amc^* \subseteq \Amc$  
		it is the case that $\rstyle{T}(x_i) \in \Amc^*$ iff the lexicographically maximum truth assignment 
		$\nu_{max}$ satisfying $\varphi$ fulfils $\nu_{max}(x_i)=1$. 
		In particular, this is the case for any ABox $\Amc_\I$
		induced by an optimal-cost interpretation $\I$. 
		It follows that for every optimal-cost interpretation $\I$,  
		$\I\models \rstyle{T}(x_k)$ iff $\nu_{max}(x_k)=1$. 
		Hence $\wkb \sat{c}{opt} \rstyle{T}(x_k)$ iff $\wkb \sat{p}{opt} \rstyle{T}(x_k)$ iff $\nu_{max}(x_k)=1$. \qedhere

	\end{proof}
	
	We now move to the case in which $k$ is fixed.
	Notice indeed that the lower bound from Theorem~\ref{theorem:lower-bound-dllitecore-bcs-varying-k} strongly relies on a varying $k$.
	For the certain semantics, we show that 
	$\coNP$-hardness holds
	even if $k$ is fixed to $1$, if we consider CQs. 
	The proof is strongly inspired by the one of Theorem~\ref{theorem:lower-bound-el-certain-iq-fixed-cost-1}, especially with how to simulate the truth value assignment.
	The main difference is that we use acyclic CQs to circumvent the lack of nested concepts in $\dllitecore$.
	
	\begin{theoremrep}
		\label{theorem:lower-bound-dllitepos-certain-cq-fixed-cost-1}
		For every $k \geq 1$, CQA$_c^k$ for $\dllitecore$ is $\coNP$-hard.
		This holds already for connected acyclic BCQs and without concept disjointness axioms.
	\end{theoremrep}

	\begin{proof}
		We reduce from deciding whether a 3-DNF formula $\phi := \bigvee_{i = 1}^\ell \bigwedge_{j = 1}^3 l_{i, j}$, where each $l_{i, j}$ is a variable from $v_1, \dots, v_n$ or its negation, is a tautology.
		We set $p_{i, j} := 0$ and $u_{i, j} := v_k$ if $l_{i, j} = \lnot v_k$.
		Similarly, we set $p_{i, j} := 1$ and $u_{i, j} := v_k$ if $l_{i, j} = v_k$. 
		The budget $k$ is fixed to $1$.
		All axioms and assertions have infinite weight except for the following, having weight $1$:
		\[
		\cstyle{Bool} \sqsubseteq \cstyle{True}.
		\]
		The other axioms are:
		\begin{align*}
			\cstyle{Var} & \sqsubseteq \exists \rstyle{val}
			&
			\exists \rstyle{val}^- & \sqsubseteq \cstyle{Bool}
		\end{align*}
		The assertions are $\cstyle{False}(\istyle{a})$, $\cstyle{True}(\istyle{b})$, $\cstyle{Bool}(\istyle{a})$, $\cstyle{Bool}(\istyle{b})$, $\rstyle{val}(\istyle{a}, \istyle{a})$, $\rstyle{val}(\istyle{a}, \istyle{b})$, $\rstyle{val}(\istyle{d}, \istyle{a})$  and:
		\begin{align*}
			\cstyle{Var}(v_k) & \textrm{ for } 1 \leq k \leq n
			\\
			\rstyle{clause}_{s}(\istyle{a}_{s}, c_i) & \textrm{ for } 1 \leq i \leq \ell \text{ where } s := (p_{i, 1}, p_{i, 2}, p_{i, 3})
			\\
			\rstyle{clause}_{s}(\istyle{a}_{s'}, \istyle{a}) & \textrm{ for } s, s' \in \{ 0, 1 \}^3 \text{ with } s \neq s'
			\\
			\rstyle{clause}_{s}(\istyle{d}, \istyle{d}) & \textrm{ for } s \in \{ 0, 1 \}^3 \\
			\rstyle{lit}_{j, p_{i, j}}(c_i, u_{i, j}) & \textrm{ for } 1 \leq i \leq \ell, 1 \leq j \leq 3
			\\
			\rstyle{lit}_{j, s}(\istyle{a}, \istyle{a}) & \textrm{ for } 1 \leq j \leq 3, s \in \{ 0, 1 \}\\
			\rstyle{lit}_{j, s}(\istyle{d}, \istyle{d}) & \textrm{ for } 1 \leq j \leq 3, s \in \{ 0, 1 \}
		\end{align*}
		The connected and acyclic BCQ $q$ is the conjunction of $8$ subqueries $q_{s}(y)$ for $s \in \{ 0, 1 \}^3$ that all share the variable $y$:
		\[
		q := \exists y \bigwedge_{s \in \{ 0, 1\}^3} q_{s}(y).
		\]
		In turn, for each $s := (s_1, s_2, s_3) \in \{ 0, 1 \}^3$, the query $q_{s}(y)$ is defined as:
		\[\begin{array}{r}
			q_s(y) = ~ \displaystyle{\exists y_{s, 0}\ \exists y_{s, 1}\ \exists y_{s, 2}\ \exists y_{s, 3}\ \exists y_{s, 1}'\ \exists y_{s, 2}'\ \exists y_{s, 3}'\ 
				~ \rstyle{clause}_s(y, y_{s, 0}) 
				\land \bigwedge_{j = 1}^3 \big( \rstyle{lit}_{j, s_j}(y_{s, 0}, y_{s, j}) \land \rstyle{val}(y_{s, j}, y_{s, j}') \big)}
			\\[3pt]
			\displaystyle{\land \bigwedge_{\substack{ j \in \{ 1, 2, 3\} \\ s_j = 1}} \cstyle{True}(y_{s, j}') 
				\land \bigwedge_{\substack{ j \in \{ 1, 2, 3\} \\ s_j = 0}} \cstyle{False}(y_{s, j}').}
		\end{array}\]
		
		\noindent\textbf{Claim:}
		$\phi$ is a tautology iff $(\tbox, \abox_\phi)_\omega \models^1_c q$.\smallskip
		
		\noindent$(\Rightarrow)$.
		Assume $\phi$ is a tautology.
		Consider an interpretation $\Imc$ with cost at most $1$. First consider the case where $\istyle{a} \in \cstyle{True}^\Imc$. In this case, we can trivially satisfy $q$ by mapping all variables of the forms $y,y_{s, 0}, y_{s, j}$ to $\istyle{d}$ and all variables of the form $y_{s, j}'$ to~$\istyle{a}$. Indeed, the ABox (which must be satisfied since the cost is at most $1$ and ABox assertions have infinite weight) ensures that the role atoms are satisfied and that $\istyle{a}$ belongs to $\cstyle{False}$, and hence that $\istyle{a}$ can be used to satisfy both $\cstyle{True}$ and $\cstyle{False}$ query atoms. Next consider the case where $\istyle{a} \not \in \cstyle{True}^\Imc$. In this case,
the cost must be exactly $1$, via $\istyle{a}$ not being an instance of $\cstyle{True}$ despite $\istyle{a} \in \cstyle{Bool}^\Imc$ (as the corresponding assertion has infinite weight) and $\cstyle{Bool} \sqsubseteq \cstyle{True}$ (the violation of which costs $1$).
		Therefore, $(\exists \rstyle{val}^-)^\Imc \setminus \{ \istyle{a}\} \subseteq \cstyle{True}^\Imc$ as otherwise we would violate either $\exists \rstyle{val}^- \sqsubseteq \cstyle{Bool}$ or $\cstyle{Bool} \sqsubseteq \cstyle{True}$.
		We define the underlying valuation $\sigma : \{ v_1, \dots, v_n \} \rightarrow \{ 0, 1 \}$ by setting $\sigma(v_k) = 0$ if $(v_k, \istyle{a}) \in \rstyle{val}^\Imc$ and $\sigma(v_k) = 1$ otherwise.
		Notice that, in the second case, that is $(v_k, \istyle{a}) \notin \rstyle{val}^\Imc$, there exists at least an element $e \in \cstyle{True}^\Imc$ such that $(v_k, e) \in \rstyle{val}^\Imc$.
		For each such $v_k$, we choose one such $e$ and denote it $f(v_k)$.
		Since $\phi$ is a tautology we have $\sigma(\phi) = 1$, thus there exists a $i \in \{ 1, \dots, \ell \}$ such that $\sigma(c_i) = 1$.
		Let $s := (p_{i, 1}, p_{i, 2}, p_{i, 3})$.
		We define a mapping $h$ from variables of $q$ to $\Delta^\Imc$ as follows:
		\begin{align*}
			y \mapsto ~ & a_s
			\\
			y_{s, 0} \mapsto ~ & c_i
			\\
			y_{s, j} \mapsto ~ & u_{i, j}
			\\
			y_{s, j}' \mapsto ~ & \istyle{a} & & \text{ if } \sigma(u_{i, j}) = 0
			\\
			y_{s, j}' \mapsto ~ & f(u_{i, j}) & & \text{ if } \sigma(u_{i, j}) = 1
			\\
			y_{s', j} \mapsto ~ & \istyle{a} & & \text{ for } s' \neq s, j \in \{ 0, 1, 2, 3\}
			\\
			y_{s', j}' \mapsto ~ & \istyle{b} & & \text{ for } s' \neq s, j \in \{ 1, 2, 3\} \text{ s.t. } s'_j = 1
			\\
			y_{s', j}' \mapsto ~ & \istyle{a} & & \text{ for } s' \neq s, j \in \{ 1, 2, 3\} \text{ s.t. } s'_j = 0
		\end{align*}
		It is readily verified that $h$ is a homomorphism of $q$, thus $\Imc \models q$ as desired.\smallskip
		
		\noindent$(\Leftarrow)$.
		Assume $\phi$ is not a tautology.
		There exists a valuation $\nu$ of variables $v_1, \dots, v_n$ that makes all clauses false.
		Let $\nu'$ be the ``valuation'' of variables $v_1, \dots, v_n$ obtained by setting $\nu'(v) = \istyle{a}$ if $\nu(v) = 0$ and $\nu'(v) = \istyle{b}$ otherwise (in other words, just replace $0$ by $\istyle{a}$ and $1$ by $\istyle{b}$ in $\nu$).
		We define an interpretation $\Imc$ corresponding to that valuation $\nu$ by interpreting all predicates as in the ABox, except for the role name $\rstyle{val}$, that is interpreted as follows:
		\[
		\rstyle{val}^\Imc :=  \{ (\istyle{a}, \istyle{a}), (\istyle{a}, \istyle{b} ) \} \cup \{ (v_k, \nu'(v_k)) \mid 1 \leq k \leq n \}.
		\]
		It is immediate that the cost of $\Imc$ is $1$, due to $\istyle{a} \in \cstyle{Bool}^\Imc \setminus \cstyle{True}^\Imc$, violating the TBox axiom $\cstyle{Bool} \sqsubseteq \cstyle{True}$.
		It remains to verify that $\Imc \not\models q$.
		Assume by contradiction that there exists a homomorphism $h : q \rightarrow \Imc$.
		Due to the interpretation of the roles of the form $\rstyle{clause}_s$, the variable $y$ can only be mapped by $h$ on one of the individuals ${a}_s$ or on $\istyle{d}$.
		Note that the latter is excluded as $h(y) = \istyle{d}$ would require $h(y'_{(1, 1, 1), 1}) = \istyle{a}$, to respect the interpretation of $\rstyle{val}$, but $\istyle{a} \notin \cstyle{True}^\Imc$.
		Therefore, $h(y) = a_s$ for some $s \in \{ 0, 1 \}^3$.
		Then the variable $y_{s, 0}$ can only be mapped on some individual $c_i$ for some $1 \leq i \leq \ell$ and, in turn, variables $y_{s, j}$ are mapped to individuals $u_{i, j}$.
		However, as $\nu$ makes all clauses false, and in particular $c_i$, it is readily verified that the images for variables $y_{s, j}'$ cannot adhere to the interpretation $\rstyle{val}^\Imc$ defined above.
		Therefore, $\Imc \not\models q$, thus $(\tbox, \abox_\phi)_\omega \not\models^1_c q$.
	\end{proof}

	\section{Positive Results in the DL-Lite Family}
	\label{section:dl-lite}
	
	In light of the lower bounds established in the previous section, we can only hope to achieve tractability for the $\dllite$ family under a fixed cost (see Theorem~\ref{theorem:lower-bound-dllitecore-bcs-varying-k}).
	Furthermore, under the certain semantics, we established that CQA$_c^k$ answering is $\coNP$-hard (Theorem~\ref{theorem:lower-bound-dllitepos-certain-cq-fixed-cost-1}) already for $\dllitecore$ and $k=1$.
	This leaves us with two promising settings to explore: CQA$_p^k$ and IQA$^k_c$ for $\dllitecore$.
	This section establishes that both 
	reasoning tasks enjoy the lowest possible complexity, that is, $\tczero$. 
	Furthermore, we can even push this positive result 
	to one of the most expressive logics of the $\dllite$ family, namely $\dlliteboolh$.
	\begin{theorem}
		\label{theorem:upper-bound-dlliteboolh-cq-fixed}
		For every integer $k \geq 1$, CQA$_p^k$ and IQA$_c^k$ for $\dlliteboolh$ are in $\tczero$.
	\end{theorem}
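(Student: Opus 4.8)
The plan is to prove the statement for CQA$_p^k$; the case of IQA$_c^k$ then follows from the dual of the argument below --- ``no interpretation of cost at most $k$ falsifies the IQ'' --- together with closure of $\tczero$ under complement and under the FO-computable reductions in the spirit of Lemmas~\ref{lemma:bcs-to-possible-iq} and \ref{lemma:possible-iq-to-certain-iq} (in particular, IQ entailment can be reduced to bounded-cost (un)satisfiability of an enlarged $\dlliteboolh$ WKB in which the single IQ atom is forced to fail by a constant number of infinite-weight axioms and assertions).

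The starting observation is that $k$ and the weighted TBox $(\T,\omega)$ are fixed and all weights are $\geq 1$, so any interpretation $\I$ with $\cost(\I)\leq k$ satisfies every infinite-weight axiom and assertion, violates at most $k$ ABox assertions, and satisfies $\sum_{\tau\in\T}|\vio{\tau}{\I}|\leq k$, where the sum effectively ranges over the constantly many finite-weight axioms. Hence the ``defect'' of $\I$ is describable by a bounded amount of data: a multiset $S$ of at most $k$ violation tokens --- each token being either a finite-weight axiom of $\T$ tagged with a witnessing element (for a CI) or pair (for an RI), or an ABox assertion --- of total weight at most $k$. Since $\T$ and $k$ are fixed there are only constantly many abstract shapes of $S$ (which axioms appear and how often, which concept and role names occur among the violated assertions), and within a fixed shape the remaining information (the ABox individuals carrying the violated assertions, the ABox witnesses of the axiom violations) is a constant-length tuple of domain elements, hence something an FO formula can existentially quantify; the structure of any bounded fresh gadget that is used does not depend on $\A$ and is built into the formula.

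The technical heart is a canonical-witness lemma: $\wkb\sat{p}{k}q$ holds iff there is a defect set $S$ as above and a homomorphism from $q$ into the interpretation $\I_S$ built canonically from $S$ and $\A$ --- delete from $\I_\A$ the assertions declared violated by $S$, add the facts imposed by the tagged axiom violations, adjoin a bounded gadget of at most $|q|$ fresh elements carrying a homomorphic image of $q$ (needed only when $q$ cannot be matched into the ABox-derived part), chase under the positive inclusions of $\T$, and resolve the remaining disjunctions greedily. The crucial property is that in DL-Lite the chase introduces no violation of a negative inclusion that is not already forced at an ABox individual or gadget element, so all violations of $\I_S$ lie in a bounded, FO-accessible region and $\cost(\I_S)$ equals the weighted cardinality of $S$; conversely, any cost-$\leq k$ model of $\wkb$ satisfying $q$ folds into some $\I_S$ by a collapsing/extraction argument in the style of Lemma~\ref{lemma:brutal-fitration} applied to the relevant region. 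Proving this equivalence --- in particular confining TBox violations to a bounded region and showing that for $\dlliteboolh$ the greedy resolution of disjunctions wastes none of the budget --- is the main obstacle. The disjunctions are benign here precisely because for possible answers we may choose the model; this is exactly why CQA$_c^k$ stays $\coNP$-hard (Theorem~\ref{theorem:lower-bound-dllitepos-certain-cq-fixed-cost-1}) whereas CQA$_p^k$ drops to $\tczero$.

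Finally I would assemble the rewriting as a disjunction, over the constantly many abstract shapes of $S$, of FO formulas $\exists\bar y\,\Theta_S(\bar y)$, where $\bar y$ names the ABox individuals and pairs involved (carriers of the violated assertions, ABox witnesses of axiom violations, and the ABox elements hosting the match of $q$ when $q$ is matched into the ABox-derived rather than the gadget part). The formula $\Theta_S(\bar y)$ must (i) verify that $\bar y$ determines a legal, cost-$\leq k$ defect set; (ii) verify, by invoking the classical FO-rewriting of $\dlliteboolh$ KB satisfiability, that the structure obtained from $\I_\A$ by removing the $\bar y$-indexed assertions and adding the $S$- and $q$-imposed facts admits a model whose only TBox violations are those recorded in $S$; and (iii) check, via the standard DL-Lite query-rewriting machinery relativised to the positive part of that modified structure, that the canonical model $\I_S$ satisfies $q$. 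Each ingredient is FO and there are constantly many shapes, so the overall disjunction is an FO sentence; this places CQA$_p^k$ --- and, via the dual construction announced above, IQA$_c^k$ --- for $\dlliteboolh$ in $\tczero$.
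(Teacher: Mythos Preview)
Your plan—FO-rewrite by guessing a bounded ``defect'' and verifying the rest—is the right shape, and your reduction of IQA$_c^k$ to the complement of a BCS$^k$-style problem via fresh infinite-weight constraints is sound. But the CQA$_p^k$ argument has a real gap at exactly the point you yourself flag as ``the main obstacle,'' namely your canonical-witness lemma. You assert that $\cost(\I_S)$ equals the weight of $S$, i.e., that every ABox individual \emph{not} named by $\bar y$ can be interpreted without incurring further violations. This does not follow from the DL-Lite chase being well-behaved; the paper's example (just after the definition of $\precore(\abox)$) exhibits an individual $\istyle{b}_0$ with a perfectly common ABox type that is nevertheless \emph{forced} into a violation solely because it sits near an individual $\istyle{a}_0$ with a rare type—so guessing the $\leq k$ violating elements is not the same as guessing a bounded set whose complement can be handled uniformly. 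The missing ingredient is the structural work of Section~\ref{subsection:small}: the paper identifies a constant-size \emph{core}, consisting of individuals whose ABox type is realized at most $2k$ times together with their $\leadsto$-neighbours up to depth $k{+}1$, and proves (Lemma~\ref{lemma:main-lemma-for-dllite}) that any cost-$\leq k$ interpretation can be reshaped so that all violations lie in $\core(\abox)\cup W$, with every non-core individual assigned a ``safe'' $1$-type borrowed from a violation-free individual of the same (necessarily non-rare) ABox type. The rewriting then enumerates concrete interpretations of this constant-size region (\emph{strategies}) and uses a universal quantifier to check that every remaining individual has a safe type. Your proposal has no analogue of the rare-type argument.

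Two smaller steps also fail as written. Your step~(ii) reduces to classical $\dlliteboolh$ satisfiability of a modified KB internalising the prescribed violations; this works for a CI via a fresh marker concept $X$ and axioms $C\sqcap\neg X\sqsubseteq D$, $X\sqsubseteq C\sqcap\neg D$, but an RI $\rstyle{r}\sqsubseteq\rstyle{s}$ cannot be relaxed at a single pair in $\dlliteboolh$, which has no Boolean role constructors. And your $\I_S$ invokes a chase followed by ``greedy'' resolution of disjunctions, but $\dlliteboolh$ has no canonical model, and different ABox individuals may need different disjunctive choices to stay within budget. The paper sidesteps both issues by never reducing to classical satisfiability or invoking a chase: it enumerates interpretations of the bounded region directly, and the safe-type condition (Items~1--5 in the definition of p-safe) already bakes in closure under role inclusions.
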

	
	Our approach is based on first-order ($\fo$) rewriting: given the weighted TBox $\tbox_{\omega_\tbox}$, query $q$ and cost $k$, we construct an $\fo$-query $q'$ such that for every weighted ABox $\abox_{\omega_{\!\abox}}$, the following, here stated for CQA$_p^k$, holds:
	$$
	(\tbox, \abox)_{\omega_\tbox \cup \omega_{\!\abox}} \models^k_p q \quad \text{ iff } \quad \Imc_{\abox_{\omega_{\!\abox}}} \models q'. 
	$$
	To make this formulation fully precise, we need to define the FO-interpretation  $\Imc_{\abox_{\omega_{\!\abox}}}$
	associated with a weighted ABox $\abox_{\omega_{\!\abox}}$, over which the rewritten query $q'$ is evaluated. 	
	We argue that 
	$\abox_{\omega_{\!\abox}}$ can be seen as a usual ABox augmented with extra assertions about the weights. 
	We denote by $\abox^{\omega_{\!\abox}}_k$ 
	the extension of $\abox$ 
	with special concept and role assertions that encapsulate the relevant information about weights w.r.t.\ the fixed cost bound~$k$: if a concept assertion $\omega_{\!\abox}(\cstyle{A}(\istyle{a})) = n$, then we add the assertion $\cstyle{W}_\cstyle{A}^n(\istyle{a})$ if $n \leq k$, or assertion $\cstyle{W}_\cstyle{A}^\infty(\istyle{a})$ if $n > k$.
	We proceed similarly for each role assertion $\rstyle{r}(\istyle{a}, \istyle{b})$, adding respectively $\rstyle{w}_\rstyle{r}^n(\istyle{a}, \istyle{b})$ or $\rstyle{w}_\rstyle{r}^\infty(\istyle{a}, \istyle{b})$.
	Notice that we only need to introduce $(k+1)(\NC(\abox) + \NR(\abox))$ fresh predicates and that computing $\abox^{\omega_{\!\abox}}_k$ from any reasonable representation of $\abox_{\omega_{\!\abox}}$ can be seen as a pre-processing step achieved by an $\tczero$ transducer. 
	As $\abox^{\omega_{\!\abox}}_k$ is a usual ABox, its corresponding interpretation $\Imc_{\abox^{\omega_{\!\abox}}_k}$ is well defined and
	can serve as the desired interpretation 
	$\Imc_{\abox_{\omega_{\!\abox}}}$. 
	
	Before explaining how to construct $q'$, 
	we sketch the main argument that allows for such a rewriting to exist and that ensures completeness of the claim (\emph{i.e.}\ the $\Rightarrow$ direction in the formulation above).
	It relies on a (very!) small interpretation property: if an interpretation witnesses the desired behaviour w.r.t.\ the query and within the fixed cost, then there exists one that is completely trivial except on a small domain whose size is bounded by a \emph{constant} w.r.t.\ the input weighted ABox.
	The different possibilities to interpret such a constant-size domain can thus all be encapsulated in the rewritten query $q'$.
	To facilitate the understanding of the rewriting, we first present this small interpretation property.
	
	\subsection{A (Very) Small Interpretation Property}
	\label{subsection:small}
	\begin{toappendix}
		\subsection*{Proofs for Section~\ref{subsection:small} (A (Very) Small Interpretation Property)}
	\end{toappendix}

	\newcommand{\criticals}{\ensuremath{\mathsf{crit}}}
	
	Consider a WKB $\kb = (\tbox, \abox)_\omega$, a fixed cost $k$ and a BCQ~$q$.
	We introduce two distinct notions of types.
	The first is the usual one in DLs: a \emph{$1$-type} $t$ is a subset of $\NC(\tbox) \cup \{ \exists \rstyle{r} \mid \rstyle{r} \in \NRpm(\tbox) \}$.
	The $1$-type of an element $e \in \Delta^\Imc$ is $\type_\Imc(e) := \{ \cstyle{A} \mid e \in \cstyle{A}^\Imc \} \cup \{ \exists \rstyle{r} \mid e \in (\exists \rstyle{r})^\Imc, \rstyle{r} \in \NRpm \}$.
	This notion of $1$-type captures the basic $\dllite$ concepts, and thus, if two elements $d$ and $e$ have the same $1$-type, then they violate exactly the same $\dllitebool$ CIs.
	
	The second notion of type is intended to capture the concepts that hold due to the ABox assertions. 
	We define the \emph{ABox type} $\type_\abox(\istyle{a})$ of 
	$\istyle{a} \in \mathsf{Ind}(\abox)$ as:
	\[
		\type_\abox(\istyle{a}) := \type_{\Imc_\abox}(\istyle{a}) \cup \{ \existsmany \rstyle{r} \mid \abox \models \existsmany \rstyle{r}, \rstyle{r} \in \NRpm(\tbox) \},
	\]
	where $\abox \models \existsmany \rstyle{r}$ means that there exists (at least) $k+1$ distinct individuals $\istyle{b}_1, \dots, \istyle{b}_{k+1} \in \NI$ such that $\rstyle{r}(a, \istyle{b}_1), \dots, \rstyle{r}(a, \istyle{b}_{k+1}) \in \abox$ (or $\rstyle{r}(\istyle{b}_1, a), \dots, \rstyle{r}(\istyle{b}_{k+1}, a) \in \abox$ if $\rstyle{r}$ is an inverse role).
	Notice that there are at most $2^{\sizeof{\NC(\tbox)}+2\sizeof{\NR(\tbox)}}$ 
	possible $1$-types, and at most $2^{\sizeof{\NC(\tbox)}+4\sizeof{\NR(\tbox)}}$ possible ABox types. 
	
	We write $\rstyle{r} \sqsubseteq_\tbox \rstyle{s}$ if $(\rstyle{r}, \rstyle{s})$ is in the transitive closure of $\{ (\rstyle{p}, \rstyle{p}) \mid \rstyle{p} \in \NRpm \} \cup \{ (\rstyle{p}, \rstyle{q}) \in \NRpm \times \NRpm \mid \rstyle{p} \sqsubseteq \rstyle{q} \in \tbox \}$.
	The following observation motivates the extension of $1$-types into ABox types when a fixed cost $k$ is considered:
	\begin{lemmarep}
		\label{lemma:many-succ-in-abox-mean-at-least-one-in-an-interpretation}
		Consider an interpretation $\Imc$ whose cost is $\leq k$.
		For every individual $\istyle{a}$ and role $\rstyle{r} \in \NRpm$, if $\existsmany \rstyle{r} \in \type_\abox(\istyle{a})$,
		then $\exists \rstyle{s} \in \type_\Imc(\istyle{a})$ for every role $\rstyle{s} \in \NRpm$ such that $\rstyle{r} \sqsubseteq_\tbox \rstyle{s}$.
	\end{lemmarep}
	\begin{proof}
		By contradiction, assume that there exists a role $\rstyle{s}\in \NRpm$ such that $\exists \rstyle{s} \notin \type_\Imc(\istyle{a})$.
		From $\existsmany \rstyle{r} \in \type_\abox(\istyle{a})$, we obtain $k+1$ distinct elements $\istyle{b}_1, \dots, \istyle{b}_{k+1} \in \NI$ such that $\rstyle{r}(a, \istyle{b}_1), \dots, \rstyle{r}(a, \istyle{b}_{k+1}) \in \abox$ (we here treat the case of $\rstyle{r}$ not being an inverse role, the other case is similar).
		Note that for each such pair, we have $(a, \istyle{b}_i) \notin \rstyle{s}^\Imc$ as $\exists \rstyle{s} \notin \type_\Imc(\istyle{a})$.
		We now prove that each pair $(a, \istyle{b}_i)$ is involved in at least one binary violation, which will guarantee $k+1$ distinct violations, thus an overall cost for $\Imc$ that is $\geq k + 1$, which is the desired contradiction.
		We distinguish based on $(a, \istyle{b}_i) \in \rstyle{r}^\Imc$.
		If $(a, \istyle{b}_i) \notin \rstyle{r}^\Imc$, then this is a clear violation of the role assertion $\rstyle{r}(a, \istyle{b}_i)$.
		Else $(a, \istyle{b}_i) \in \rstyle{r}^\Imc$, and from $(a, \istyle{b}_i) \notin \rstyle{s}^\Imc$ we have $\rstyle{r} \neq \rstyle{s}$, thus at least one role inclusion used to witness $\rstyle{r} \sqsubseteq_\tbox \rstyle{s}$ is violated on $(a, \istyle{b}_i)$.
	\end{proof}
	
	We now prove that, if there exists an interpretation $\Imc$ whose cost is $\leq k$, then we find an interpretation $\Jmc$ that behaves as $\Imc$ w.r.t.\ the query $q$, and whose cost is also $\leq k$, with all violations concentrated in a predictable portion of its domain.
	This portion of the domain of $\Jmc$ is of course small, since the maximum number of violations is $k$, thus involving at most $2k$ distinct elements.
	By `predictable', we mean that these violations take place among a small number of special individuals (constant number with respect to $k$) that can easily be identified, and on a small set of additional domain elements. 
	To identify these special individuals, we rely on the following intuition: if an ABox type $t$ is realized more than $2k$ times in $\abox$, then there is a way to complete $t$ without any `local' violation.
	Indeed, if it was impossible to do so, then $t$ being realized more than $2k$ times would always result in more than $k$ violations and 
	thus in a cost exceeding $k$, contradicting the very existence of $\Imc$.
	Therefore, only individuals with a rare ABox type may require a special treatment to keep the cost less than $k$.
	Formally, we say that an ABox type $t$ is \emph{rare} in $\abox$ if $\#\{ \istyle{a} \in \mathsf{Ind}(\abox) \mid \type_\abox(\istyle{a}) = t \} \leq 2k$,
	and we use $\raretypes(\abox)$ 
	for the set of rare ABox types in $\abox$.
	
	Now, when we start from an interpretation $\Imc$ with cost $\leq k$ and attempt to build $\Jmc$, we preserve the interpretation of concepts and roles from $\Imc$ on those special individuals that have a rare ABox type. 
	For $\Jmc$ to behave like $\Imc$ with respect to the query $q$, we also preserve the interpretation on individuals occurring in $q$.
	We define the pre-core $\mathsf{pc}(\abox)$ as the set of individuals from $\abox$ that have a rare ABox type, plus those query-related individuals, that is:
	\[
	\precore(\abox) := \{ \istyle{a} \mid \type_\abox(\istyle{a}) \in \raretypes(\abox) \} \cup \individuals(q).
	\]
	Unfortunately, the pre-core does not contain 
	all the individuals that may be forced to participate in violations.
	As the following example illustrates, elements `close' to the pre-core may also be forced to do so.

	\begin{example}
		Consider the fixed cost $k := 3$ and the ABox
				\(
					\abox := \{ \cstyle{A}(\istyle{a}_0) \} \cup \bigcup_{i = 0}^7 \{ \rstyle{r}(\istyle{a}_i, \istyle{b}_i), \rstyle{t}(\istyle{b}_i, \istyle{c}_i) \}.
				\)
		The ABox type of $\istyle{a}_0$ is rare, 
		others are not.
		Consider the TBox $\tbox$ with axioms: 
		\[
		\cstyle{A} \sqsubseteq \exists \rstyle{u} \sqcap \lnot \exists \rstyle{s} 
		\quad 
		\exists \rstyle{u}^- \sqsubseteq \exists \rstyle{r}^- \sqcap \lnot \exists \rstyle{s}^-
		\quad 
		\exists \rstyle{t} \sqsubseteq \exists \rstyle{s}^- 
		\quad 
		\rstyle{r} \sqsubseteq \rstyle{s}
		\]
		and assign cost $1$ to all $\rstyle{t}$ assertions, cost $2$ to axiom $\rstyle{r} \sqsubseteq \rstyle{s}$, and infinite cost to other assertions and axioms.
		Every interpretation with cost $\leq 3$ violates the assertion $\rstyle{t}(\istyle{b}_0, \istyle{c}_0)$, which is somewhat surprising as both involved individuals have ABox types that can otherwise be instantiated in a way that does not violate anything.
		However, $\istyle{b}_0$ and $\istyle{c}_0$ happen to be `close', \emph{i.e.}\ at distance less than $k = 3$, to $\istyle{a}_0$. 
		The rare type of $\istyle{a}_0$ can then impact $\istyle{b}_0$ and $\istyle{c}_0$ as seen above.
	\end{example}
	
	To capture those individuals that may be affected by elements from the pre-core, we essentially explore the neighbourhood of the latter.
	For $\istyle{a}, \istyle{b} \in \individuals(\abox)$, we write $\istyle{a} \leadsto_1 \istyle{b}$ if there exists a role $\rstyle{r} \in \NRpm$  and an assertion $\rstyle{r}(\istyle{a}, \istyle{b})$ in $\abox$ and $\existsmany \rstyle{r} \notin \type_\abox(\istyle{a})$.
	When exploring neighbours, the reason we exclude roles $\rstyle{r}$ such that $\existsmany \rstyle{r} \in \type_\abox(\istyle{a})$ comes from Lemma~\ref{lemma:many-succ-in-abox-mean-at-least-one-in-an-interpretation}: it guarantees that element $\istyle{a}$ satisfies $\exists \rstyle{r}$, so the potential violations on $\istyle{a}$ do not impact the $\rstyle{r}$-edges to $\istyle{b}$.
	We then denote $\istyle{a} \leadsto_{i+1} \istyle{c}$ if there exists $\istyle{b}$ such that $\istyle{a} \leadsto_{i} \istyle{b}$ and $\istyle{b} \leadsto_{1} \istyle{c}$.
	The core of $\abox$, denoted $\core(\abox)$, is now defined as:
	\[
	\core(\abox) := \precore(\abox) \cup \{ \istyle{b} \mid \istyle{a} \leadsto_i \istyle{b}, \istyle{a} \in \precore(\abox), i \leq k + 1 \}.
	\]
	Notice that we stop the exploration of the neighbourhood of $\precore(\abox)$ at depth $k+1$.
	This is simply because the special behaviour of a pre-core element can only ``cascade'' to neighbours by enforcing a violation at each layer; thus, impacted elements cannot be further than $(k+1)$-away.
	
	We can now state our key technical lemma.
	Note that Points~5$_p$ and 5$_c$ are used respectively to handle the possible and certain semantics.
	Recall that Theorem~\ref{theorem:lower-bound-dllitepos-certain-cq-fixed-cost-1} established $\coNP$-hardness for CQA$_c^k$, which is why Point~5$_c$ only concerns the case where $q$ is an IQ.
	
	\begin{lemmarep}
		\label{lemma:main-lemma-for-dllite}
		Let $\kb = (\tbox, \abox)_\omega$ be a WKB, $q$ a BCQ, and $k$ a fixed cost.
		If there exists an interpretation $\Imc$ with cost $\leq k$, then there exists an interpretation $\Jmc$ such that:
		\begin{enumerate}[left= 5pt] 
			\item $\Delta^\Jmc = \individuals(\abox) \cup W$ for some $W \subseteq \{ w_t \mid t \text{ is a $1$-type} \}$; 
			\item $\Jmc\vert_{\precore(\abox)} = \Imc\vert_{\precore(\abox)}$;
			\item $\omega(\Jmc) = \omega(\Jmc\vert_{\core(\abox) \cup W})$;
			\item $\omega(\Jmc) \leq k$;
			\item[5$_p$.] If $\Imc \models q$, then $\Jmc\vert_{\core(\abox) \cup W} \models q$ (and thus $\Jmc \models q$);
			\item[5$_c$.] If $q$ is an IQ and $\Imc \not\models q$, then $\Jmc \not\models q$.
		\end{enumerate}
	\end{lemmarep}
	
\begin{toappendix}
	Figure~\ref{figure:small-interpretation-property} illustrates the construction underlying Lemma~\ref{lemma:main-lemma-for-dllite} and can be useful to follow its proof.
\end{toappendix}

	\begin{proof}
		Let $\kb = (\tbox, \abox)_\omega$ be a WKB, $q$ a Boolean instance query, and $k$ a fixed cost.
		Let $\Imc$ be an interpretation with cost $\leq k$. 
		We define the set $\criticals(\Imc)$ of  \emph{critical elements} of $\Imc$ 
		as the smallest set of elements from $\Delta^\Imc$ such that:
		\begin{itemize}
			\item $\precore(\abox) \subseteq \criticals(\Imc)$; and
			\item if $d \in \criticals(\Imc)$, $d \leadsto_1 e$ and $(d, e)$ is involved in a binary violation (\emph{i.e.}\ either a role inclusion or a role assertion), then $e \in \criticals(\Imc)$.
		\end{itemize}
		Notice that, as the cost of $\Imc$ has cost $\leq k$ and that each inductive step in the second item above involves at least one violation, it is clear that $\criticals(\Imc) \subseteq \core(\abox)$.
		We then set $$W := \{ w_t \mid t \in \type_\Imc(\Delta^\Imc \setminus \criticals(\Imc) ) \}$$ and define a mapping
		$\rho : \Delta^\Imc \rightarrow ~ \criticals(\Imc) \cup W$ as follows: 
		\begin{align*}
			d \mapsto ~ & \left\lbrace \begin{array}{ll}
				d & \text{if } d \in \criticals(\Imc)
				\\
				w_{\type_\Imc(d)} & \text{otherwise.}
			\end{array} \right.
		\end{align*}
		
		For each individual $\istyle{a} \in \individuals(\abox) \setminus \criticals(\Imc)$, notice that $\istyle{a} \notin \precore(\abox)$, thus the ABox type $t := \type_\abox(\istyle{a})$ is not rare.
		As the cost of $\Imc$ is $\leq k$, there must exist some individual $\istyle{b}$ such that $\type_\abox(\istyle{b}) = t$ and $\istyle{b}$ is not involved in any violation, that is all concept assertions and inclusions are satisfied by $\istyle{b}$ and for all $d \in \Delta^\Imc$, all role assertions and inclusions are satisfied by $(\istyle{b}, d)$ and at $(d, \istyle{b})$.
		For each such type $t$, we select one such `perfect' individual $p_t$.
		
		We define $\Jmc$ as the interpretation with domain $\individuals(\abox) \cup W$ 
		that interprets concept and role names 
		as follows: 
		\begin{align*}
			\cstyle{A}^\Jmc := ~ &
			\rho(\cstyle{A}^\Imc) \cup \{ \istyle{a} \in \individuals(\abox) \setminus \criticals(\Imc) \mid p_{\type_\abox(\istyle{a})} \in \cstyle{A}^\Imc \}
			\\
			\rstyle{r}^\Jmc := ~ &
			\rho(\rstyle{r}^\Imc) 
			\\
			&
			\cup \left\lbrace (\istyle{a}, \istyle{b}) \in \individuals(\abox)^2 \setminus \criticals(\Imc)^2 \middle| 
			\begin{array}{l} 
				\rstyle{s}(\istyle{a}, \istyle{b}) \in \abox \text{ for}
				\\
				\text{some } \rstyle{s} \sqsubseteq_\tbox \rstyle{r}
			\end{array}
			\right\rbrace
			\\
			&
			\cup \left\lbrace (\istyle{b}, \istyle{a}) \in \individuals(\abox)^2 \setminus \criticals(\Imc)^2 \middle| 
			\begin{array}{l} 
				\rstyle{s}(\istyle{a}, \istyle{b}) \in \abox \text{ for}
				\\
				\text{some } \rstyle{s} \sqsubseteq_\tbox \rstyle{r}^-
			\end{array}
			\right\rbrace
			\\
			&
			\cup \{ (\istyle{a}, \rho(e)) \mid \istyle{a} \in \individuals(\abox) \setminus \criticals(\Imc), (p_{\type_\abox(\istyle{a})}, e) \in \rstyle{r}^\Imc \}
			\\
			&
			\cup \{ (\rho(e), \istyle{a}) \mid \istyle{a} \in \individuals(\abox) \setminus \criticals(\Imc), (e, p_{\type_\abox(\istyle{a})}) \in \rstyle{r}^\Imc \}
		\end{align*}
		
			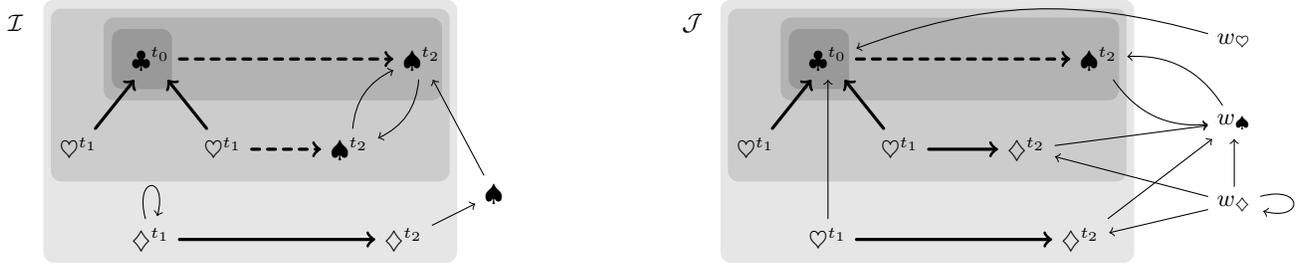
\begin{figure*}
			\centering
			\begin{tikzpicture}[every node/.append style={font=\small, scale=1}, xscale=1.2, yscale=1.2, line cap=round, line join=round]
				
				\node at (-1.5, .4) {$\Imc$};
				\node at (1.1, -.8) (rectind) [rectangle, fill=gray!20, minimum width=5.5cm, minimum height=3.5cm, rounded corners] {};
				\node at (1.1, -.4) (rectcore) [rectangle, fill=gray!40, minimum width=5.3cm, minimum height=2.3cm, rounded corners] {};
				\node at (1.35, 0) (rectcrit) [rectangle, fill=gray!60, minimum width=4.5cm, minimum height=1.1cm, rounded corners] {};
				\node at (-.1, 0) (rectpc) [rectangle, fill=gray!80, minimum width=.8cm, minimum height=.8cm, rounded corners] {};
				
				\node at ( 0, 0) (a) [] {$\clubsuit^{t_0}$};
				
				\node at (-.8, -1) (t11) [] {$\heartsuit^{t_1}$};
				\node at ( .8, -1) (t12) [] {$\heartsuit^{t_1}$}; 
				\node at ( 0, -2) (t13) [] {$\diamondsuit^{t_1}$};
				
				\node at (3, 0) (t21) [] {$\spadesuit^{t_2}$};
				\node at (2.2, -1) (t22) [] {$\spadesuit^{t_2}$};
				\node at (2.8, -2) (t23) [] {$\diamondsuit^{t_2}$}; 
				
				\node at (3.8, -1.5) (ano) [] {$\spadesuit$};

				\path[every edge/.append style={->}]
				(a) edge [dashed, very thick] (t21)
				(t12) edge [dashed, very thick] (t22)
				(t11) edge [very thick] (a)
				(t12) edge [very thick] (a)
				(t13) edge [very thick] (t23)
				(t13) edge [loop above] (t13)
				(t21) edge [bend left] (t22)
				(t22) edge [bend left] (t21)
				(t23) edge [] (ano)
				(ano) edge [] (t21)
				;

				\begin{scope}[shift={(7.5, 0)}]
					\node at (-1.5, .4) {$\Jmc$};
					\node at (1.1, -.8) (rectind) [rectangle, fill=gray!20, minimum width=5.5cm, minimum height=3.5cm, rounded corners] {};
					\node at (1.1, -.4) (rectcore) [rectangle, fill=gray!40, minimum width=5.3cm, minimum height=2.3cm, rounded corners] {};
					\node at (1.35, 0) (rectcrit) [rectangle, fill=gray!60, minimum width=4.5cm, minimum height=1.1cm, rounded corners] {};
					\node at (-.1, 0) (rectpc) [rectangle, fill=gray!80, minimum width=.8cm, minimum height=.8cm, rounded corners] {};
					
					\node at ( 0, 0) (a) [] {$\clubsuit^{t_0}$};
					
					\node at (-.8, -1) (t11) [] {$\heartsuit^{t_1}$};
					\node at ( .8, -1) (t12) [] {$\heartsuit^{t_1}$}; 
					\node at ( 0, -2) (t13) [] {$\heartsuit^{t_1}$};
					
					\node at (3, 0) (t21) [] {$\spadesuit^{t_2}$};
					\node at (2.2, -1) (t22) [] {$\diamondsuit^{t_2}$};
					\node at (2.8, -2) (t23) [] {$\diamondsuit^{t_2}$}; 
					
					\node at (4.5, .2) (wheart) [] {$w_\heartsuit$};
					\node at (4.5, -.7) (wspade) [] {$w_\spadesuit$};
					\node at (4.5, -1.6) (wdiamond) [] {$w_\diamondsuit$};

					\path[every edge/.append style={->}]
					(a) edge [dashed, very thick] (t21)
					(t12) edge [very thick] (t22)
					(t11) edge [very thick] (a)
					(t12) edge [very thick] (a)
					(t13) edge [very thick] (t23)
					(t13) edge [] (a)
					(wdiamond) edge [] (t22)
					(wdiamond) edge [] (t23)
					(wdiamond) edge [loop right] (wdiamond)
					(wdiamond) edge  (wspade)
					(wspade) edge [bend right] (t21)
					(t21) edge [bend right] (wspade)
					(t23) edge [] (wspade)
					(t22) edge [] (wspade)
					(wheart) edge [bend right=18] (a)
					;
				\end{scope}

			\end{tikzpicture}
			\caption{
				Two interpretations $\Imc$ and $\Jmc$ of the same WKB.
				Symbols $\heartsuit, \diamondsuit$ indicate $1$-types without unary violations; $\clubsuit, \spadesuit$ indicate $1$-types with unary violations.
				An arrow indicates the role $\rstyle{r}$; a bold arrow indicates a satisfied $\rstyle{r}$-role assertion; a dashed bold arrow indicates the violation of an $\rstyle{r}$-role assertion.
				Superscripts indicate the ABox type of an element; note that $\Imc$ features one anonymous element.
				We omit $k$ and pretend $t_0$ is rare, while $t_1$ and $t_2$ are not (alternatively, one could set $k \geq 6$ and consider sufficiently many other individuals with ABox types $t_1$ and $t_2$).
				Shades of gray, from darker to lighter, highlight the pre-core, the critical elements, the core, and the individuals.
				$\Jmc$ is obtained by applying Lemma~\ref{lemma:main-lemma-for-dllite} on $\Imc$, using the `perfect' individual with $1$-type $\heartsuit$ (resp.\ $\diamondsuit$) for ABox type $t_1$ (resp.\ $t_2$). 
				Note that the \emph{witness} $w_\spadesuit$ is given $1$-type $\spadesuit$ and thus carries some unary violations.
			}
			\label{figure:small-interpretation-property}
		\end{figure*}
		
		Most of the work is to prove that the realized $1$-types are indeed the expected ones, as shown by the next lemma: 
		\begin{lemma}
			\label{lemma:types-are-as-expected}
			For every $\istyle{a} \in \criticals(\Imc)$, we have $\type_\Jmc(\istyle{a}) = \type_\Imc(\istyle{a})$.
			For every $w_t \in W$, we have $\type_\Jmc(w_t) = t$.
			For every $\istyle{a} \in \individuals(\abox) \setminus \criticals(\Imc)$, we have $\type_\Jmc(\istyle{a}) = \type_\Imc(p_{\type_\abox(\istyle{a})})$.
		\end{lemma}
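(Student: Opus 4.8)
The goal is to compute, for each element of $\Delta^\Jmc = \individuals(\abox)\cup W$, which concept names and which $\exists\rstyle{r}$ (for $\rstyle{r}\in\NRpm$) it satisfies in $\Jmc$, and match this against the claimed type. First I record the behaviour of $\rho$: it is the identity on $\criticals(\Imc)$ and sends every other element $d$ to the fresh witness $w_{\type_\Imc(d)}\in W$. Hence the $\rho$-preimages are $\rho^{-1}(\istyle{a})=\{\istyle{a}\}$ for $\istyle{a}\in\criticals(\Imc)$, $\rho^{-1}(\istyle{a})=\emptyset$ for $\istyle{a}\in\individuals(\abox)\setminus\criticals(\Imc)$, and $\rho^{-1}(w_t)=\{d\notin\criticals(\Imc)\mid\type_\Imc(d)=t\}$, which is non-empty by the definition of $W$. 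The concept-name part of each of the three claimed equalities is then immediate from the definition of $\cstyle{A}^\Jmc$ together with these preimages: only $\istyle{a}$ maps to a critical $\istyle{a}$; exactly the elements realizing $t$ outside $\criticals(\Imc)$ map to $w_t$, and they all agree with $t$ on concept names; nothing maps to a non-critical individual $\istyle{a}$, whose only contribution in $\cstyle{A}^\Jmc$ is $p_{\type_\abox(\istyle{a})}$.

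It remains to handle $\exists\rstyle{r}$. For the ``only if'' direction of each claim we simply exhibit the required edge: if $\exists\rstyle{r}\in\type_\Imc(d)$ for the relevant preimage $d$ of the element (namely $d=\istyle{a}$ in the critical case, any $d\in\rho^{-1}(w_t)$ in the witness case, and $d=p_{\type_\abox(\istyle{a})}$ in the remaining case), then an outgoing $\rstyle{r}$-edge of $d$ in $\Imc$ yields one in $\Jmc$ through the $\rho(\rstyle{r}^\Imc)$-component (critical and witness elements) or through the fourth/fifth components built from the perfect individual (non-critical individuals), and incoming edges are symmetric. The substantial direction is the converse: an $\rstyle{r}^\Jmc$-edge incident to the element must originate from one of the five components of $\rstyle{r}^\Jmc$, and we trace each one back. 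Edges from $\rho(\rstyle{r}^\Imc)$ are routine from the preimage analysis (e.g.\ at a critical $\istyle{a}$ they force the $\Imc$-side endpoint to be $\istyle{a}$ itself, so they mirror $\Imc$; at $w_t$ they come from an element realizing $t$); edges from the fourth and fifth components either involve $p_{\type_\abox(\istyle{a})}$ directly (non-critical $\istyle{a}$) or force $\rho(e)=\istyle{a}$ hence $e=\istyle{a}$ (critical $\istyle{a}$, again mirroring $\Imc$), or are impossible at $w_t$. The only real work concerns an $\rstyle{r}^\Jmc$-edge incident to an element $\istyle{a}$ coming from one of the two ABox-induced components, i.e.\ an edge between ABox individuals $\istyle{a},f$ with some assertion $\rstyle{s}(\cdot,\cdot)\in\abox$ for a role $\rstyle{s}$ with $\rstyle{s}\sqsubseteq_\tbox\rstyle{r}$ (up to inverses) and $\{\istyle{a},f\}$ not a pair of critical elements.

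For this case I split on whether $\existsmany\rstyle{s}\in\type_\abox(\istyle{a})$. If it is, Lemma~\ref{lemma:many-succ-in-abox-mean-at-least-one-in-an-interpretation} gives $\exists\rstyle{r}\in\type_\Imc(\istyle{a})$ directly, since $\rstyle{s}\sqsubseteq_\tbox\rstyle{r}$. If it is not, the assertion witnesses $\istyle{a}\leadsto_1 f$, and there are two subcases. If $\istyle{a}\in\criticals(\Imc)$, then $f\notin\criticals(\Imc)$ by the ``not both critical'' condition, so the closure clause defining $\criticals(\Imc)$ forces the pair $\{\istyle{a},f\}$ not to be involved in any binary violation; hence $\Imc$ satisfies the assertion $\rstyle{s}(\cdot,\cdot)$ and every role inclusion along a chain witnessing $\rstyle{s}\sqsubseteq_\tbox\rstyle{r}$ on this pair, which yields the desired $\rstyle{r}$-edge at $\istyle{a}$ in $\Imc$. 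If instead $\istyle{a}\in\individuals(\abox)\setminus\criticals(\Imc)$, the assertion shows that $\exists\rstyle{s}$ (or $\exists\rstyle{s}^-$, according to the component and direction considered) lies in $\type_{\Imc_\abox}(\istyle{a})\subseteq\type_\abox(\istyle{a})=\type_\abox(p_{\type_\abox(\istyle{a})})$, so the perfect individual $p_{\type_\abox(\istyle{a})}$ has a corresponding ABox successor; since that individual participates in no violation of $\Imc$, the assertion and a role-inclusion chain to $\rstyle{r}$ are satisfied there, giving $\exists\rstyle{r}\in\type_\Imc(p_{\type_\abox(\istyle{a})})$ as required. Running the two ABox-induced components through the two edge directions with these subcases is the bulk of the argument; the main obstacle is keeping the inverse-role bookkeeping straight and correctly invoking the forward-closure definition of $\criticals(\Imc)$ in the critical subcase.
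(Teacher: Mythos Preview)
Your overall approach mirrors the paper's proof closely: both split on the three kinds of domain elements, dispatch concept names via the description of $\rho^{-1}$, and trace each $\rstyle{r}^\Jmc$-edge back through the five components of the definition of $\rstyle{r}^\Jmc$. Your treatment of the ABox-induced components (splitting on $\existsmany\rstyle{s}$, invoking Lemma~\ref{lemma:many-succ-in-abox-mean-at-least-one-in-an-interpretation}, and using the closure clause of $\criticals(\Imc)$) is exactly what the paper does, and your analysis of the non-critical individual case is in fact more explicit than the paper's one-line ``trivial''.

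There is, however, one genuine gap. You assert that edges from the fourth and fifth components ``are impossible at $w_t$''. This is false for the fifth component: the pair $(\rho(e),\istyle{a}')$ with $\istyle{a}'\in\individuals(\abox)\setminus\criticals(\Imc)$ can perfectly well have $\rho(e)=w_t$, namely whenever $e\notin\criticals(\Imc)$ and $\type_\Imc(e)=t$. Symmetrically, the fourth component can contribute an incoming edge to $w_t$ via the $\rho(e)$ slot, which matters when you consider $\exists\rstyle{r}$ for an inverse role $\rstyle{r}$. The paper explicitly handles this (its ``Case~5'' for $w_t$): from $\rho(e)=w_t$ one gets $\type_\Imc(e)=t$, and since $(e,p_{\type_\abox(\istyle{a}')})\in\rstyle{r}^\Imc$ this witnesses $\exists\rstyle{r}\in\type_\Imc(e)=t$ as required. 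The fix is thus entirely analogous to your critical-$\istyle{a}$ subcase (``force $\rho(e)=\istyle{a}$ hence $e=\istyle{a}$''), just with the conclusion $\type_\Imc(e)=t$ instead of $e=\istyle{a}$; but as written your case analysis for $w_t$ is incomplete.
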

		\begin{nestedproof}
			Consider $\istyle{a} \in \criticals(\Imc)$.
			Since, for every concept name $\cstyle{A}$ and role name $\rstyle{r}$, we have $\rho(\cstyle{A}^\Imc) \subseteq \cstyle{A}^\Jmc$ and $\rho(\rstyle{r}^\Imc) \subseteq \rstyle{r}^\Jmc$, and $\rho$ is the identity on $\criticals(\Imc)$, it is clear that $\type_\Imc(\istyle{a}) \subseteq \type_\Jmc(\istyle{a})$.
			If $\cstyle{A} \in \type_\Jmc(\istyle{a})$, it is also immediate that $\cstyle{A} \in \type_\Imc(\istyle{a})$.
			It remains to treat the case of $\exists \rstyle{r} \in \type_\Jmc(\istyle{a})$.
			Assume there exists $(\istyle{a}, d) \in \rstyle{r}^\Jmc$.
			We distinguish five cases based on the definition of $\rstyle{r}^\Jmc$.
			Case~4 is not applicable as $\istyle{a} \in \criticals(\Imc)$.
			In both Cases~1 and 5, we directly obtain a witness for $\istyle{a} \in (\exists \rstyle{r})^\Imc$, ensuring $\exists \rstyle{r} \in \type_\Imc(\istyle{a})$.
			We now turn to Case~2, that is $\rstyle{s}(\istyle{a}, d) \in \abox$ for some $\rstyle{s} \sqsubseteq_\tbox \rstyle{r}$.
			Note that $d \notin \criticals(\Imc)$, as otherwise we would have $(\istyle{a}, d) \in \criticals(\Imc)^2$ and Case~2 would not be applicable.
			If $\istyle{a} \leadsto_{1} d$, then from $d \notin \criticals(\Imc)$ we obtain that there are no binary violations on $(\istyle{a}, d)$, thus $(\istyle{a}, d) \in \rstyle{s}^\Imc \subseteq \rstyle{\Imc}$ and we are done.
			Otherwise $\istyle{a} \leadsto_{1} d$ is false, and in particular $\existsmany \rstyle{s} \in \type_\abox(\istyle{a})$.
			Therefore, $\istyle{a} \in (\exists \rstyle{r})^\Imc$ by Lemma~\ref{lemma:many-succ-in-abox-mean-at-least-one-in-an-interpretation}.		
			Case~3 is treated as Case~2.
			
			We now turn to the case of $w_t \in W$.
			Since, for every concept name $\cstyle{A}$ and role name $\rstyle{r}$, we have $\rho(\cstyle{A}^\Imc) \subseteq \cstyle{A}^\Jmc$ and $\rho(\rstyle{r}^\Imc) \subseteq \rstyle{r}^\Jmc$, and that there exists a least one element $d \in \Delta^\Imc \setminus \criticals(\Imc)$ such that $\type_\Imc(d) = t$, it is clear that $t \subseteq \type_\Jmc(w_t)$.
			If $\cstyle{A} \in \type_\Jmc(w_t)$, it is also immediate that $\cstyle{A} \in t$.
			It remains to treat the case of $\exists \rstyle{r} \in \type_\Jmc(w_t)$.
			Assume there exists $(w_t, d) \in \rstyle{r}^\Jmc$.
			We distinguish 5 cases based on the definition of $\rstyle{r}^\Jmc$.
			Cases~2, 3 and 4 are not applicable.
			Case~1 immediately provides a witness $d \in \Delta^\Imc$ with $\type_\Imc(d) = t$ and such that $d \in (\exists \rstyle{r})^\Imc$.
			We turn to Case~5, that is $d \in \individuals(\abox) \setminus \criticals(\Imc)$, and there exists $(e, p_{\type_\abox(d)}) \in \rstyle{r}^\Imc$ with $\rho(e) = w_t$.
			In particular, $\type_\Imc(e) = t$, thus $e \in (\exists \rstyle{r})^\Imc$, as witnesses by $p_{\type_\abox(d)}$, proves $\exists \rstyle{r} \in t$.

			The case of $\istyle{a} \in \individuals(\abox) \setminus \criticals(\Imc)$ is trivial, it suffices to recall that $\type_\abox(\istyle{a}) = \type_\abox(p_{\type_\abox(\istyle{a})})$ and that $p_{\type_\abox(\istyle{a})}$ is not involved in any violations.
		\end{nestedproof}
		
		Points~1 and 2 are trivially satisfied, by construction of~$\Jmc$.
		For Point~3, we prove a slightly stronger statement, namely that 
		the cost of $\Jmc$ equals the cost of $\Jmc\vert_{\criticals(\Imc) \cup W}$.
		Since $\criticals(\Imc) \subseteq \core(\abox)$, Point~3 then follows.
		To prove the stronger statement, we need to guarantee that no element from $\individuals(\abox) \setminus \criticals(\Imc)$ is involved in any violations.
		By virtue of the interpretation of such elements being dictated by the interpretation of the chosen elements $p_t$, which are not involved in any violations in $\Imc$, the claim will follow.
		Let $\istyle{a} \in \individuals(\abox) \setminus \criticals(\Imc)$.
		For a concept assertion $\cstyle{A}(\istyle{a}) \in \abox$, recall that $\istyle{a}$ and $p_{\type_\abox(\istyle{a})}$ have the same ABox type, thus $\cstyle{A}(p_{\type_\abox(\istyle{a})}) \in \abox$.
		By definition of $p_{\type_\abox(\istyle{a})}$, we have $p_{\type_\abox(\istyle{a})} \in \cstyle{A}^\Imc$, thus, by definition of $\cstyle{A}^\Jmc$, we obtain $\istyle{a} \in \cstyle{A}^\Jmc$.
		For a concept inclusion, it simply follows from Lemma~\ref{lemma:types-are-as-expected} and the definition of $p_{\type_\abox(\istyle{a})}$ not being involved in any violation.
		For a role assertion $\rstyle{r}(\istyle{a}, \istyle{b}) \in \abox$, note that we can directly use the definition of $\rstyle{r}^\Jmc$ to obtain $(\istyle{a}, \istyle{b}) \in \rstyle{r}^\Jmc$.
		For a role inclusion $\rstyle{r} \sqsubseteq \rstyle{s}$, we treat the case of $(\istyle{a}, d) \in \rstyle{r}^\Jmc \setminus \rstyle{s}^\Jmc$, the case of $(d, \istyle{a}) \in \rstyle{r}^\Jmc \setminus \rstyle{s}^\Jmc$ being symmetric.
		By definition of $\rstyle{r}^\Jmc$, we have two options: either $d \in \individuals(\abox)$ with $\rstyle{t}(\istyle{a}, d) \in \abox$ for some $\rstyle{t} \sqsubseteq_\tbox \rstyle{r}$, or $d = \rho(e)$ and $(p_{\type_\abox(\istyle{a})}, e) \in \rstyle{r}^\Imc$.
		We start with $d \in \individuals(\abox)$ with $\rstyle{t}(\istyle{a}, d) \in \abox$ for some $\rstyle{t} \sqsubseteq_\tbox \rstyle{r}$.
		In that case, we have $\rstyle{t} \sqsubseteq_\tbox \rstyle{s}$ since $\rstyle{t} \sqsubseteq_\tbox \rstyle{r}$ and $\rstyle{r} \sqsubseteq \rstyle{s} \in \tbox$, thus the definition of $\rstyle{s}^\Jmc$ directly gives $(\istyle{a}, d) \in \rstyle{s}^\Jmc$, which contradicts our assumption that $(\istyle{a}, d) \in \rstyle{r}^\Jmc \setminus \rstyle{s}^\Jmc$.
		It remains to turn to the case of $d = \rho(e)$ and $(p_{\type_\abox(\istyle{a})}, e) \in \rstyle{r}^\Imc$.
		By definition of $p_{\type_\abox(\istyle{a})}$ not being involved in any violation, we have $(p_{\type_\abox(\istyle{a})}, e) \in \rstyle{s}^\Imc$.
		Then the definition of $\rstyle{s}^\Jmc$ ensures $(\istyle{a}, \rho(e)) \in \rstyle{s}^\Jmc$, which again contradicts our assumption that $(\istyle{a}, d) \in \rstyle{r}^\Jmc \setminus \rstyle{s}^\Jmc$.
		
		For Point~4, with the above observation that $\Jmc$ equals the cost of $\Jmc\vert_{\criticals(\Imc) \cup W}$, it thus suffices to prove that the cost of $\Jmc\vert_{\criticals(\Imc) \cup W}$ is $\leq k$.
		To achieve this, we associate each violation in $\Jmc$ to a corresponding violation in $\Imc$ in an injective manner.
		Unary violations in $\Jmc\vert_{\criticals(\Imc)}$ are identical to those in $\Imc$, by Lemma~\ref{lemma:types-are-as-expected}.
		A unary violation on an element $w_t \in W$ can only be a concept inclusion violation, and thus is present on any element $d$ from $\Imc$ such that $\type_\Imc(d) = t$.
		Binary violations in $\Jmc\vert_{\criticals(\Imc)} \times \Jmc\vert_{\criticals(\Imc)}$ are identical to those in $\Imc$, by construction of $\Jmc$.
		A binary violation on $(w_{t_1}, w_{t_2}) \in W \times W$ can only be a role inclusion $\rstyle{r} \sqsubseteq \rstyle{s}$ violation.
		By definition of $\rstyle{r}^\Jmc$, there exists $(d_1, d_2) \in \rstyle{r}^\Imc$ with $d_1, d_2 \notin \criticals(\Imc)$, $\type_\Imc(d_1) = t_1$ and $\type_\Imc(d_2) = t_2$.
		It follows that $(d_1, d_2) \notin \rstyle{s}^\Imc$, which provides the desired witness for the role inclusion being violated already in $\Imc$.
		We now turn to the case of a binary violation on $(\istyle{a}, w_{t}) \in \criticals(\Imc) \times W$.
		First notice that it can again only be a violation of a role inclusion $\rstyle{r} \sqsubseteq \rstyle{s}$.
		By construction of $\rstyle{r}^\Jmc$, there exists $d \in \Delta^\Imc \setminus \criticals(\Imc)$ such that $(\istyle{a}, d) \in \rstyle{r}^\Imc$ and $\type_\Imc(d) = t$.
		It follows that $(\istyle{a}, d) \notin \rstyle{s}^\Imc$, which provides the desired witness for the role inclusion being violated already in $\Imc$.
		The case of a binary violation in $W \times \criticals(\Imc)$ is symmetric.
		
		Point~5$_p$ is immediate from $\rho(\Imc) \subseteq \Jmc$ and the fact that $\rho$ preserves the interpretation of all individuals occurring in $q$ as $\individuals(q) \subseteq \precore(\abox) \subseteq \criticals(\Imc)$.
		Point~5$_c$ is a direct consequence of Lemma~\ref{lemma:types-are-as-expected} and of $\individuals(q) \subseteq \precore(\abox)$.
	\end{proof}
	
	\subsection{
	Construction of the $\fo$-Rewriting}
	\label{subsection:rewriting}
	\begin{toappendix}
	\subsection*{Proofs for Section~\ref{subsection:rewriting} (Construction of the $\fo$-Rewriting)}
	\end{toappendix}
	
	Consider a $\dlliteboolh$ TBox $\tbox$, a weight function $\omega_\tbox$ for $\tbox$, fixed cost $k$, and BCQ $q$.
	We now proceed to the actual construction of the rewritten query $q'$.
	Notice that the pre-core always has size at most $P_{0} := 2k \times 2^{\sizeof{\NC(\tbox)}+4\sizeof{\NR(\tbox)}}  + \sizeof{q}$ (at most $2k$ copies of each rare ABox type, plus the individuals from the IQ).
	For each individual $\istyle{a}$, there exist at most $2k\sizeof{\NR(\tbox)}$ distinct individuals $\istyle{b}$ such that $\istyle{a} \leadsto_1 \istyle{b}$.
	Therefore, the size of every core, regardless of the specific ABox, is bounded by $P_0 \times (2k\sizeof{\NR(\tbox)})^k$.
	We let $M_{\tbox, q, k}$ be the above quantity plus the number of possible $1$-types, that is:
	\[ 
	M_{\tbox, q, k} := P_0 \times (2k\sizeof{\NR(\tbox)})^k + 2^{\sizeof{\NC(\tbox)}+2\sizeof{\NR(\tbox)}} 
	\]
	The number $M_{\tbox, q, k}$ gives an upper bound on the maximal size of the domain $\core(\abox) \cup W$ involved in the construction of interpretation $\Jmc$ in Lemma~\ref{lemma:main-lemma-for-dllite}.
	Note that Points~3, 5$_p$ and 5$_c$ also guarantee that 
	$\core(\abox) \cup W$ contains all the relevant information regarding violations and query satisfaction.
	
	The rewritten query $q'$ 
	considers each relevant interpretation $\Mmc$, up to isomorphism, whose domain $\Delta^\Mmc$ has size at most $M_{\tbox, q, k}$ and tries to match the `individual part' of $\Mmc$ (that is, the $\core(\abox)$ part, as opposed to the $W$ part) in the input $\abox_{\omega_{\!\abox}}$. 
	The rewriting must also check that each remaining individual from $\abox_{\omega_{\!\abox}}$ can be interpreted in a manner that is compatible w.r.t.\ the considered $\Mmc$, in the sense that it shall not introduce any violations (nor satisfy the query, in the case of the certain semantics).
	To achieve this, we specify along with $\Mmc$ its intended individual part as a subdomain $\Gamma \subseteq \Delta^\Mmc$ and the allowed ABox violations as a weighted ABox $\Vmc_\nu$.
	We call such a triple $(\Mmc, \Gamma, \Vmc_\nu)$ a \emph{strategy} for $(\tbox_{\omega_\tbox}, q, k)$ 
	if $\individuals(\Vmc) \subseteq \Gamma$ and $\omega_\tbox(\Mmc) + \sum_{\alpha \in \Vmc} \nu(\alpha) \leq k$.
	
	We differentiate between p-strategies, used for CQA$_p^k$, and c-strategies, used for IQA$_c^k$.
	A \emph{p-strategy} 
	is a strategy $(\Mmc, \Gamma, \Vmc_\nu)$ that additionally satisfies $\Mmc \models q$. 
	An ABox type $t$ is p-safe for a p-strategy $\sigma=(\Mmc, \Gamma, \Vmc_\nu)$ if there exists a $1$-type $t'$ such that:
	\begin{enumerate}
		\item for every $\cstyle{A} \in \NC(\tbox)$, if $\cstyle{A} \in t$, then $\cstyle{A} \in t'$;
		\item for every $\rstyle{r} \in \NRpm(\tbox)$, if $\exists \rstyle{r} \in t$, then $\exists \rstyle{r} \in t'$;
		\item $t'$ does not violate any CIs from $\tbox$;
		\item for every $\rstyle{r} \in \NRpm(\tbox)$, if $\exists \rstyle{r} \in t'$, then $\exists \rstyle{s} \in t'$ for every $\rstyle{s} \in \NRpm(\tbox)$ with $\rstyle{r} \sqsubseteq_\tbox \rstyle{s}$;
		\item for every $\rstyle{r} \in \NRpm(\tbox)$, if $\exists \rstyle{r} \in t'$, then there exists $d \in \Delta^\Mmc$ such that for every $\rstyle{s} \in \NRpm(\tbox)$ with $\rstyle{r} \sqsubseteq_\tbox \rstyle{s}$, we have $\exists \rstyle{s}^- \in \type_\Mmc(d)$.
	\end{enumerate}	
	
\noindent Recall that we need only to define c-strategies for the case where $q$ is an IQ. 
	We may assume w.l.o.g.\ that $q$ is a concept IQ\footnote{IQs of the form $\exists y\ \rstyle{r}(\istyle{a}, y)$ (resp.\ $\exists xy\ \rstyle{r}(x, y)$) can be handled by adding infinite-weight CIs  
		$\exists \rstyle{r} \sqsubseteq \cstyle{B}, \cstyle{B} \sqsubseteq \exists \rstyle{r}$ for a fresh concept name $\cstyle{B}$, 
		and using the IQ 
		$\cstyle{B}(\istyle{a})$ (resp.\ $\exists x. \cstyle{B}(x)$) instead. 
	}.
	A \emph{c-strategy} for $(\tbox_{\omega_\tbox}, q, k)$ is a strategy $\sigma = (\Mmc, \Gamma, \Vmc_\nu)$ such that 
	$\individuals(q) \subseteq \Gamma$ and $\Mmc \not\models q$, and an
	ABox type $t$ is c-safe for $\sigma$ if there exists a $1$-type $t'$ that satisfies 
	Conditions~1--5 of p-safe types, plus the following 
	condition:
	\begin{enumerate}
		\item[6.] if $q = \exists y\ \cstyle{A}(y)$, then $\cstyle{A} \notin t'$.
	\end{enumerate}
	
	The rewritten FO-query $q'$ is now obtained as the disjunction of subqueries $q_\sigma$, where each $\sigma$ is a p-strategy (resp.\ c-strategy) for $(\tbox_{\omega_\tbox}, q, k)$.
	Now, for a given $\sigma := (\Mmc, \Gamma, \Vmc_\nu)$, the subquery $q_\sigma$ uses one existentially quantified variable $v_d$ for each $d \in \Gamma$, and attempts to identify a subpart of the input weighted ABox $\abox_{\omega_\abox}$ that could be interpreted as $\Mmc\vert_{\Gamma}$, up to the violations described in $\Vmc_\nu$.
	For example, if $d \notin \cstyle{A}^\Mmc$ and $\cstyle{A}(d) \in \Vmc$, then $q_\sigma$ contains the following subquery $\cstyle{A}(v_d) \rightarrow \cstyle{W}_\cstyle{A}^{\nu(\cstyle{A}(d))}(v_d)$, enforcing that variable $v_d$ can only be mapped on an individual $\istyle{a}$ such that $\cstyle{A}(\istyle{a}) \in \abox$ if ${\omega_\abox}$ and $\nu$ agree on the cost of the assertion $\cstyle{A}(\istyle{a})$.
	Using a universally quantified variable, $q_\sigma$ also makes sure that all other individuals in $\abox$ have an ABox type that is p-safe (resp. c-safe) w.r.t.~$\sigma$.
	It is indeed clear, examining the very local requirements defining safe types that `having a safe p- (or c-) type' can be verified using an appropriate FO-subquery.

	In this manner, we can construct FO-queries $q'_p$ and $q'_c$, respectively based on p- and c-strategies, such that the following properties hold, concluding the proof of Theorem~\ref{theorem:upper-bound-dlliteboolh-cq-fixed}:
	\begin{toappendix}
	In what follows, we first clarify the full construction of the rewritten query, and then prove the correctness of the rewriting, stated in Lemma~\ref{lemma:rewriting}, to conclude the proof of Theorem~\ref{theorem:upper-bound-dlliteboolh-cq-fixed}.
	
	For a given p-strategy (resp.\ c-strategy) $\sigma := (\Mmc, \Gamma, \Vmc_\nu)$, we denote by $\Smc_\sigma$ the set of \emph{p-safe} ABox types for $\sigma$.
	As already outlined in the main body of the paper, the rewritten FO-query $q'$ is obtained as the disjunction of queries $q_\sigma$, where each $\sigma$ is a p-strategy (resp.\ c-strategy) for $(\tbox_{\omega_\tbox}, q, k)$.
	Now, for a given $\sigma := (\Mmc, \Gamma, \Vmc_\nu)$, the subquery $q_\sigma$ uses one variable $v_d$ for each $d \in \Gamma$; we use $\tuplev$ to refer to the tuple formed by all these variables and $V_\Gamma$ to refer to the set $\{ v_d \mid d \in \Gamma \}$.
	Recall that if the query $q_\sigma$ is to be satisfied, it must identify a subpart of the input ABox that could be interpreted as $\Mmc\vert_{\Gamma}$, up to some violations described in $\Vmc_\nu$.
	We describe the different subqueries which are used to define 
	$q_\sigma$. 
	First, we must verify that, in this pinpointed part of the ABox, every concept assertion that is unsatisfied in $\Mmc$ is a violation from $\Vmc$ with the expected cost.
	For a given concept name $\cstyle{A} \in \NC(\tbox)$ and $d \in \Gamma$, we let $q_{\sigma, \cstyle{A}, d}(v_d)$ be the following formula:
	\[
	\cstyle{A}(v_d) \land d \notin \cstyle{A}^\Mmc \rightarrow \cstyle{A}(d) \in \Vmc \land \cstyle{W}_\cstyle{A}^{\nu(\cstyle{A}(d))}(v_d),
	\]
	where $d \notin \cstyle{A}^\Mmc$ and $\cstyle{A}(d) \in \Vmc$ are Boolean expressions that are replaced by truth constants $\top$ or $\bot$ 
	in the above depending on $\sigma$.
	We shall use similar expressions in later subqueries without further mention. 
	We proceed similarly for role assertions: for a given role name $\rstyle{r} \in \NR(\tbox)$ and $d, e \in \Gamma$, we let $q_{\sigma, \rstyle{r}, d, e}(v_d, v_e)$ be the following formula:
	\[
	\rstyle{r}(v_d, v_e) \land (d, e) \notin \rstyle{r}^\Mmc \rightarrow \rstyle{r}(d, e) \in \Vmc \land \rstyle{w}_\rstyle{r}^{\nu(\rstyle{r}(d, e))}(v_d, v_e).
	\]
	We also make sure that the other individuals from the input ABox have an ABox type that is p-safe (resp. c-safe) w.r.t.~$\sigma$.
	It is clear that `variable $v$ maps onto an individual with ABox type $t$' can be captured as follows: 
	\begin{align*}
		q_{t}(v) := 
		& \bigwedge_{\cstyle{A} \in \NC(\tbox)} \cstyle{A}(v) \leftrightarrow \cstyle{A} \in t 
		 \quad \land \quad \bigwedge_{\rstyle{r} \in \NRpm(\tbox)} (\exists v'\ \rstyle{r}(v, v')) \leftrightarrow \exists \rstyle{r} \in t
		 \quad \land \quad \bigwedge_{\rstyle{r} \in \NRpm(\tbox)} (\exists_{> k} v'\ \rstyle{r}(v, v')) \leftrightarrow \existsmany \rstyle{r} \in t,
	\end{align*}
	where $\exists_{> k} v\ \phi(v)$ is the usual counting quantifier.
	It remains to check whether the minimal completion of each role assertion involving an individual in $\Mmc$ and one not involved in $\Mmc$ is consistent w.r.t.\ $\Mmc$.
	For $\rstyle{r} \in \NRpm(\tbox)$ and $d \in \Delta^\Mmc$, we define $q_{\sigma, \rstyle{r}, d}(\tuplev)$ as the following formula:
	\[
	\forall u \big( \rstyle{r}(v_d, u) \land \bigwedge_{e \in \Gamma} u \neq v_e \rightarrow \bigwedge_{\rstyle{r} \sqsubseteq_\tbox \rstyle{s}} \exists \rstyle{s} \in \type_\Mmc(d) )
	\]
	We now combine these subqueries to obtain 
	the query $q_\sigma$:
	\[
	q_\sigma :=~ \exists \tuplev
	\bigg(
	\bigwedge_{\substack{d, e \in \Gamma \\ d \neq e}} v_d \neq v_e
	~\land~ \bigwedge_{\substack{\cstyle{A} \in \NC \\ d \in \Gamma}} q_{\sigma, \cstyle{A}, d}(v_d)
	~\land~ \bigwedge_{\substack{\rstyle{r} \in \NR \\ d, e \in \Gamma}} q_{\sigma, \rstyle{r}, d, e}(v_d, v_e)
	~\land~ \forall v\ \bigg( \bigwedge_{d \in \Gamma} v \neq v_d \rightarrow \bigvee_{t \in \Smc_\sigma} q_t(v) \bigg)
	~\land~ \bigwedge_{\substack{\rstyle{r} \in \NR \\ d \in \Gamma}} q_{\sigma, \rstyle{r}, d}(\tuplev)
	\bigg)
	\]
	Notice the 
	first conjunction of inequalities simply enforces the SNA on ABox individuals.
	Now, as announced, we define:
	\[ q'_p := \bigvee_{\substack{\sigma \text{ is a p-strategy}
			\\
			\text{for } (\tbox, \omega_\tbox, k, q)}} q_\sigma
	\qquad 
	q'_c := \bigvee_{\substack{\sigma \text{ is a c-strategy}
			\\
			\text{for } (\tbox, \omega_\tbox, k, q)}} q_\sigma.
	\]


\end{toappendix}
	
	\begin{lemmarep}
		\label{lemma:rewriting}
		Let $\tbox_{\omega_\tbox}$ be a weighted $\dlliteboolh$ TBox, $k$ an integer, and $q$ a BCQ.
		For every weighted ABox $\abox_{\omega_{\!\abox}}$:
		\[
		(\tbox, \abox)_{\omega_\tbox \cup \omega_{\!\abox}} \models^k_p q \text{~ iff ~} \Imc_{\abox_{\omega_{\!\abox}}}\models q'_p.
		\]
		Furthermore, if $q$ is an IQ, then:
		\[
		(\tbox, \abox)_{\omega_\tbox \cup \omega_{\!\abox}} \not\models^k_c q \text{~ iff ~} \Imc_{\abox_{\omega_{\!\abox}}} \models q'_c.
		\]
	\end{lemmarep}
	
	\begin{proof}
		Most of the proof is common to both $q'_p$ and $q'_c$, typically when it comes down to building an interpretation with cost $\leq k$.
		We only differentiate the two cases when it is about verifying query (non-)entailment.
		We voluntarily write $q'$ indifferently, and consider strategies without specifying whether they are p-strategies or c-strategies.\medskip
		
		\noindent$(\Leftarrow)$. 
		Assume $\abox^{\omega_{\!\abox}} \models q'$. 
		Therefore, there exists a strategy $\sigma := (\Mmc, \Gamma, \Vmc_\nu)$ such that $\abox^{\omega_{\!\abox}} \models q_\sigma$.
		We denote $W := \Delta^\Mmc \setminus \Gamma$.
		By definition of $\abox^{\omega_{\!\abox}} \models q_\sigma$, there exists a mapping $h_0 : V_\Gamma \mapsto \individuals(\abox)$ that makes the rest of $q_\sigma$ true on $\abox^{\omega_{\!\abox}}$.
		In particular, from the first conjunct in $q_\sigma$, $h_0$ is injective.
		We set:
		\[
		\begin{array}{rcl}
			h : \Delta^\Mmc & \rightarrow & h_0(V_\Gamma) \cup W
			\\
			d & \mapsto & \left\{ \begin{array}{ll}
				h_0(v_d) & \text{if } d \in \Gamma
				\\
				d & \text{otherwise}
			\end{array}\right.
		\end{array}
		\]
		Notice that $h$ is injective, since $h_0$ is, and is thus a bijection.
		For each $t$ in $\Smc_\sigma$, we choose a $1$-type $t'$ whose existence is guaranteed by the definition of $t$ being safe, and we denote $\rho : t \mapsto t'$ this choice function over $\Smc_\sigma$.
		By Point~5 of the definition of safe types, if $\exists \rstyle{r} \in \bigcup_{t \in \Smc_\sigma} \rho(t)$, then there exists $d \in \Delta^\Mmc$ such that $d \in \exists \rstyle{s}^-$ for every $\rstyle{s} \in \NRpm$ such that $\rstyle{r} \sqsubseteq_\tbox \rstyle{s}$.
		We choose one such element $d_\rstyle{r}$ per $\exists \rstyle{r} \in \bigcup_{t \in \Smc_\sigma} \rho(t)$ and denote $w_\rstyle{r} := h(d_\rstyle{r})$.
		
		We now set $\Imc$ as the interpretation with domain $\individuals(\abox) \cup W$ and that interprets every concept name $\cstyle{A}$ and role name $\rstyle{r}$ as:
		\begin{align*}
			\cstyle{A}^\Imc := ~ &
			h(\cstyle{A}^\Mmc) \cup \{ \istyle{a} \in \individuals(\abox) \setminus h_0(V_\Gamma) \mid \cstyle{A} \in \rho(\type_\abox(\istyle{a})) \}
			\\
			\rstyle{r}^\Imc := ~ &
			h(\rstyle{r}^\Mmc) 
			\\
			&
			\cup \left\lbrace (\istyle{a}, \istyle{b}) \in \individuals(\abox)^2 \setminus h_0(V_\Gamma)^2 \middle| 
			\begin{array}{l} 
				\rstyle{s}(\istyle{a}, \istyle{b}) \in \abox \text{ for}
				\\
				\text{some } \rstyle{s} \sqsubseteq_\tbox \rstyle{r}
			\end{array}
			\right\rbrace
			\\
			&
			\cup \left\lbrace (\istyle{b}, \istyle{a}) \in \individuals(\abox)^2 \setminus h_0(V_\Gamma)^2 \middle| 
			\begin{array}{l} 
				\rstyle{s}(\istyle{a}, \istyle{b}) \in \abox \text{ for}
				\\
				\text{some } \rstyle{s} \sqsubseteq_\tbox \rstyle{r}^-
			\end{array}
			\right\rbrace
			\\
			&
			\cup \{ (\istyle{a}, w_\rstyle{s}) \mid \istyle{a} \in \individuals(\abox) \setminus h_0(V_\Gamma), \exists \rstyle{s} \in \rho(\type_\abox(\istyle{a})), \rstyle{s} \sqsubseteq_\tbox \rstyle{r} \}
			\\
			&
			\cup \{ (w_\rstyle{s}, \istyle{a}) \mid \istyle{a} \in \individuals(\abox) \setminus h_0(V_\Gamma), \exists \rstyle{s} \in \rho(\type_\abox(\istyle{a})), \rstyle{s} \sqsubseteq_\tbox \rstyle{r}^- \}
		\end{align*}
		Notice that the two last lines are well defined as $q_\sigma$ being satisfied guarantees that every $\istyle{a} \in \individuals(\abox) \setminus h_0(V_\Gamma)$ has a safe ABox type in $\abox$, thus $\rho$ is indeed defined on $\type_\abox(\istyle{a})$.
		
		Before verifying that $\Imc$ has the desired properties, we first establish the following lemma:
		\begin{lemma}
			\label{lemma:types-are-again-as-expected}
			For every $d \in \Delta^\Mmc$, we have $\type_\Imc(h(d)) = \type_\Mmc(d)$.
			For every $\istyle{a} \in \individuals(\abox) \setminus h_0(V_\Gamma)$, we have $\type_\Imc(\istyle{a}) = \rho(\type_\abox(\istyle{a}))$.
		\end{lemma}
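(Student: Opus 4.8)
The plan is to mirror the proof of Lemma~\ref{lemma:types-are-as-expected}: both identities are established by checking, element by element, membership of each concept name $\cstyle{A} \in \NC(\tbox)$ and of each existential restriction $\exists \rstyle{r}$ with $\rstyle{r} \in \NRpm(\tbox)$. Throughout I will use that $h$ is a bijection with range $h_0(V_\Gamma) \cup W$, that the witnesses in $W$ are fresh (so $h_0(V_\Gamma) \cup W$ is disjoint from $\individuals(\abox) \setminus h_0(V_\Gamma)$, the set over which the ``second components'' of the definitions of $\cstyle{A}^\Imc$ and $\rstyle{r}^\Imc$ range), that $\abox^{\omega_{\!\abox}} \models q_\sigma$ entails that all subqueries $q_{\sigma,\cstyle{A},d}$, $q_{\sigma,\rstyle{r},d,e}$, $q_{\sigma,\rstyle{r},d}$ hold, and that every individual of $\abox$ outside $h_0(V_\Gamma)$ realises an ABox type in $\Smc_\sigma$ (so $\rho$ is defined on it).

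For the first identity, fix $d \in \Delta^\Mmc$. The inclusion $\type_\Mmc(d) \subseteq \type_\Imc(h(d))$ is immediate since $\cstyle{A}^\Imc \supseteq h(\cstyle{A}^\Mmc)$ and $\rstyle{r}^\Imc \supseteq h(\rstyle{r}^\Mmc)$ for every concept and role name (an inverse role being handled via $h(\rstyle{p}^\Mmc) \subseteq \rstyle{p}^\Imc$). For the converse, a concept name of $\type_\Imc(h(d))$ can only stem from $h(\cstyle{A}^\Mmc)$ by the disjointness above, and injectivity of $h$ then yields $\cstyle{A} \in \type_\Mmc(d)$. For $\exists \rstyle{r}$, I case on which of the five components of $\rstyle{r}^\Imc$ (equivalently of $(\rstyle{r}^-)^\Imc$) contributes an edge incident to $h(d)$: component~1 reflects, by injectivity, an edge of $\Mmc$ incident to $d$; components~2--3 can touch $h(d)$ only when $h(d)=h_0(v_d)$ with $d \in \Gamma$, in which case the other endpoint lies outside $h_0(V_\Gamma)$ and the relevant satisfied subquery $q_{\sigma,\rstyle{s},d}(\tuplev)$ (ranging over $\rstyle{s} \in \NRpm(\tbox)$, which handles both outgoing and incoming edges) forces $\exists \rstyle{t} \in \type_\Mmc(d)$ for all $\rstyle{t}$ with $\rstyle{s} \sqsubseteq_\tbox \rstyle{t}$, hence $\exists \rstyle{r} \in \type_\Mmc(d)$; components~4--5 incident to $h(d) \in W$ force $h(d)$ to be some witness $w_{\rstyle{s}} = h(d_{\rstyle{s}})$, and the defining property of $d_{\rstyle{s}}$ (it satisfies $\exists \rstyle{t}^-$ for every $\rstyle{t}$ with $\rstyle{s} \sqsubseteq_\tbox \rstyle{t}$) again gives $\exists \rstyle{r} \in \type_\Mmc(d_{\rstyle{s}}) = \type_\Mmc(d)$.

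For the second identity, fix $\istyle{a} \in \individuals(\abox) \setminus h_0(V_\Gamma)$, and set $t := \type_\abox(\istyle{a}) \in \Smc_\sigma$ and $t' := \rho(t)$, a $1$-type satisfying Conditions~1--5 of safe types. For concept names the identity unfolds directly from the second component of $\cstyle{A}^\Imc$ (the first being disjoint from $\istyle{a}$). For $\exists \rstyle{r}$: if $\exists \rstyle{r} \in t'$, Condition~5 supplies the witness $d_{\rstyle{r}}$, and instantiating component~4 (resp.\ component~5, for inverse $\rstyle{r}$) of $\rstyle{r}^\Imc$ with $\rstyle{s} := \rstyle{r}$ produces the edge between $\istyle{a}$ and $w_{\rstyle{r}}$, so $\exists \rstyle{r} \in \type_\Imc(\istyle{a})$; conversely, an edge of $\rstyle{r}^\Imc$ incident to $\istyle{a}$ cannot arise from component~1 (disjointness), if it arises from component~2 or 3 it is witnessed by an assertion $\rstyle{s}(\cdot,\cdot) \in \abox$ touching $\istyle{a}$ with $\exists \rstyle{s}$ (resp.\ $\exists \rstyle{s}^-$) in $\type_\abox(\istyle{a}) = t$ and $\rstyle{s}$ below $\rstyle{r}$ (resp.\ $\rstyle{r}^-$) in $\sqsubseteq_\tbox$, so Conditions~2 and 4 give $\exists \rstyle{r} \in t'$, and if it arises from component~4 or 5 it encodes directly some $\exists \rstyle{s} \in t'$ with $\rstyle{s} \sqsubseteq_\tbox \rstyle{r}$, so Condition~4 again gives $\exists \rstyle{r} \in t'$.

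I expect the main obstacle to be this five-way case analysis for existential restrictions, where one must keep careful track, across the role hierarchy $\sqsubseteq_\tbox$ and the passage between a role name and its inverse (the ABox and $\cdot^\Imc$ mentioning only role names), that each way an edge can enter $\Imc$ at $h(d)$ or at $\istyle{a}$ is licensed either by an edge already present in $\Mmc$, by a satisfied $q_{\sigma,\cdot,\cdot}$ subquery, or by the closure Conditions~4--5 on safe types. Everything else is a routine unfolding of the definitions of $\cstyle{A}^\Imc$, $\rstyle{r}^\Imc$ and $h$, exactly as in the proof of Lemma~\ref{lemma:types-are-as-expected}.
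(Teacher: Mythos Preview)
Your proposal is correct and follows essentially the same route as the paper: both proofs establish the two inclusions by a five-way case split on the components of $\rstyle{r}^\Imc$, invoking the satisfied subqueries $q_{\sigma,\rstyle{s},d}$ for components~2--3, the defining property of the witness elements $d_{\rstyle{s}}$ for component~5, and Conditions~2 and~4 of safe types for the non-core individuals. One small imprecision: when you write ``components~4--5 incident to $h(d)\in W$'', note that $w_{\rstyle{s}}=h(d_{\rstyle{s}})$ may well lie in $h_0(V_\Gamma)$ rather than $W$ (since $d_{\rstyle{s}}$ can belong to $\Gamma$); fortunately your subsequent reasoning (``$h(d)=w_{\rstyle{s}}=h(d_{\rstyle{s}})$, hence $d=d_{\rstyle{s}}$ by injectivity'') does not depend on this restriction, so the argument goes through unchanged once you drop the stray ``$\in W$''.
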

		\begin{nestedproof}
			Let $d \in \Delta^\Mmc$.
			The inclusion $\type_\Mmc(d) \subseteq \type_\Imc(h(d))$ is trivial since $h(\Mmc) \subseteq \Imc$.
			We turn to the inclusion $\type_\Imc(h(d)) \subseteq \type_\Mmc(d)$.
			Consider a concept name $\cstyle{A} \in \type_\Imc(h(d))$.
			If $h(d) \in h_0(V_\Gamma)$, then by definition of $\cstyle{A}^\Imc$, we have some $e \in \Delta^\Mmc$ such that $h(e) = h(d)$ and $e \in \cstyle{A}^\Mmc$.
			Recall that $h$ is injective, thus $d = e$.
			It follows that $\cstyle{A} \in \type_\Mmc(d)$.
			Consider a role $\rstyle{r} \in \NRpm(\tbox)$ such that $\exists \rstyle{r} \in \type_\Imc(h(d))$, that is we have some $e \in \Delta^\Imc$ such that $(h(d), e) \in \rstyle{r}^\Imc$.
			We distinguish based on the definition of $\rstyle{r}^\Imc$.
			\begin{itemize}
				\item
				If $(h(d), e) \in h(\rstyle{r}^\Mmc)$, then we are done.
				\item 
				If $(h(d), e) \in \individuals(\abox)^2 \setminus h_0(V_\Gamma)^2$ and $\rstyle{s}(h(d), e) \in \abox$ for some $\rstyle{s} \sqsubseteq_\tbox \rstyle{r}$, then $h(d) \in {h_0}(V_\Gamma)$ and $e \in \individuals(\abox) \setminus h_0(V_\Gamma)$.
				Thus from $q_{\sigma, \rstyle{s}, d}$ being true for $u := e$, we obtain $\exists \rstyle{r} \in \type_\Mmc(d)$ as desired.
				\item 
				If $(e, h(d)) \in \individuals(\abox)^2 \setminus h_0(V_\Gamma)^2$ and $\rstyle{s}(h(d), e) \in \abox$ for some $\rstyle{s} \sqsubseteq_\tbox \rstyle{r}^-$, then $h(d) \in {h_0}(V_\Gamma)$ and $e \in \individuals(\abox) \setminus h_0(V_\Gamma)$.
				Thus from $q_{\sigma, \rstyle{s}^-, d}$ being true for $u := e$, we again obtain $\exists \rstyle{r} \in \type_\Mmc(d)$ as desired.
				\item 
				Note that $h(d) \in \individuals(\abox) \setminus h_0(V_\Gamma)$ is impossible.
				\item 
				Otherwise $h(d) = w_\rstyle{s}$ for some $\rstyle{s} \sqsubseteq_\tbox \rstyle{r}^-$ such that there is $\istyle{a} \in \individuals(\abox) \setminus h_0(V_\Gamma)$ with $\exists \rstyle{s} \in \rho(\type_\abox(\istyle{a}))$.
				From the definition of $w_\rstyle{s}$ we obtain $\exists \rstyle{t}^- \in \type_\Mmc(d)$ for every $\rstyle{s} \sqsubseteq_\tbox \rstyle{t}$.
				In particular for $\rstyle{t} := \rstyle{r}^-$, we get $\exists \rstyle{r} \in \type_\Mmc(d)$.
			\end{itemize}
			In all the above cases, we obtained $\exists \rstyle{r} \in \type_\Mmc(d)$, thus $\type_\Imc(h(d)) \subseteq \type_\Mmc(d)$ which concludes this part of the proof.
			
			Let $\istyle{a} \in \individuals(\abox) \setminus h_0(V_\Gamma)$.
			The inclusion $\rho(\type_\abox(\istyle{a})) \subseteq \type_\Imc(\istyle{a})$ is trivial by construction of $\Imc$ (notice that, for the case of $\exists \rstyle{r} \in \rho(\type_\abox(\istyle{a}))$, the interpretation provides the $\rstyle{r}$-edge $(\istyle{a}, w_\rstyle{r})$).
			We turn to the inclusion $\type_\Imc(\istyle{a}) \subseteq \rho(\type_\abox(\istyle{a}))$.
			Consider a concept name $\cstyle{A} \in \type_\Imc(\istyle{a})$.
			Then by definition of $\cstyle{A}^\Imc$, we have $\cstyle{A} \in \rho(\type_\abox(\istyle{a}))$ as desired.
			Consider a role $\rstyle{r} \in \NRpm(\tbox)$ such that $\exists \rstyle{r} \in \type_\Imc(\istyle{a})$, that is, we have some $e \in \Delta^\Imc$ such that $(\istyle{a}, e) \in \rstyle{r}^\Imc$.
			We distinguish based on the definition of $\rstyle{r}^\Imc$.
			\begin{itemize}
				\item
				From $\istyle{a} \notin {h_0}(V_\Gamma)$, we cannot have $(\istyle{a}, e) \in h(\rstyle{\Mmc})$. 
				\item 
				If $(\istyle{a} , e) \in \individuals(\abox)^2 \setminus h_0(V_\Gamma)^2$ and $\rstyle{s}(\istyle{a} , e) \in \abox$ for some $\rstyle{s} \sqsubseteq_\tbox \rstyle{r}$, then $\exists \rstyle{s} \in \type_\abox(\istyle{a})$.
				Since $\type_\abox(\istyle{a})$ is a safe type (from $q_\sigma$ being satisfied), Points~2 and 4 in the definition of safe-types, guarantees that $\exists \rstyle{r} \in \rho(\type_\abox(\istyle{a}))$ as desired.
				\item 
				If $(e, \istyle{a}) \in \individuals(\abox)^2 \setminus h_0(V_\Gamma)^2$ and $\rstyle{s}(e, \istyle{a}) \in \abox$ for some $\rstyle{s} \sqsubseteq_\tbox \rstyle{r}^-$, then $\exists \rstyle{s}^- \in \type_\abox(\istyle{a})$.
				Since $\type_\abox(\istyle{a})$ is a safe type (from $q_\sigma$ being satisfied), Points~2 and 4 in the definition of safe-types, guarantees that $\exists \rstyle{r} \in \rho(\type_\abox(\istyle{a}))$ as desired.
				\item 
				If $e = w_\rstyle{s}$ for some $\rstyle{s} \sqsubseteq_\tbox \rstyle{r}$ such that $\exists \rstyle{s} \in \rho(\type_\abox(\istyle{a}))$, then Point~4 in the definition of safe types yields $\exists \rstyle{r} \in \rho(\type_\abox(\istyle{a}))$.
				\item 
				Note that $\istyle{a}$ cannot be a witness $w_\rstyle{s}$ as those are not individuals.
			\end{itemize}
			This concludes the proof of $\type_\Imc(\istyle{a}) \subseteq \rho(\type_\abox(\istyle{a}))$.
		\end{nestedproof}
		
		We now verify that $\Imc$ is an interpretation with cost $\leq k$.
		First consider a concept assertion $\cstyle{C}(\istyle{a}) \in \abox$ violated in $\Imc$.
		If $\istyle{a} \in \individuals(\abox) \setminus h_0(V_\Gamma)$, then $\type_\abox(\istyle{a})$ is a safe type, from $q_\sigma$ being satisfied, and thus $\istyle{a} \in \cstyle{A}^\Imc$ by definition of $\cstyle{A}^\Imc$ and Point~1 in the definition of safe types; this contradicts $\cstyle{C}(\istyle{a})$ being violated in $\Imc$.
		Otherwise $\istyle{a} \in h_0(V_\Gamma)$, so there exists $d \in \Gamma$ such that $h_0(v_d) = \istyle{a}$.
		From $\cstyle{A}(\istyle{a})$ being violated, we have $d \notin \cstyle{A}^\Mmc$, thus from $q_{\sigma, \cstyle{A}, d}$ being satisfied we obtain $\cstyle{A}(\istyle{d}) \in \Vmc$ and $\cstyle{W}_\cstyle{A}^{\nu(\cstyle{A}(d))}(\istyle{a})$, thus the weight of $\cstyle{A}(\istyle{a})$ is accounted for in $\Vmc_\nu$.
		
		We proceed similarly for a role assertion $\rstyle{r}(\istyle{a}, \istyle{b}) \in \abox$.
		It is satisfied if $(\istyle{a}, \istyle{b}) \in \individuals(\abox)^2 \setminus h_0(V_\Gamma)^2$ by definition of $\rstyle{r}^\Imc$.
		If $(\istyle{a}, \istyle{b}) \in h_0(V_\Gamma)^2$, then the cost is accounted for in $\Vmc_\nu$, this time using the corresponding subquery $q_{\sigma, \rstyle{r}, h^{-1}(\istyle{a}), h^{-1}(\istyle{b})}$.
		Therefore, the cost of ABox violations in $\Imc$ is bounded by $\sum_{\alpha \in \Vmc} \nu(\alpha)$.
		
		The case of concept inclusions is rather straightforward with Lemma~\ref{lemma:types-are-again-as-expected} at hand.
		If a violation of $\cstyle{C} \sqsubseteq \cstyle{D}$ occurs at $d \in \Delta^\Imc$, then we distinguish two cases.
		If $d \in h(\Delta^\Mmc)$, then Lemma~\ref{lemma:types-are-again-as-expected} guarantees that $h^{-1}(d)$ also violates $\cstyle{C} \sqsubseteq \cstyle{D}$ in $\Mmc$, thus the cost of this violation is accounted for in the TBox violations of $\Mmc$.
		If $d \notin h(\Delta^\Mmc)$, then Lemma~\ref{lemma:types-are-again-as-expected} yields $\type_\Imc(d) = \rho(\type_\abox(d))$.
		The latter, using Point~3 from the definition of safe types, does not violate any CIs, which contradicts $d$ violating $\cstyle{C} \sqsubseteq \cstyle{D}$.
		
		For a role inclusion $\rstyle{r} \sqsubseteq \rstyle{s}$, we trivially have $\rstyle{r} \sqsubseteq_\tbox \rstyle{s}$.
		Therefore, in the definition of $\rstyle{r}^\Imc$, it is clear that only the case of $h(\rstyle{r}^\Mmc)$ may be violated (that is, not-matched with a pair from $h(\rstyle{s}^\Mmc)$), as the remaining cases are immediately treated by $\sqsubseteq_\tbox$ being transitive.
		However, every violation of a role inclusion in $h(\Mmc)$ is a violation of the same role inclusion in $\Mmc$, thus its cost is already accounted for in the TBox violations of $\Mmc$.
		
		Overall, we obtained that $(\omega_\tbox \cup \omega_{\!\abox})(\Imc) \leq \omega_\tbox(\Mmc) + \sum_{\alpha \in \Vmc} \nu(\alpha)$.
		Recalling the definition of a strategy, we obtain $(\omega_\tbox \cup \omega_{\!\abox})(\Imc) \leq k$.
		It remains to verify that $\Imc$ behaves as expected w.r.t.\ the query $q$.
		\begin{itemize}
			\item
			If working with $q$ a CQ under the possible semantics, then the definition of p-strategies guarantees that $\Mmc \models q$.
			Therefore, from $h(\Mmc) \subseteq \Imc$ and $\individuals(q) \subseteq \Gamma \subseteq \Delta^\Mmc$, we obtain immediately that $\Imc \models q$ as desired.
			\item
			If working with $q$ an IQ under the certain semantics, then the definition of c-strategies guarantees that $\individuals(q) \subseteq \Gamma$ and $\Mmc \not\models q$.
			Recall that we restricted w.l.o.g.\ to the case of $q$ being a concept IQ, we thus distinguish between two possible cases.
			If $q = \cstyle{A}(\istyle{a})$, then Lemma~\ref{lemma:types-are-again-as-expected} joint with the above directly concludes.
			If $q = \exists y\ \cstyle{A}(y)$, then we use Lemma~\ref{lemma:types-are-again-as-expected} to obtain $\cstyle{A}^\Imc \cap {h(\Delta^\Mmc)} = \emptyset$, and use Point~6 in the definition of a c-safe type to also ensure $\cstyle{A}^\Imc \cap (\Delta^\Mmc \setminus h(\Delta^\Mmc)) = \emptyset$.
		\end{itemize}

		\noindent$(\Rightarrow)$. 
		Assume that there exists an interpretation $\Imc$ of $(\tbox, \abox)_{\omega_\tbox \cup \omega_{\!\abox}}$ whose cost is $\leq k$ and that either satisfies the CQ $q$ (for the possible semantics), or does {not} satisfy the IQ $q$ (for 
		the certain semantics).
		We apply Lemma~\ref{lemma:main-lemma-for-dllite} and denote $\Jmc$ the obtained interpretation whose domain is $\individuals(\abox) \cup W$ for some set of $1$-types $W$.
		Let $\Mmc := \Jmc\vert_{\core(\abox) \cup W}$, $\Gamma := \core(\abox)$, $\Vmc := \{ \alpha \in \abox \mid \Jmc \not\models \alpha \}$ and $\nu : \alpha \mapsto \omega_{\!\abox}(\alpha)$ the cost function over $\Vmc$.
		Denote $\sigma := (\Mmc, \Gamma, \Vmc_\nu)$ the corresponding strategy.
		It is direct to verify that $\sigma$ is indeed a p-strategy, resp.\ a c-strategy, depending on the considered case.
		We claim that $\abox^\omega \models q_\sigma$, and thus $\abox^\omega \models q'$.
		It is not hard to verify it, by considering the following mapping $h$ that defines how to assign the existentially quantified variables $\tuplev$ of $q_\sigma$:
		\[
		\begin{array}{rcl}
			h : V_\Gamma & \rightarrow & \Gamma
			\\
			v_d & \mapsto & d.
		\end{array}
		\]
		We highlight two key ingredients to verify the above claim:
		\begin{itemize}
			\item 
			To check that the part of $q_\sigma$ interested in safe types is satisfied, one needs to exhibit fitting 1-types $t'$ for an ABox type $t$ found on some individual $\istyle{a}$ that is not reached by the variables $\tuplev$.
			This is achieved by setting $t' := \type_\Imc(p_{\type_\abox(\istyle{a}})$.
			From Point~3 of Lemma~\ref{lemma:main-lemma-for-dllite}, guaranteeing that non-core elements are not involved in any violations, it is clear that $t'$ satisfies Points~1-4 from the definition of safe types.
			To check that $t'$ satisfies Point~5 follows from Lemma~\ref{lemma:types-are-as-expected} and the construction of $\Jmc$.
			Points~6 and 7 (in the case of c-safe types) both follow from Point~5$_c$ of Lemma~\ref{lemma:main-lemma-for-dllite}.
			\item
			Using Lemma~\ref{lemma:types-are-as-expected}, one can verify that such $t'$ can always be found already in $\Jmc$ .
			To check that each subquery $q_{\sigma, \rstyle{r}, d}$ is indeed satisfied, we rely again on Point~3 from Lemma~\ref{lemma:main-lemma-for-dllite} that guarantees non-core elements are not involved in any violations.\qedhere
		\end{itemize}
	\end{proof}

	\section{Conclusion}
	Our results significantly improve our understanding of the data complexity 
	of query entailment under recently introduced cost-based semantics. 
	In particular, we have proved a $\deltaptwo$ upper bound for the optimal-cost certain and possible semantics, yielding tight 
	complexity bounds for a wide range of lightweight and expressive DLs, up to $\mathcal{ALCHIO}$ and covering also prominent DL-Lite dialects. 
	Moreover, we obtained surprising tractability results, showing 
	that fixed-cost possible semantics (for CQs)
	and certain semantics (for IQs) in $\dlliteboolh$ enjoy the same low $\tczero$ complexity as classical CQ answering in DL-Lite. 
	We expect our upper bounds can be adapted to also handle negative role inclusions (to cover also $\dlliter$).
	For DLs with functionality or number restrictions, it does not suffice to work with finite interpretations, so wholly different methods are required. 
	Developing a practical implementation of the FO-rewritings for the identified tractable cases is another interesting direction.

	\section*{Acknowledgments}
	The authors acknowledge the financial support of the ANR AI Chair INTENDED (ANR-19-CHIA-0014)
and the Federal Ministry of Research, Technology and Space of Germany and by Sächsische Staatsministerium für Wissenschaft, Kultur und Tourismus in the programme Center of Excellence for AI-research ``Center for Scalable Data Analytics and Artificial Intelligence Dresden/Leipzig'', project identification number: ScaDS.AI.

	\bibliographystyle{aaai2026}
	\bibliography{references}
	

\end{document}